\providecommand{\tabularnewline}{\\}
\providecommand{\algorithmname}{Algorithm}
\theoremstyle{definition}
\newtheorem{definition}{Definition}[section]\newtheorem{proposition}{Proposition}[section]
\newtheorem{example}{Example}[section]
\newtheorem{remark}{Remark}[section]
\newtheorem{lemma}{Lemma}[section]
\numberwithin{equation}{section}
\numberwithin{figure}{section}
\numberwithin{table}{section}
\newcommand{\edg}{\mathcal{E}}
\newcommand{\vpa}{vpa}
\newcommand{\spgm}{\mathbb{S}}
\newcommand{\enctr}{\mathcal{T}}
\newcommand{\val}{\Delta}
\def\checkmark{\tikz\fill[scale=0.4](0,.35) -- (.25,0) -- (1,.7) -- (.25,.15) -- cycle;}
\newcommand{\tableyes}{\checkmark}
\algnewcommand\algorithmicinput{\textbf{Input:}}
\algnewcommand \Input{\item[\algorithmicinput]}
\algnewcommand\algorithmicoutput{\textbf{Output:}}
\algnewcommand \Output {\item[\algorithmicoutput]}
\algnewcommand\And{\textbf{and }}
\algnewcommand\Or{\textbf{or }}
\title{
{ \textsc {\LARGE Sum-Product Graphical Models}\footnote{Support of the German Science Foundation, grant GRK 1653, is gratefully acknowledged.}}
}
\author{
Mattia Desana${}^1$\thanks{Corresponding author. Email: mattia.desana@iwr.uni-heidelberg.de. }
$\,$ and Christoph Schn\"orr${}^{1,2}$.
}
\date{${}^1$Heidelberg Collaboratory for Image Processing (HCI)\\
${}^2$Image and Pattern Analysis Group (IPA)\\
Heidelberg University, Germany}
\begin{document}
\maketitle
\begin{abstract}
This paper introduces a new probabilistic architecture called Sum-Product
Graphical Model (SPGM). SPGMs combine traits from Sum-Product Networks (SPNs) 
and Graphical Models (GMs): Like SPNs, SPGMs always enable tractable inference using a class of models that incorporate context specific independence. 
Like GMs, SPGMs provide a high-level model interpretation in terms of conditional independence assumptions and corresponding factorizations. Thus, the new architecture represents a class of probability distributions that combines, for the first time, the semantics of graphical models with the evaluation efficiency
of SPNs. 
We also propose a novel algorithm for learning both the structure and the parameters of SPGMs. A comparative empirical evaluation demonstrates competitive performances of our approach in density estimation. 

\end{abstract}


\section{Introduction}
The compromise between model expressiveness and tractability of model evaluation (inference) is a key issue of scientific computing. 
Regarding probabilistic \textit{Graphical Models (GMs)}, tractable inference is guaranteed for acyclic graphical models and GMs on cyclic graphs with small treewidth \citep{Wainwright:2008}, i.e.~on graphs that after triangultradeation admit a tree-decomposition which induces only maximal cliques of small size \citep{Diestel-06}. On the other hand, except for a subset of discrete graphical models (see, e.g., \cite{EnergyGraphCuts-PAMI04}) where inference can be reformulated as a maximum flow problem, inference with cyclic graphical models generally suffers from a complexity that exponentially grows with the treewidth of the underlying graph, so that approximate inference is the only viable choice. 

trade
Recently, \emph{Sum-Product Networks (SPNs)} 
\citep{SPN2011} and closely related architectures including Arithmetic Circuits and And-Or Graphs \citep{ArithmCircuitsAndNetworkPoly2,AndOrDechter200773} have received attention in the probabilistic machine learning community, mainly due to two attractive properties:
\begin{enumerate}
\item
These architectures allow to cope with probability distributions that are more complex than tractable graphical models as characterized above. A major reason is that SPNs enable an efficient representation of \emph{contextual independency}: independency between variables that only holds in connection with some assignment of a subset of variables in the model, called ``context''. Exploiting contextual independency allows to drastically reduce the cost of inference, whenever the modelled distribution justfies this assumption. By contrast, as discussed by \citet{Boutilier96context-specificindependence}, GMs cannot represent contextual independence compactly, since the connection between nodes in a GM represent \textit{conditional} independences rather than \textit{contextual} ones. As a result, a significant subset of distributions that would be represented by graphical models with high treewidth (due to the inability to exploit contextual independency) can be represented by SPNs in a tractable way. A detailed example illustrating this key point is provided in Section \ref{subsec:An-Illustrative-Example}.
\item
Secondly, SPNs ensure that the cost of inference is always \emph{linear} in the model size and, therefore, inference is always tractable. This aspect greatly simplifies approaches to learning the structure of such models, the complexity of which essentially depends on the complexity of inference as a subroutine.
In recent work, it has been shown empirically that 
structure learning approaches for  SPNs produce state of the art results
in density estimation (see e.g. \citet{SPNstructureLearning2013},
\citet{Rooshenas14}, \citet{Rahman16}, \citet{Rahman16mergeSPN}), suggesting that performing exact inference with tractable
models might be a better approach than approximate
inference using more complex but intractable models. 
\end{enumerate}

On the other hand, the ability of SPNs to represent efficiently contextual independency is due to a \textit{low-level} representation of the underlying distribution. This representation comprises a Directed Acyclic Graph with sums and products as internal nodes, and with indicator variables associated to each state of each variable in the model, that become active
when a variable is in a certain state (Fig.~\ref{fig:Representation-properties-of}(b)). 

Thus, SPN graphs directly represent the 
flow of operations performed during the inference procedure, which is much harder to read and interpret than a factorized graphical model due to conditional independence. 
In particular, the factorization associated to the graphical model is lost after translating the model into a SPN, and can only be retrieved (when possible) through a complex hidden variable restoration procedure \citep{peharz15}. As a consequence of these incompatibilities, research on SPNs has largely evolved without much overlap with work on GMs.

The focus of this paper is to exploit jointly the favourable properties of GMs and SPNs.
This has also been the objective of several related papers, which aimed at either endowing GMs with contextual independency
or at extracting probabilistic semantics from SPNs. These prior works are discussed in detail in Section \ref{sec:Related-Architectures} along with Table  \ref{table:properties}. Rather than extending an existing model, however, we 
introduce a new archiifundefinedtecture which directly inherits the complementary favourable properties of both representations: 
conditional and contextual independency, together with tractable inference. We call this new architecture \textit{Sum-Product Graphical Model (SPGM)}. 


In addition, we devise a novel algorithm for learning both the structure and the model parameters of SPGMs which exploits the \textit{connection} between GMs and SPNs. A comparative empirical evaluation demonstrates competitive performance of our approach in density estimation: we obtain results close to state of the art models, despite using a radically different approach from the established body of work and comparing against methods that rely on years of intensive 
research. These results demonstrate that SPGMs cover an attractive subclass of probabilistic models that can be efficiently evaluated and that are amenable to model parameter learning.



\subsection{Tradeoff Between High-Level Representation and Efficient Inference: An Example} \label{subsec:An-Illustrative-Example}

\begin{figure}[tb]
\quad{}\quad{}\quad{}\quad{}\quad{}\quad{}\quad{}\quad{}\quad{}\quad{}\subfloat[\label{fig:gm1}]{\includegraphics[width=0.1\textwidth]{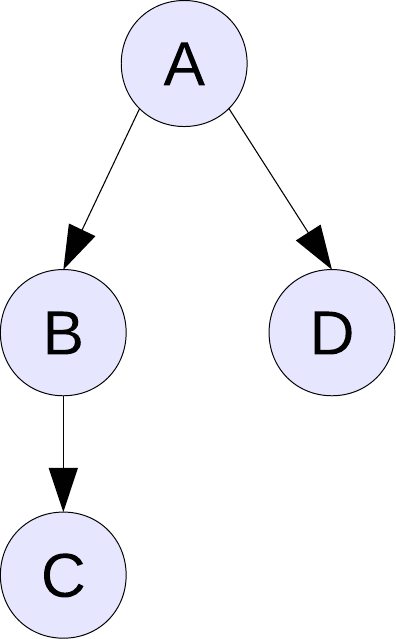} 

}\hfill{}\subfloat[\label{fig:spn1}]{\includegraphics[width=0.25\textwidth]{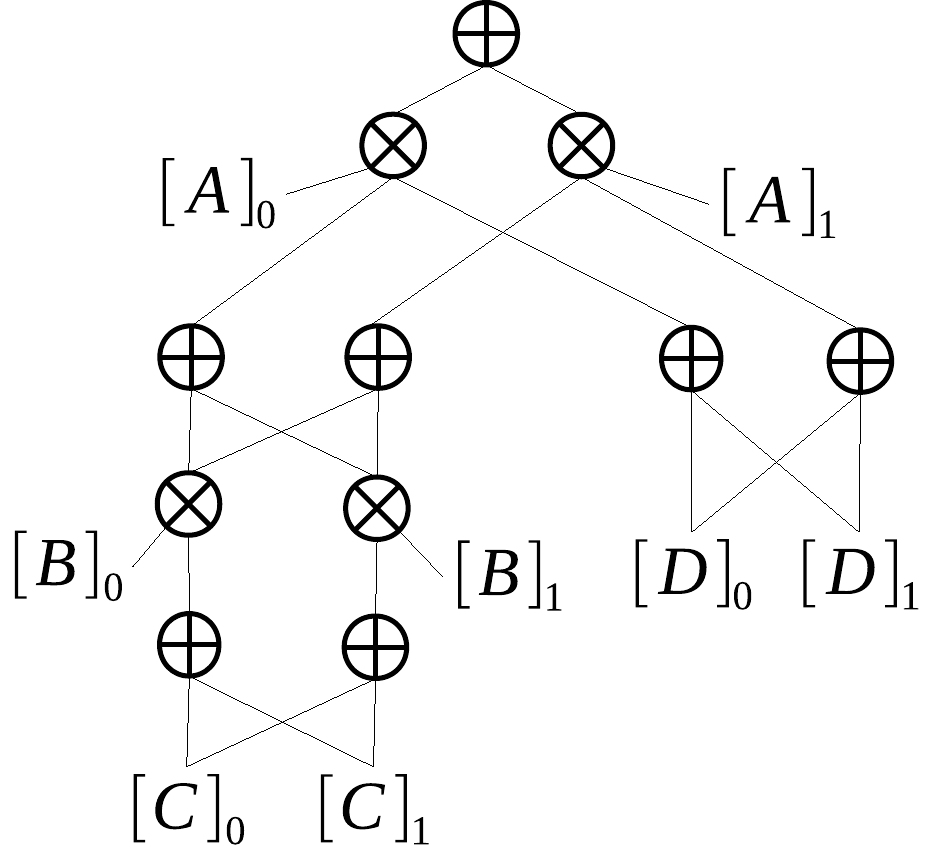} 

}\quad{}\quad{}\quad{}\quad{}\quad{}\quad{}\quad{}\quad{}\quad{}\quad{}\caption{Representation properties of Graphical Models (GMs) and Sum-Product
Networks (SPNs). The same distribution specified by  Eq.~\eqref{eq:ex1} is 
represented by a GM in panel (a) and by a SPN in panel (b). This illustrates that GMs represent conditional independence
more compactly than SPNs. \label{fig:Representation-properties-of}}
\end{figure}


We consider a distribution of discrete random variables $A,B,C,D$ in the following form (shown in Fig.~\ref{fig:gm1} as a directed Graphical Model (GM)):
\begin{equation}
P(A,B,C,D)=P(A)P(B|A)P(C|B)P(D|A)\label{eq:ex1}
\end{equation}
Uppercase letters $A, B, C, D$ denote random variables and corresponding lowercase
letters $a, b, c, d$ values in their domains $\val(A), \val(B), \val(C), \val(D)$.
We write
$\sum_{a,b,c,d}$ for the sum over the joint domain $\val(A) \times \val(B) \times \val(C) \times \val(D)$.
Using this notation, the distribution $P(A,B,C,D)$ can be written
in the form of a \textit{network polynomial} (\cite{Darwiche2003}) as:
\begin{equation}
P(A,B,C,D)=\sum_{a,b,c,d}P(a,b,c,d)[A]_{a}[B]_{b}[C]_{c}[D]_{d}\label{eq:ex2}
\end{equation}
Here $P(a,b,c,d)$ denotes the \emph{value} of $P$ for assignment
$A=a,B=b,C=c,D=d$, and $[A]_{a},[B]_{b},[C]_{c},[D]_{d} \in \{0,1\}$ denote \emph{indicator variables}.
For instance, to compute
the partition function all indicator variables of \eqref{eq:ex2} are set to $1$, and to compute
the marginal probability $P(A=1)$ one sets $[A]_{1}=1,[A]_{0}=0$
and all the remaining indicators to $1$. 

The next step is to exploit the factorization of $P$ on the right-hand side of \eqref{eq:ex1} in order to rearrange the sum of \eqref{eq:ex2} more economically in terms of messages 
$\mu$, which results in the sum-product message passing formulation equivalent to \eqref{eq:ex2},
\begin{subequations}\label{eq:subeq1}
\begin{align}
P(A,B,C,D) & =\sum_{a\in\val(A)}P(a)[A]_{a}\mu_{b,a}(a)\mu_{d,a}(a), &
\mu_{b,a}(A) & =\sum_{b\in\val(B)}P(b|A)[B]_{b} \mu_{c,b}(b), \\
\mu_{c,b}(B) &=\sum_{c\in\val(C)}P(c|B)[C]_{c}, &
\mu_{d,a}(A) &=\sum_{d\in\val(D)}P(d|A)[D]_{d}.
\end{align}
\end{subequations}
As discussed above, the distribution can be represented in two forms: The first one is a directed graphical model conforming to Eq.~\eqref{eq:ex1}, shown in Fig.~\ref{fig:gm1}. The second one is a sum-product network (SPN) shown by Fig.~\ref{fig:spn1}, which directly represents
the computations expressed by Eqns.~\eqref{eq:subeq1}, with the coefficients
$P(\cdot|\cdot)$ omitted in Fig.~\ref{fig:spn1} for better visibility.
It is evident that the SPN does not clearly display the high level semantics due to conditional
independence of the graphical model. On the other hand, the SPN makes explicit the computational structure for efficient inference and encodes more compactly than GMs a class of relevant situations described next.

\paragraph{Introducing Contextual Independency. }

\begin{figure}[tb]
\subfloat[\label{fig:spnVsGm-1}]{\includegraphics[width=0.1\textwidth]{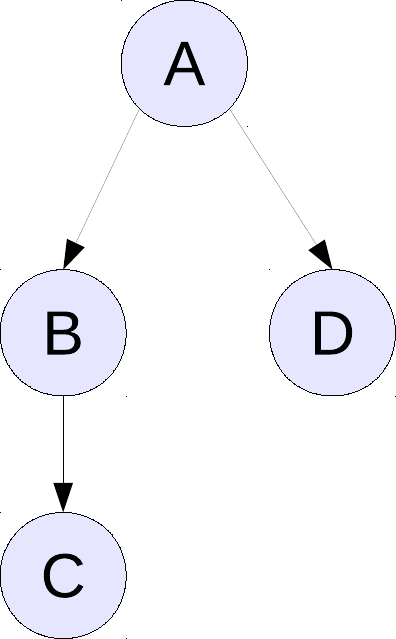} }\hfill{}\subfloat[\label{fig:spnVsGm-2}]{\includegraphics[width=0.15\textwidth]{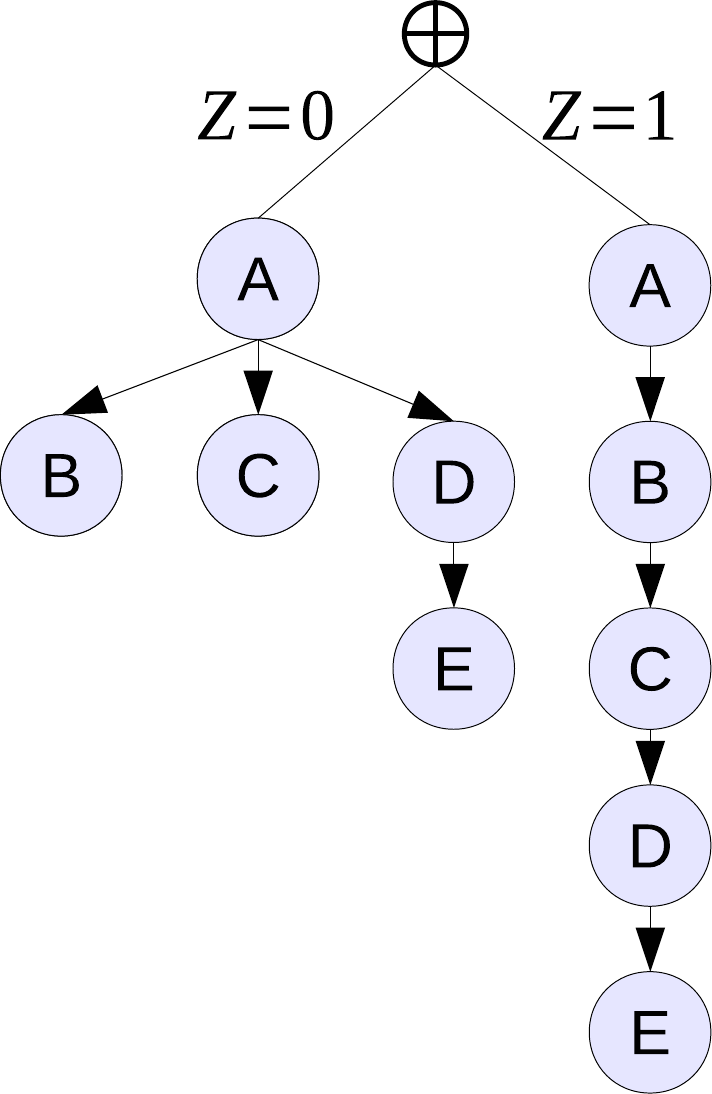} 

}\hfill{}\subfloat[\label{fig:spnVsGm-3}]{\includegraphics[width=0.35\textwidth]{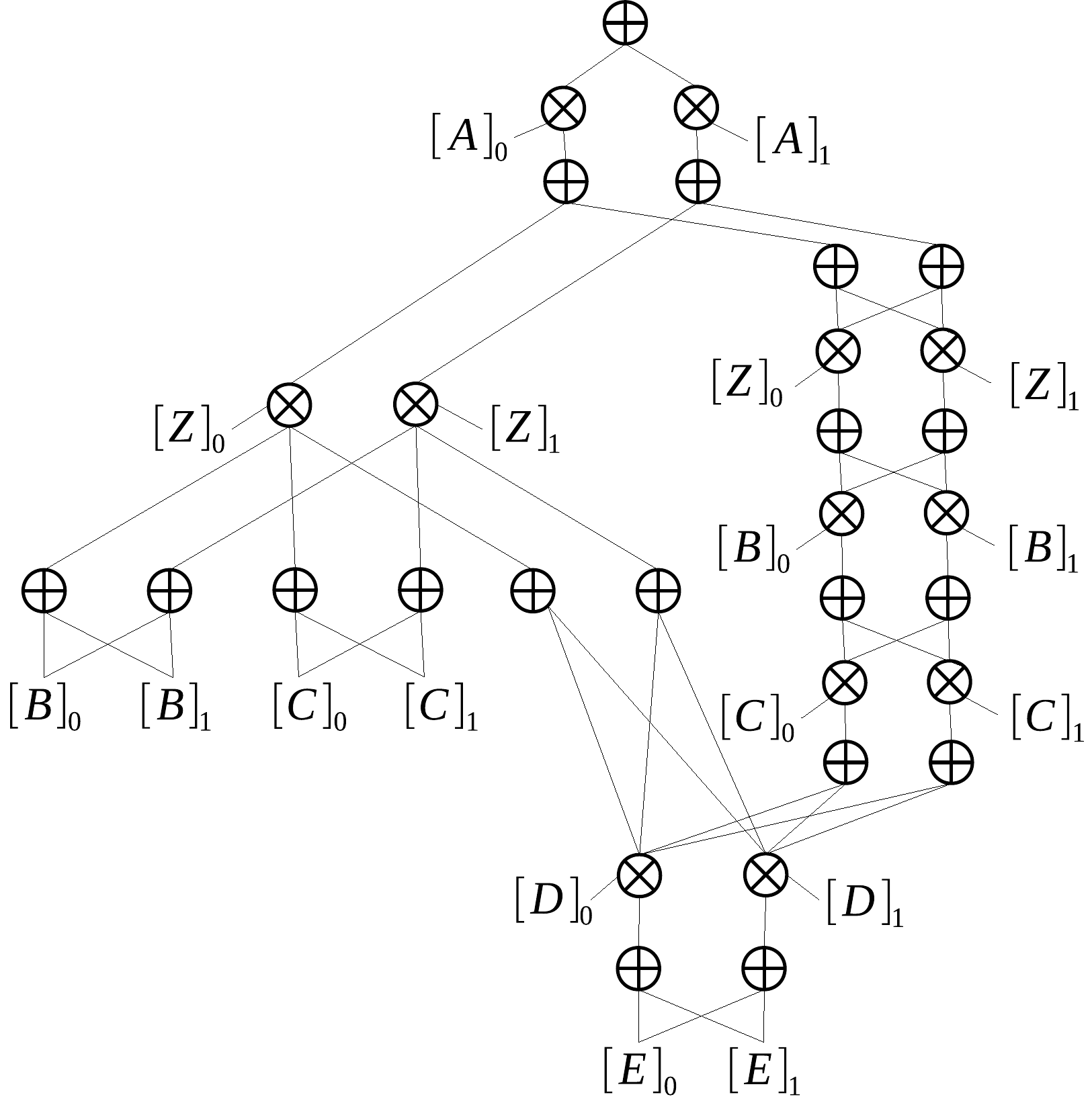} 

}\hfill{}\subfloat[\label{fig:spnVsGm-4}]{\includegraphics[width=0.2\textwidth]{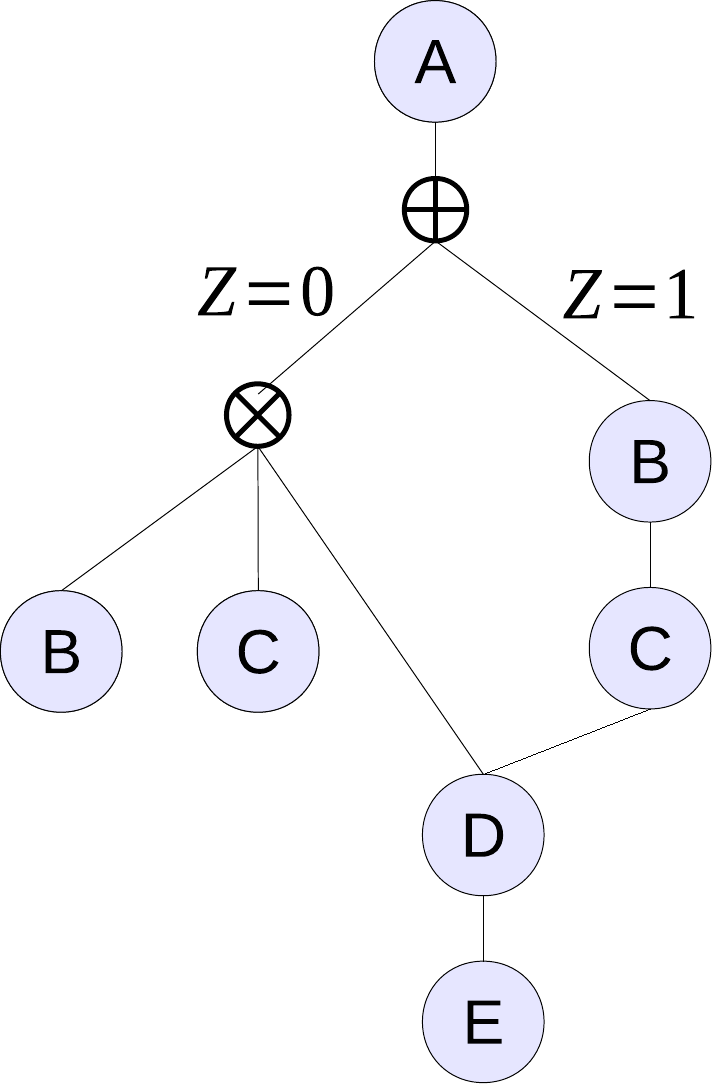} 

}\caption{The distribution in Eq. \ref{eq:ex2-1} represented (from left to
right) as Graphical Model (GM), as mixture of GMs, as Sum-Product
Network (SPN) and as Sum-Product Graphical Model (SPGM). }
\end{figure}

We consider a distribution in the form
\begin{subequations}\label{eq:ex2-1}
\begin{align}
P(A,B,C,D,E,Z)&=P(Z)P(A)P(B,C,D|A,Z)P(E|D)
\label{eq:ex2-1-a}
\intertext{with}
\label{eq:ex2-1-b}
P(B,C,D|A,Z)&=\begin{cases}
P(Z=0)P(B|A)P(C|A)P(D|A) & \text{if }Z=0, \\
P(Z=1)P(B|A)P(C|B)P(D|C) & \text{if }Z=1.
\end{cases}
\end{align}
\end{subequations}
Notice that different independency relations hold depending
on the value taken by $Z$: if $Z=0$, then $B,C$ and $D$ are conditionally
independent given $A$, whereas if $Z=1$, then they form a chain. We therefore say 
that this distribution exhibits \emph{context specific independence}
with \textit{context variable} $Z$. 

As in the example before, this distribution can be represented in different ways. Firstly, 
choosing a graphical model (Fig. \ref{fig:spnVsGm-1})
 requires to model $P(B,C,D|A,Z)$ as a single factor over $5$ variables,
since GMs cannot directly represent the \emph{if} condition of \eqref{eq:ex2-1-b}.\footnote{
A workaround involves factors with a complex structure, similar to SPNs, as done for instance in \citep{CaseFactorDiagrams}. Although this approach would be simple enough in the present example, it generally leads to a representation with the 
disadvantages of SPNs. 
See Section \ref{sec:Related-Architectures} for further discussion.}

Secondly, we may represent the distribution as a \textit{mixture}
of two tree-structured GMs (Fig.~\ref{fig:spnVsGm-2}),
\begin{subequations}
\begin{align}
P(B,C,D|A,Z) =\;
&P(Z=0)P(A)P(B|A)P(C|A)P(D|A)P(E|D)\\
+ &P(Z=1)P(A)P(B|A)P(C|B)P(D|C)P(E|D).
\end{align}
\end{subequations}
However, since some factors, here $P(A)$ and $P(E|B)$, appear in \textit{both} mixture components, this representation generally looses compactness, and computations for inference are unneccessarily \textit{replicated}.

Finally, we may also represent the distribution as  SPN (Fig.~\ref{fig:spnVsGm-3}) following the procedure outlined in the previous example. This represention allows to make explicit the
\emph{if} condition due to contextual independence and 
to share common parts in the two models components.
On the other hand, as in the example above, 
the probabilistic
relations which are easily readable in the other models, are hidden.
Furthermore, the SPN representation is considerably more convoluted than the
alternatives, and every state of every variable is explicitly represented.

\subsection{Sum-Product Graphical Models}\label{sec:Intro-SPGMs}
The previous section showed that SPNs conveniently represent context specific independence and algorithm structures for inference, whereas GMs directly display conditional independency through factorization. Several attempts were made in the literature to close this gap. We discuss related work in Section \ref{sec:Related-Architectures}.

Our approach to this problem is to introduce a new representation, called \textit{Sum-Product Graphical Model (SPGM)}, that
directly \emph{inherits} the favourable traits from both GMs and SPNs.
SPGMs can be seen as an extension
of SPN that, along with product and sum nodes as internal nodes,
also comprises \emph{variable nodes} which have the same role as
usual nodes in graphical models. Alternatively, SPGMs can be seen 
as an extension of directed GMs by adding sum and product nodes as
internal nodes. 
The SPGM representing the distribution \eqref{eq:ex2-1} is shown by Fig.~\ref{fig:spnVsGm-4}. It clearly reveals both 
the mixture of the two tree-structured subgraphs and the shared components.
Thus, SPGMs exhibit \textit{both} the expressiveness of SPNs and the high level semantics of GMs.

More generally, every SPGM implements a mixture of trees with shared subparts,
as in the above example:\footnote{An extension to mixtures of \textit{junction trees} \citep{Cowell-et-al-03} is straightforward but does not essentially contribute to the present discussion and hence is omitted.} \textit{Context variables} attached to sum nodes implement \emph{context
specific independence} (like $Z$ in Fig. \ref{fig:spnVsGm-4}) and select trees as model components to be combined. \emph{Conditional
independence} between variables, on the other hand,  can be read off from the graph due to D-separation \citep{Cowell-et-al-03}. 
SPGMs enable to represent in this way \emph{very large} mixtures, whose size grows exponentially
with the model size and are thus intractable if represented as a standard
mixture model. On the other hand, inference in SPGM has a worst
case complexity that merely is \emph{quadratic} in the SPGM size and effectively is quasi-linear 
in most practical cases.\footnote{More precisely, the complexity is $O(N M)$, where $N$ is the number of nodes and $M$ is the maximal number of parent nodes, of any node in the model.}

In addition, SPGMs generally provide an equivalent but more compact and high level representation of SPNs, with the additional property that the role of variables with respect to both contextual and conditional independency remains explicit. 
A compilation
procedure through message passing allows to convert  the SPGM 
(Fig. \ref{fig:spnVsGm-4}) into the equivalent SPN (Fig. \ref{fig:spnVsGm-3}) which directly supports computational inference.

\subsection{Structure Learning}

\begin{figure}
\hfill{}\subfloat[\label{fig:learnEx-1}]{\includegraphics[height=5cm]{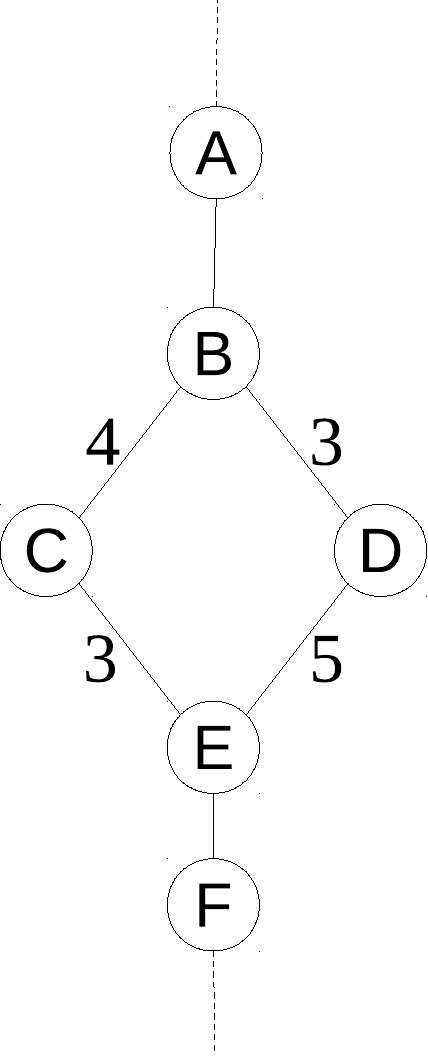} 

}\hfill{}\subfloat[\label{fig:learnEx-2}]{\includegraphics[height=5cm]{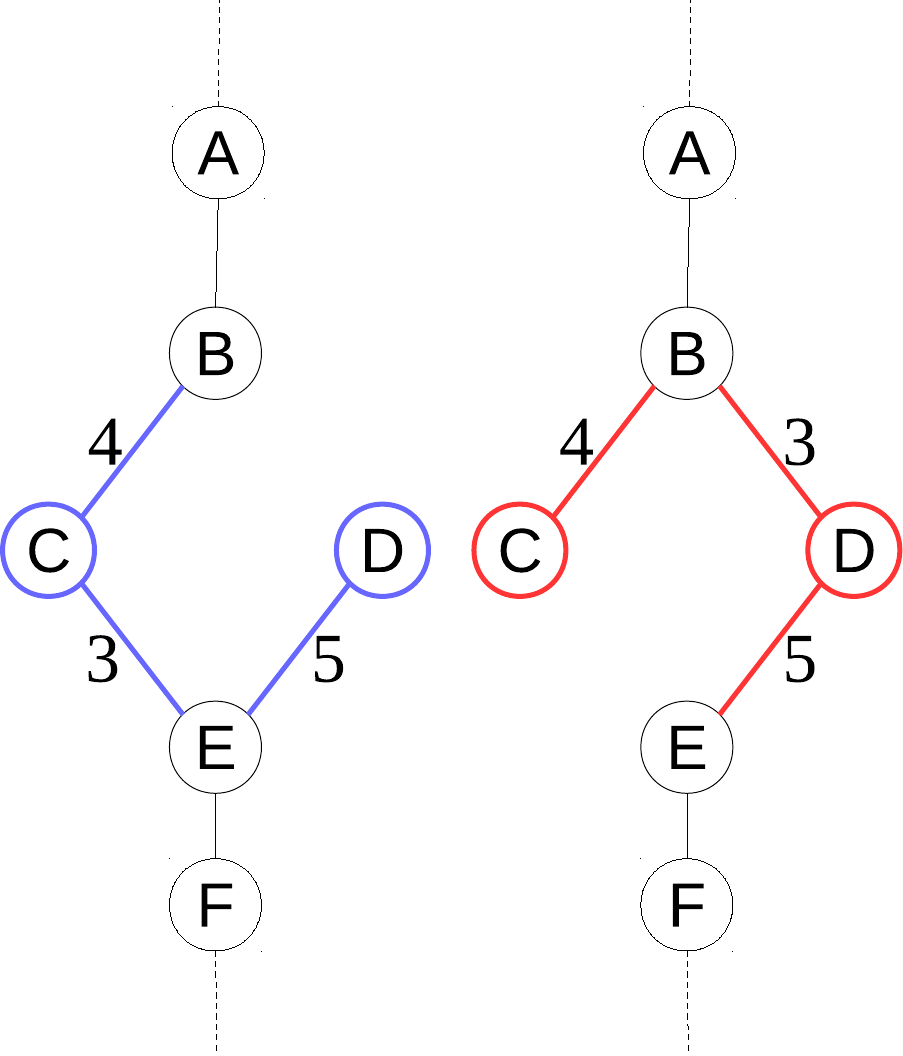} 

}\hfill{}\subfloat[\label{fig:learnEx-3}]{\includegraphics[height=5cm]{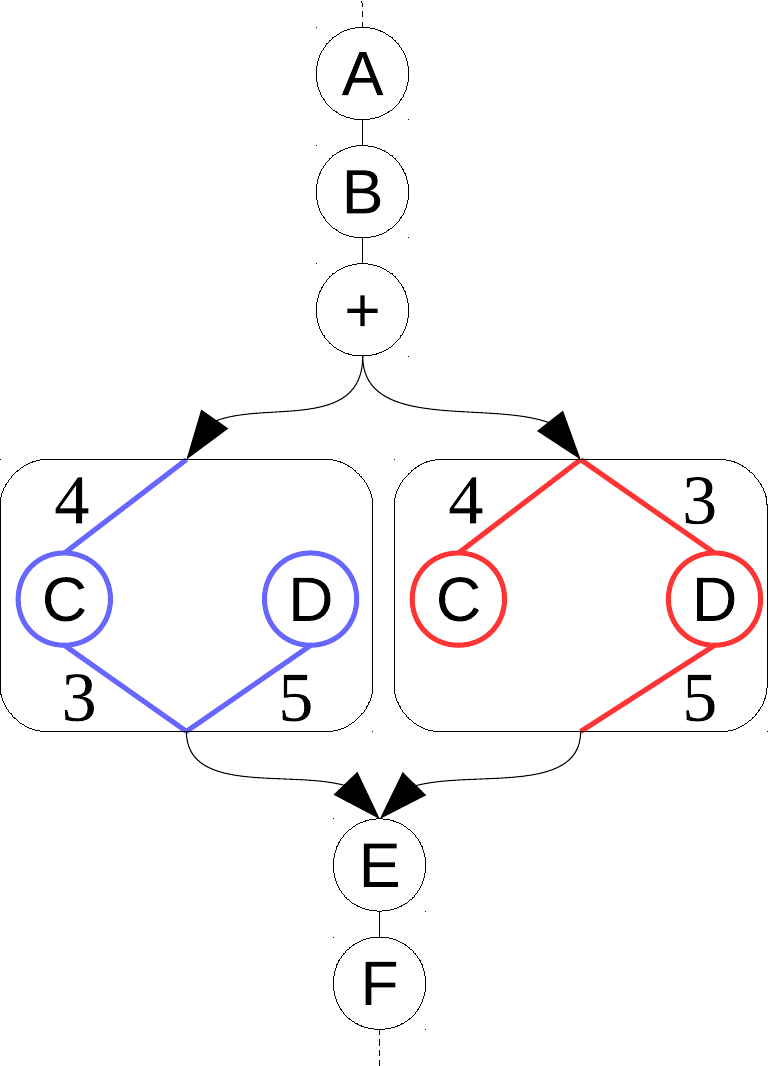} 

}\hfill{}

\caption{Sketch of the structure learning algorithm proposed
in this paper. (a) A weighted subgraph on which we compute the maximal spanning trees. (b)
Two maximal spanning trees of equal weight to included as mixture components into the SPGM. They differ only in a single edge. (c)
The mixture of the two trees represented by sharing all common
parts. 
}
\label{fig:mixMaxSteps12}
\end{figure}

Learning the structure of probabilistic models obviously is easier for models with tractable inference than for intractable ones, because any model parameter learning algorithm requires inference as a subroutine. 
For this reason, tractable probabilistic models and especially SPNs 
have been widely applied for density estimation  
\citep{SPNstructureLearning2013, Rooshenas14,Rahman16mergeSPN,Rahman16}. 
It it therefore highly relevant to provide and evaluate a structure learning
algorithm for SPGMs, that enable tractable inference as well. 

We introduce an algorithm that starts with fitting a single tree in the classical way \citep{ApproxDistrWithTrees} and iteratively
insert sub-optimal trees that have large weights (in terms of the mutual information of adjacent random variables) and share as many edges as possible with existing tree components. Each insertion is guaranteed not to decrease the global log-likelihood. As a result, all informative edges can be included into the model without compromising computational efficiency, because all shared components are evaluated only once. The former property is not true if a single tree is only fitted \citep{ApproxDistrWithTrees} whereas working directly with large tree mixtures \citep{LearningWithMixturesOfTrees} may easily lead to a substantial fraction of redundant computations.

Our approach is different from previous methods for learning the structure of SPNs, which mostly implement recursive partitioning of the variables into (approximately) independent clusters, to be represented by sum and product nodes \citep{SPNstructureLearning2013}. Clearly, our greedy method for learning the model structure and parameters based on \citep{LearningWithMixturesOfTrees} is only locally optimal as well, and focuses on the aforementioned statistical aspects that can be conveniently encoded by SPGMs. The results of a comprehensive experimental evaluation will be reported in Section \ref{sec:Empirical-Evaluation}.

\subsection{Related Work\label{sec:Related-Architectures}}

\begin{table}
\caption{Comparison of architectural properties discussed in Section \ref{sec:Related-Architectures}
for SPGMs and related architectures. We consider the following properties:
guaranteed tractable inference (TractInf); same
inference efficiency as SPNs (AsSPN); using exponential family factors with
no limitation on generality (ExpFam); high level representation of
conditional independence (CondInd); representation as a product of factors
as in graphical models (FactProd). \label{table:properties}}

\centering\vspace{5pt}
\resizebox{13 cm}{!}{%
\begin{tabular}{|l|l|l|l|l|l|}
\hline 
Model  & TractInf  & AsSPN  & ExpFam  & CondInd  & FactProd\tabularnewline
\hline 
\textbf{SPGM}  & \tableyes  & \tableyes  & \tableyes  & \tableyes  & \tableyes\tabularnewline
Graphical Models  &  &  & \tableyes  & \tableyes  & \tableyes\tabularnewline
Mixtures of Trees \citep{LearningWithMixturesOfTrees}  & \tableyes  &  & \tableyes  & \tableyes  & \tableyes\tabularnewline
Hierarchical MT \citep{Jordan94hierarchicalmixtures}  & \tableyes  &  & \tableyes  & \tableyes  & \tableyes\tabularnewline
Mix.Markov Model \citep{MixedMarkovModels}  &  &  & \tableyes  & \tableyes  & \tableyes\tabularnewline
Gates \citep{GatesMinka2009}  &  & \tableyes  & \tableyes  & \tableyes  & \tableyes\tabularnewline
\citep{Boutilier96context-specificindependence}  &  &  & \tableyes  & \tableyes  & \tableyes\tabularnewline
Case-Factor diagrams \citep{CaseFactorDiagrams}  & \tableyes  & \tableyes  &  &  & \tableyes\tabularnewline
BayesNets local Struct \citep{ABayesApprToLearnBayesNetsWithLocStr}  & \tableyes  & \tableyes  &  &  & \tableyes\tabularnewline
Learn.Efficient MarkovNets \citep{LearningEfficientMarkovNetworks}  & \tableyes  & \tableyes  &  &  & \tableyes\tabularnewline
\citep{PooleZhang2011}  & \tableyes  & \tableyes  &  &  & \tableyes\tabularnewline
Value Elimination \citep{ValueElimin}  & \tableyes  & \tableyes  &  &  & \tableyes\tabularnewline
SPN as Bayesian Nets \citep{SPNandBN}  & \tableyes  & \tableyes  &  &  & \tableyes\tabularnewline
And/Or Graphs \citep{AndOrDechter200773}  & \tableyes  & \tableyes  & \tableyes  &  & \tabularnewline
SPN \citep{SPN2011}  & \tableyes  & \tableyes  & \tableyes  &  & \tabularnewline
Arithmetic Circuits \citep{ArithmCircuitsAndNetworkPoly2}  & \tableyes  & \tableyes  & \tableyes  &  & \tabularnewline
CNets \citep{Rahman14Cnet}  & \tableyes  &  & \tableyes  & \tableyes  & \tabularnewline
\hline 
\end{tabular}} 
\end{table}

Table \ref{table:properties} lists and classifies prior work with a similar scope: introducing representations of probability distributions that fill to some extent the gap between GMs and SPNs. The caption of table \ref{table:properties} lists the  properties used to classify related work.

A major aspect of a SPGM is that it encodes a SPN through message passing. This will be made precise formally in Section \ref{sec:asSpn}. As a consequence, SPGMs relate to \textit{Arithmetic Circuits} \citep{ArithmCircuitsAndNetworkPoly2}, which differ from SPNs only in the way connection weights are represented, and to \textit{And/Or Graphs} \citep{AndOrDechter200773}, which are structurally equivalent to Arithmetic Circuits and thus also to SPNs. We refer to \citep[Section 7.6.1]{AndOrDechter200773} for a discussion of details. As discussed in Section \ref{sec:Intro-SPGMs}, SPGMs encode the computational structure for efficient inference like SPNs and related representations, but also preserve explicitly factorization properties of the underlying distribution due to conditional independence.



SPGMs can represent very large mixtures of trees. 
In this sense, SPGMs generalize \emph{Hierarchical
Mixture of Trees} \citep{Jordan94hierarchicalmixtures} by substituting
the OR-tree structure used to generate the trees in these models with
a general directed acyclic graph. 

SPGMs closely relate to \emph{Gates} \citep{GatesMinka2009}
and \emph{Mixed Markov Models} \citep{MixedMarkovModels}. These models
augment graphical models by a so-called gate unit that implements
context specific independence by switching edges on and off depending on the state of some context variables. In this respect, SPGMs may be regarded as Gates - see remark in Section \ref{subsec:Discussion} for corresponding technical aspects. 
However, unlike Gates and Mixed Markov Models, SPGMs guarantee \textit{tractable
exact} inference by construction. 

SPGMs related to several further methods that augment GMs by
factors with complex structure in order to represent context specific independence \citep{Boutilier96context-specificindependence,CaseFactorDiagrams,ABayesApprToLearnBayesNetsWithLocStr,LearningEfficientMarkovNetworks,ValueElimin}.
These approaches enable to represent product of factors like
graphical models. However, the additional model complexity due to contextual independence is simply encapsulated inside
the factors, based on models that are equivalent to SPNs and thus exhibit corresponding limitations (Section \ref{subsec:An-Illustrative-Example}). 
In particular, in connection with distributions that combine both conditional and contextual independence, the approaches have to resort to a low-level SPN-like representation. 
On the other hand, if simpler factors (such as with distributions from the exponential family) were used instead, the model would loose its expressivenes. 

\subsection{Contribution and Organization}

The contributions of this paper are twofold. First, we introduce SPGMs, which connect 
SPNs to Graphical Models in that they possess high level \emph{conditional
independence} semantics akin to Graphical Models. Moreover, they enable \emph{tractable} inference and to represent 
\emph{context specific} dependences and determinism as SPNs. 

The second contribution is a new structure learning approach that exploits
this connection by learning very large mixtures of quasi
optimal Chow-Liu trees with shared subparts. A comparative empirical
analysis show that this algorithm is competitive against state of
the art methods in density estimation. 

To empirically validate structure learning for SPGMs we tested the
learning performances in a density estimation task using $20$ real-life
datasets (Section \ref{sec:Empirical-Evaluation}). We obtain the
best results in $6$ datasets over $20$ comparing against $7$ state
of the art methods, and close performances in the other cases. The
results are particularly interesting given that our approach is novel,
but we compare to methods based on years of layered research and stemming
from a common approach. This novel approach thus opens up several
directions of improvement, inspired by techniques used in compared
algorithms. 


\paragraph{Structure of the Paper }

This paper is organized as follows. Section \ref{sec:Background}
contains notation and background on graphical models and SPNs. Section
\ref{sec:Sum-Product-Graphical-Models} describes SPGMs and discusses
their properties, analyzing the dual interpretation as mixture of
tree graphical models and as SPN. Section \ref{sec:Structure-Learning}
describes the proposed structure learning algorithm for SPGMs. Section
\ref{sec:Empirical-Evaluation} reports an extensive empirical evaluation
of SPGMs in density estimation.


\section{Background\label{sec:Background}}

\subsection{Directed Graphical Models\label{subsec:Directed-Graphical-Models}}

Let $\mathcal{G}=\left(\mathcal{V},\mathcal{E}\right)$ be a Directed Acyclic Graph (DAG) with vertex set $\mathcal{V}=\{1,2,\dotsc,N\}$ and edge set $\mathcal{E}$.
We associate to each vertex $s \in\mathcal{V}$ a discrete random variable $X_{s}$ taking values in the finite domain $\val\left(X_{s}\right)$, and $X = \{X_{s}\}_{s\in \mathcal{V}}$ denotes the set of all variables of the model, taking values in the Cartesian
product $\Delta(X) := \val\left(X_{1}\right)\times\val\left(X_{2}\right)\times\dots\times\val\left(X_{N}\right)$. We formally define the indicator variables already introduced in Section \ref{subsec:An-Illustrative-Example}, Eqs.~\eqref{eq:ex2}, \eqref{eq:subeq1}.
\begin{definition}[Indicator Variables]\label{def:assignmentIndVar}
Let $Y\subseteq X$ be a subset of variables to which the values $y\in\val(Y)$ are assigned: $X_{v} = y_{v},\; \forall X_{v} \in Y$. Based on the assignment $y$, we associated with every variable $X_{s} \in X$ and every value $i \in \Delta(X_{s})$ the \textit{indicator variable} $[X_{s}]_{i} \in \{0,1\}$ defined by
\begin{equation}
[X_{s}]_{i} = \begin{cases}
1 & \text{if $(X_{s} \in Y$ and $y_{s}=i$) or $(X_{s} \not\in Y)$}, \\
0 & \text{otherwise.}
\end{cases}
\end{equation}
\end{definition}
\noindent
We denote by $pa(s)$ the parents of $s$ in $\mathcal{G}$: $pa(s) = \{r \in \mathcal{V} \colon (r,s) \in \mathcal{E}\}$.

\vspace{0.2cm}
A \emph{Directed Graphical Model (Directed GM)} on a graph 
$\mathcal{G}$ 
comprises conditional probabilities $P_{s,t}\left(X_{t}|X_{s}\right)$ 
for every directed edge $\left(s,t\right)\in\mathcal{E}$
and unary probabilities $P_{r}\left(X_{r}\right)$ for each vertex $r\in\mathcal{V}$ with no parent, and encodes the  distribution
\begin{equation}\label{eq:P-directed-GM}
P(X)=\prod_{r\in\mathcal{V} \colon pa(r)=\emptyset}P_{r}\left(X_{r}\right)\prod_{\left(s,t\right)\in\mathcal{E}}P_{s,t}\left(X_{t}|X_{s}\right).
\end{equation}
A \emph{Directed Tree Graphical Model (Tree GM)} is a directed GM where the underlying graph $\mathcal{G}=\mathcal{T}$ is a rooted tree $\mathcal{T}$ with root $r$. Since each vertex $s$
has at most one parent $pa(s)$, the distribution \eqref{eq:P-directed-GM} reads 
\begin{equation}
T(X)=P_{r}\left(X_{r}\right)\prod_{s\in\mathcal{V} \colon pa(s)\ne\emptyset}P_{pa(s),s}\left(X_{s}|X_{pa(s)}\right).\label{eq:treeGm}
\end{equation}
Marginalization and Maximum a Posteriori (MAP) inference in general directed
GMs has a cost that is exponential in the treewidth of the triangulated graph
obtained by moralization of the original graph, and thus is intractable
for graphs with cycles of non trivial size \citep{Cowell-et-al-03,Diestel-06}.
However, in \emph{tree} GMs inference can be computed efficiently
with\emph{ message passing}. Let $Y\subseteq X$ 
be a set of observed variables with assignment $y\in\val(Y)$. 
Let $[X_{s}]_{j},\; s \in \mathcal{V},\; j \in \Delta(X_{s})$ denote indicator variables due to Definition \ref{def:assignmentIndVar}. Node $t$ sends a message $\mu_{t \rightarrow s ;j}$ to its
parent $s$ for each 
state $j\in\val(X_{s})$ given by
\begin{equation}
\mu_{t \rightarrow s ;j}=\sum_{k\in\val\left(X_{t}\right)}P_{s,t}\left(k|j\right)[X_{t}]_{k}\prod_{(t,q) \in \edg}\mu_{q \rightarrow t;k}.\label{eq:treeMsgPass}
\end{equation}
Setting $C = X \setminus Y$ and $x = (y,c)$, marginal probabilities 
$T(Y=y)=\sum_{c \in \Delta(C)} T(Y=y, C=c)$
can be computed using the distribution \eqref{eq:treeGm} by first setting the indicator variables according
to the assignment $y$ (Definition \ref{def:assignmentIndVar}), then passing messages
for every node in reverse topological order (from leaves to the root), and
finally returning the value of the root message. MAP queries are computed
in the very same way after substituting summmation with the $\max$ operation in Eq.~\eqref{eq:treeMsgPass}.
Since message passing in trees only requires computing one message per each node, the procedure has complexity $O(|\mathcal{V}|\Delta_{max}^{2})$, where $\Delta_{max} = \max\{|\Delta(X_{s})| \colon s \in \mathcal{V}\}$
is the maximum domain size. As a consequence, tree GMs enable \emph{tractable}
inference.

Graphical models conveniently encode
conditional independence properties
of a distribution.
Conditional independence between variables in GMs are induced from
the graph by $D$-separation (see, e.g., \cite{Cowell-et-al-03}). In
the case of tree graphical models, D-separation becomes
particularly simple: if the path between variables $A$ and $B$ contains
$C$, then $A$ is conditionally independent from $B$ given $C$.

It is well-known, however, that another form of independence are not covered well by
the GMs: independence that
only holds in certain contexts, i.e.~depending on the assignment
of values to a specific subset of so-called context variables. 
\begin{definition}[Contextual Independence]\label{defCsi} 
Variables $A$ and $B$ are said to be \textit{contextually
independent} given $Z$ and \textit{context} $z\in\val(Z)$ if $P(A,B|Z=c)=P(A|Z=z)P(B|Z=z)$. 
\end{definition} 
\begin{remark}\label{rem:contextual-independence}
Notice that \textit{conditional} independence is a special case of \textit{contextual} independence in which $P(A,B|Z=c)=P(A|Z=z)P(B|Z=z)$ would hold for \emph{all} $z\in\val(Z)$. By contrast, \textit{contextual} independence assures that this property only holds for a subset of value assignments that constitute the so-called \textit{context}. In particular, different independences can hold for different values $z$ -- see Eq.~\eqref{eq:ex2-1-b} for an illustration. 
We refer to \cite{Boutilier96context-specificindependence} for an in-depth discussion of contextual independence.
\end{remark}
\noindent
Encoding contextual independence mainly motivates the model class of sum-product networks formally introduced in Section \ref{sec:intro-SPN}.

\paragraph{Mixtures of Trees.} 
A \emph{mixture of trees} is a distribution in the form $P(X)=\sum_{k=1}^{K}\lambda_{k} T_k\left(X \right)$
where $\{T_{k}\left(X \right)\}_{k=1}^K$ are directed tree GMs and $\{\lambda_{k}\}_{k=1}^K$ are
real non-negative weights satisfying $\sum_{k=1}^{K}\lambda_{k}=1$.
Inference in mixture models involves taking the weighted sum of the results of inference in each tree. Hence it has $K$ times the cost of inference in a single tree in the mixture. 

A mixture model can also be expressed through a hidden variable $Z$ with $\val(Z)=\{1,2,\dots,K\}$ by  writing: $P(X,Z)=\prod_{k=1}^{K}\left( \lambda_k T_k\left(X \right) \right)^{\delta(Z=k)}$. Then, it holds that  $P(X)=\sum_{z\in \val(Z)}P(X,Z)$, $P(Z=k)=\lambda_k$ and $P(X|Z=k)=\lambda_k T_k\left(X \right) $.
Note that different values of $Z$ entail different independences due to different tree structures. It follows that mixtures of trees represent context-specific independence with context variable $Z$. 
 
However, the family of conditional independences entailed by mixture models is limited to using a \emph{single} context variable $Z$, and to the selection of different models defined on the \emph{entire} set  $X$ for each value of $Z$. In contrast, the model class of sum-product networks  to be introduced in the next section, allows to model contextual independences depending on \emph{multiple} context variables that only affect a \emph{subset} of $X$ -- see, e.g., the example in Section \ref{subsec:An-Illustrative-Example}.

\subsection{Sum-Product Networks\label{subsec:Sum-Product-Networks}}\label{sec:intro-SPN}

\textit{Sum-Product Networks (SPN)} were introduced in \citep{SPN2011}. They are
closely related to Arithmetic Circuits \citep{Darwiche2003}. We adopt the definition of \emph{decomposable} SPNs advocated by \citep{SPNstructureLearning2013}. The expressiveness of these models was shown by \citep{Peharz2015phd} to be equivalent to the expressiveness of non-decomposable SPNs.   

\begin{definition}[Sum-Product Network (SPN)]\label{def:SPN} Let $X=\{X_{1},\dotsc,X_{N}\}$ be a collection of random variables, and let $X=X^{1}\cup X^{2}\cup \dotsb \cup X^{K}$ be a partition of $X$. A \textit{Sum-Product Network (SPN)} $S(X)$
is recursively defined and constructed by the following rule and operations: 
\begin{enumerate}
\item An indicator variable $[X_{s}]_{j}$ is a SPN $S(\{X_{s}\})$.\label{enu:An-indicator-variable}
\item The \textit{product} $\prod_{k=1}^{K}S_{k}(X_{k})$ is a SPN, with the SPNs $\{S_{k}(X_{k})\}_{k=1}^{K}$ as factors.\label{enu:The-product-}
\item The \textit{weighted sum} $\sum_{k=1}^{K}w_{k}S_{k}(X)$ is a SPN, with the SPNs $\{S_{k}(X)\}_{k=1}^{K}$ as summands and non-negative weights $\{w_{k}\}_{k=1}^{K}$.\label{enu:The-weighted-sum} 
\end{enumerate}
\end{definition}
\begin{remark}\label{rem:evaluation-S-y}
Due to Definition \ref{def:assignmentIndVar}, indicator variables that form SPNs require the specification of an assignment $y$ of a subset of variables $Y \subset X$, in order to be well-defined. Such assignments are specified in connection with the \textit{evaluation} of a SPN, denoted by 
\begin{equation}\label{eq:def-Sy}
S(y):=S(X\setminus Y, Y=y).
\end{equation}
\end{remark}
\begin{example}
The SPN shown by Fig.~\ref{fig:spn1} displays the operations due to Definition \ref{def:SPN}. Overall, it represents the operations of the sum-product message passing procedure  \eqref{eq:treeMsgPass}) with respect to the graphical model of Fig.~\ref{fig:gm1}, where the indicator variables introduce  evidence values (measurements, observations). Notice that indicator variables are the only variables in this SPN, and that the probabilities $P_s , P_{s,t}$ define weights attached to the sum nodes. 
\end{example}

SPNs are evaluated in a way similar to message passing in tree graphical models (Eq. \eqref{eq:treeMsgPass}). 
An evaluation $S(y)$ due to \eqref{eq:def-Sy} is computed by first
assigning indicator variables in $S$ according to $y$ by Definition \ref{def:assignmentIndVar}, then evaluating nodes in inverse
topological order (leaves to root) and taking the value of the root node of $S$. MAP queries are computed
in the very same way after substituting sum nodes with $\max$ nodes. 

\cite{SPN2011} shows that evaluating $S(y)$ corresponds to computing marginals in some
valid probability distribution $P(X)$, namely: $S(y)=P(Y=y)=\sum_{x_{\setminus y}\in \val(X \setminus Y)} P(x)$, where $x_{\setminus y}$ denotes assignments to variables in the set $X \setminus Y$.
In addition,  evaluating $S(y)$ involves evaluating each node in the DAG, i.e.~inference at $O(\mathcal{E})$ time and memory cost 
 is always \emph{tractable}.

The recursive Definition \ref{def:SPN} enables to build up SPNs by iteratively composing
SPNs using the operations \ref{enu:The-product-} and \ref{enu:The-weighted-sum},
starting from trivial SPNs as leaves in the form of indicator variables (rule \ref{enu:An-indicator-variable}). Clearly, it is
not always intuitive to design a SPN by hand, which involves thinking of the product nodes as factorizations and of the sum nodes as mixtures of the underlying sub-models. For some particular applications, the SPN structure was designed and fitted in this way to the particular distribution at hand, cf.~\citep{SPN2011,SPNlangMod2014,SPnforActRecogn2015}). Yet, more generally, SPN approaches involve
some form of automatic learning the structure of the SPN from given data \citep{SPNstructureLearning2013}. 
Our approach to learning the structure of a SPN is described in Section \ref{sec:Structure-Learning}.

Besides tractability, the main motivation to consider the model class of SPNs is their ability to represent contextual independences more expressively than mixture models.  A detailed analysis of the role of contextual independence in SPNs is provided in Section \ref{sec:Sum-Product-Graphical-Models}.
Intuitively, the expressiveness of SPNs descends from interpreting sum nodes as \emph{mixtures} of the children SPNs, which allows to create a hierarchy of mixture models and thus a hierarchy of contextual independences, by using hidden mixture variables. In addition, sharing the same children SPNs in different sum nodes allows to limit the effect of contextual independences to a subsection of the model, as shown in the example in Section \ref{subsec:An-Illustrative-Example}.

\section{Sum-Product Graphical Models\label{sec:Sum-Product-Graphical-Models}}

This section introduces \textit{Sum-Product Graphical Models (SPGMs)} and discusses their properties. 
We provide an interpretation of the model as a mixture of tree GMs in Section \ref{sec:asMixTrees}, and as a high-level representation of SPNs in Section \ref{sec:asSpn}.

\subsection{Definition}

The first step in describing SPGMs is to define the type of nodes
that appear in the underlying graph. 

\begin{definition}[Sum, Product and
Variable Nodes]\label{def:nodeTypes} Let $X$ and $Z$ be disjoint sets of discrete variables,
and let $\mathcal{G}=(\mathcal{V},\mathcal{E})$ be a DAG.
\begin{itemize}
\item The basic nodes $s\in\mathcal{V}$ of a SPGM  are called \textit{SPGM Variable Node (Vnode)} and associated with a variable $X_{s}\in X$, They are graphically represented as
a circle having $X_{s}$ as label (Fig.~\ref{fig:ESPNexample}, left). 
\item $s\in\mathcal{V}$ is called \textit{Sum Node}, if it represents the corresponding operation indicated by the
symbol $\oplus$. A Sum Node can be \emph{Observed}, 
in which case it is associated to a variable $Z_{s}\in Z$ and represented
by the symbol $\oplus Z_{s}$. 
\item $s\in\mathcal{V}$ is called \textit{Product Node}, if it represents the corresponding operation indicated by the symbol $\otimes$.
\end{itemize}
\end{definition} 

\noindent
In what follows, variables $Z$ take the role of \emph{context variables} according to Definition \ref{defCsi}.


\begin{definition}[Scope of a node] Let $\mathcal{G} = (\mathcal{V},\mathcal{E})$
be a rooted DAG with nodes as in Definition \ref{def:nodeTypes},
and let $s\in\mathcal{V}$. 
\begin{itemize}
\item The \emph{scope} of $s$ is the set of all variables associated to
nodes in the sub-DAG rooted in $s$. 
\item The $X$\emph{-scope} of $s$ is the set of all variable associated
to Vnodes in the sub-DAG rooted in $s$. 
\item The $Z$\emph{-scope} of $s$ is the set of variables associated to
Observed Sum Nodes in the sub-DAG rooted in $s$. 
\end{itemize}
\end{definition}
\begin{example}
The scope of the Sum Node associated to $Z_{2}$ in Fig.~\ref{fig:spgm1} is $\{D,E,F,Z_{2}\}$, its $Z$-scope is $\{Z_{2}\}$,
and its $X$-scope is $\{D,E,F\}$.
\end{example}
\noindent
Finally, we define the set of ``V-parents'' of a Vnode $s$, which intuitively are 
the closest Vnode ancestors of $s$. 

\begin{definition}[Vparent]\label{def:Vparent} The \emph{Vparent set} $\vpa\left(s\right)$ of a Vnode $s$ is
the set of all $r\in\mathcal{V}$ such that $r$ is a \emph{Vnode},
and there is a directed path from $r$ to $s$ that does not include
any other Vnode. 
\end{definition}

\noindent
With the definitions above we can now define SPGMs.

\begin{definition}[SPGM]\label{def:ESPN} A \textit{Sum-Product Graphical Model
(SPGM)} $\spgm\left(X,Z|\mathcal{G}, \{P_{st}\}, \{W_{s}\}, \{Q_{s}\} \right)$ or more shortly, $\spgm\left(X,Z\right)$ or even $\spgm$, is a rooted DAG $\mathcal{G}=(\mathcal{V},\mathcal{E})$
where nodes can be Sum, Product or Vnodes as in Definition \ref{def:nodeTypes}.
The SPGM is governed by the following parameters:
\begin{enumerate}
\item Pairwise conditional probabilities $P_{st}\left(X_{t}|X_{s}\right)$
 associated to each Vnode $t \in \mathcal{V}$ and each Vparent $s\in\vpa\left( t \right)$.
\item Unary probabilities $P_{s}\left(X_{s}\right)$  associated to each
Vnode $s\in \mathcal{V} \colon \vpa(s)=\emptyset$.
\item Unary probabilities $W_{s}(k)$ for $k=\{1,2,...,|ch(s)|\}$ associated to each non-Observed Sum
Node $s$, with value $W_{s}(i)$ associated to the edge between $s$ and its $i$-th child (assuming any order has been fixed). 
\item Unary probability $Q_{s}(Z_{s})$ s.t. $\val(Z_s)=\{1,2,...,|ch(s)|\}$ and associated to each Observed Sum
Node $s$, with value $Q_{s}(i)$ being associated to the edge between
$s$ and its $i$-th child. 
\end{enumerate}
In addition, each node $s\in\mathcal{V}$ must satisfy the following conditions: 
\begin{enumerate}\setcounter{enumi}{4}
\item \label{enu:sgm1} If $s$ is a \textit{Vnode} (associated to variable $X_s$), then $s$ has \textit{at most} one child $c$,
and $X_{s}$ does not appear in the scope of $c$. 
\item \label{enu:sgm3} If $s$ is a \textit{Sum} Node, then $s$ has \textit{at least} one child,
and the scopes of all children are the \textit{same} set. If the Sum Node
is Observed (hence associated to variable $Z_s$), then $Z_{s}$ is not in the scope of any child.
\item \label{enu:sgm2} If $s$ is a \textit{Product} Node, then $s$ has \textit{at least} one
child, and the scopes of all children are \textit{disjoint} sets. 
\end{enumerate}
\end{definition}
An example SPGM is shown in Fig. \ref{fig:ESPNexample}, left. Note
that the closeness with both SPNs and GMs, to be further discussed later, can be already seen from the definition: the last three conditions in definition
are closely related to SPN conditions (Definition \ref{def:SPN}),
whereas the usage of pairwise and unary probabilities in 1.-4. above connects SPGMs
to graphical models. 

\begin{figure}
\subfloat[\label{fig:spgm1}]{ \includegraphics[height=5cm]{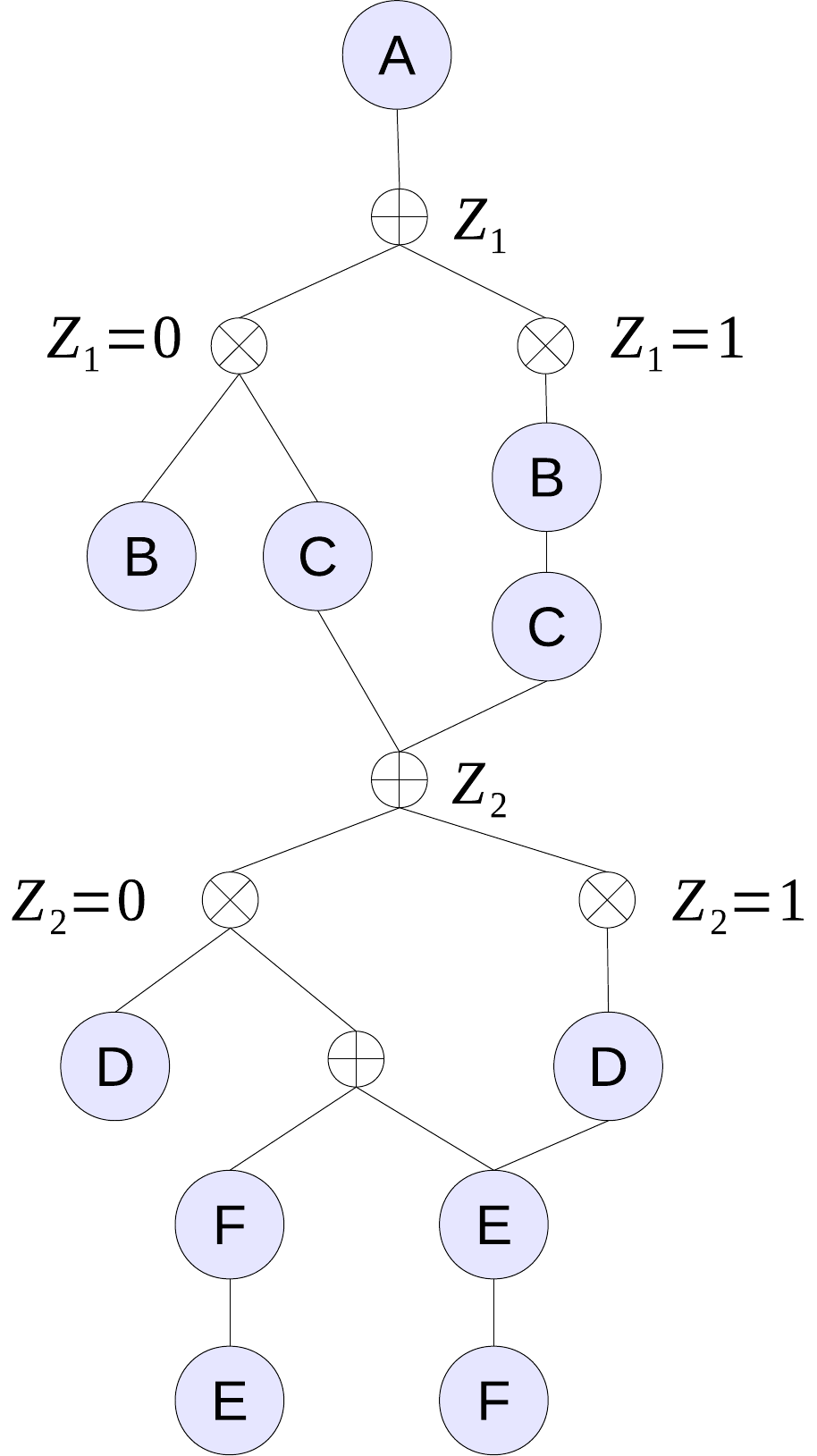}
}\hfill{}\subfloat[\label{fig:spgm2}]{ \includegraphics[height=5cm]{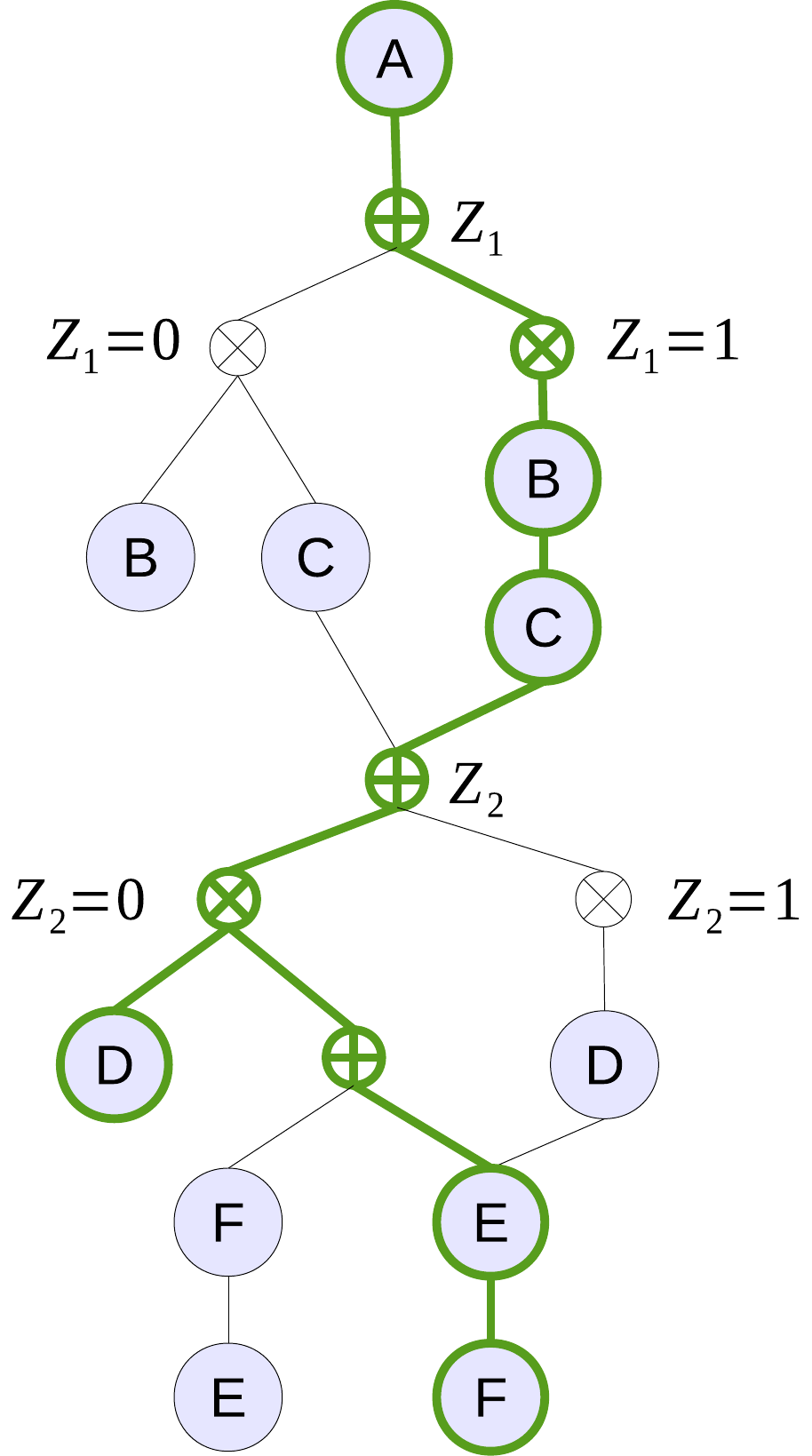}
}\hfill{}\subfloat[\label{fig:spgm3}]{ \includegraphics[height=5cm]{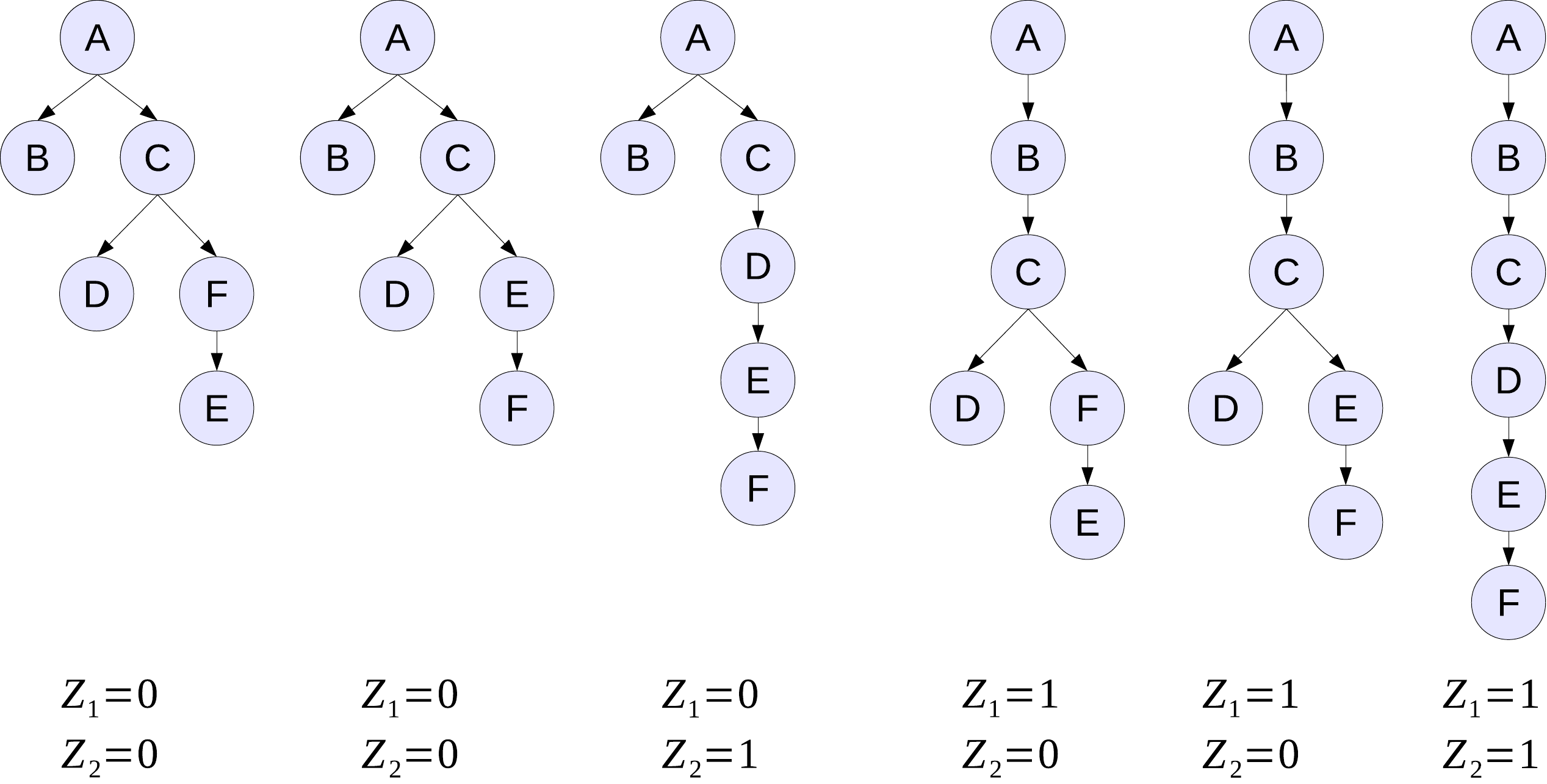}
}

\caption{ Sum-Product Graphical Model Example. 
(a) A SPGM $\spgm\left(X,Z\right)$
with $X=\left\{ A,B,C,D,E,F\right\} ,Z=\left\{ Z_{1},Z_{2}\right\} $.
(b) A subtree of $\spgm$ (in green). 
(c) All subtrees of $\spgm$ are represented as graphical models and
corresponding context variables. \label{fig:ESPNexample}}
\end{figure}

\subsection{Message Passing in SPGMs}

We now define a message passing protocol used to evaluate SPGMs, which
conforms to the way how SPGMs represent conditional and contextual
independence efficiently. The following definition refers to Definitions \ref{def:Vparent} and \ref{def:ESPN}.

\begin{definition}[Message passing
in SPGMs]\label{def:Message-passing-in} Let $s\in\mathcal{V},t\in\mathcal{V}$ and let $ch(s)_{k}$ denote the
$k$-th child of $s$ in a given order. Node $t$ sends a message
$\mu_{t \rightarrow s;j}$ to each Vparent $s\in\vpa\left(t\right)$ and for each
parent state $j\in\val(X_{s})$ according to the following rules: 
\begin{subequations}\label{eq:messages}
\begin{eqnarray}
\mu_{t \rightarrow s;j}= & \sum_{k=1}^{|ch(t)|}[Z_{t}]_{k}Q_{t}(k)\mu_{ch(t)_{k} \rightarrow s;j} & \text{$t$ is a Sum Node, Observed}\label{eq:msgSumNode}\\
\mu_{t \rightarrow s;j}= & \sum_{k=1}^{|ch(t)|}W_{t}(k)\mu_{ch(t)_{k} \rightarrow s;j} & \text{$t$ is a Sum Node, not-Observed}\label{eq:msgSumUnobs}\\
\mu_{t \rightarrow s;j}= & \prod_{q\in ch\left(t\right)}\mu_{q \rightarrow s;j} & \text{$t$ is a Product Node}\label{eq:msgProd}\\
\mu_{t \rightarrow s;j}= & \sum_{k\in\val\left(X_{t}\right)}P_{s,t}\left(k|j\right)[X_{t}]_{k}\mu_{ch(t) \rightarrow t;k} & \text{$t$ is a Vnode}\label{eq:msgPnode}
\end{eqnarray}
If $\vpa\left(s\right)$ is empty, top level messages
are computed as: \label{eq:messagesTop}
\begin{eqnarray}
\mu_{t \rightarrow	root}= & \sum_{k=1}^{|ch(t)|}[Z_{t}]_{k}Q_{t}(k)\mu_{ch(t)_{k} \rightarrow root} & \text{\text{$t$ is a Sum Node, Observed}}\\
\mu_{t \rightarrow	root}= & \sum_{k=1}^{|ch(t)|}W_{t}(k)\mu_{ch(t)_{k} \rightarrow root;j} & \text{$t$ is a Sum Node, not-Observed}\\
\mu_{t \rightarrow	root}= & \prod_{q\in ch\left(t\right)}\mu_{q \rightarrow root;j} & \text{$t$ is a Product Node}\\
\mu_{t \rightarrow	root}= & \sum_{k\in\val\left(X_{t}\right)}P_{s}\left(k\right)[X_{t}]_{k}\mu_{ch(t) \rightarrow t;k} & \text{$t$ is a Vnode}\label{eq:eq:msgPnodeRoot}
\end{eqnarray}

Vnodes at the leaves send messages as in Eqns.~\eqref{eq:msgPnode} and \eqref{eq:eq:msgPnodeRoot} after substituting the
incoming messages by the constant $1$. 

\end{subequations} \end{definition}

Note that that messages are only sent to Vnodes (or to a fictitious ``root''
for top level nodes), and no message is sent to Sum and Product Nodes.
Notice further that Vnode messages resemble message passing in tree GMs
(Eq. \eqref{eq:treeMsgPass}), which is the base for our subsequent
interpretation of SPGMs as graphical model.

\begin{definition}[Evaluation of $\spgm\left(X,Z\right)$]\label{def:SPGM-marg} 
Let $Y\subseteq X\cup Z$ denote evidence variables with assignment $y\in\val\left(Y\right)$. The evaluation of a SPGM $\spgm$ with assignment
$y$, written as $\spgm\left(y\right)$ (cf.~Remark \ref{rem:evaluation-S-y} and \eqref{eq:def-Sy}), is obtained by setting
the indicator variables accordingly (Definition \ref{def:assignmentIndVar}),
followed by evaluating messages for each node from the leaves to the root due to Definition
\ref{def:Message-passing-in}, and then taking the value of the message
produced by the root of $\spgm$.

\end{definition}

\begin{proposition}\label{proposition:linearCost}The evaluation
of a SPGM $\spgm$ has complexity $O(|\mathcal{V}|M|\val_{max}|^{2})$, where
$M$ is the maximum number of Vparents for any node in $\spgm$, 
and $|\val_{max}| = \max\{\Delta(X_{s}) \colon s \in \mathcal{V}\}$ is the maximum domain size for any variable in
$X$.

\end{proposition}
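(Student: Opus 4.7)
The plan is to bound the total cost by charging the work to each message individually and then summing over the whole graph. First, I would use Definition \ref{def:Message-passing-in} to count how many messages each node of $\spgm$ produces. Messages are emitted only toward Vparents, so a node $t$ sends messages to at most $|\vpa(t)| \leq M$ destinations; for each destination $s \in \vpa(t)$, one message is produced per state $j \in \val(X_s)$, i.e.\ at most $|\val_{max}|$ messages per destination. Hence each of the $|\mathcal{V}|$ nodes emits $O(M\,|\val_{max}|)$ messages, and the total number of messages produced during evaluation is $O(|\mathcal{V}|\,M\,|\val_{max}|)$. The top-level rules in \eqref{eq:messagesTop} are structurally the same and account only for a single scalar per node, so they do not change this count.

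Next, I would bound the per-message cost by cases following the four rules in \eqref{eq:messages}. The Vnode rule \eqref{eq:msgPnode} is the dominant case: computing $\mu_{t \to s;j}$ sums over $k \in \val(X_t)$ at most $|\val_{max}|$ terms, each combining an entry of $P_{s,t}$, an indicator, and an already-computed incoming message in constant time, yielding $O(|\val_{max}|)$ operations. Multiplying by the number of Vnode messages gives a Vnode contribution of $O(|\mathcal{V}|\,M\,|\val_{max}|^{2})$, which matches the claimed bound. For the Sum and Product rules \eqref{eq:msgSumNode}--\eqref{eq:msgProd}, each individual message requires $O(|ch(t)|)$ operations (a sum or product over children of $t$, with the required per-state incoming messages already available by the induction order on the DAG).

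The main obstacle, as I see it, is not the counting itself but the absorption of the Sum/Product contribution. Summed over all Sum and Product nodes this contribution is $O\!\bigl(M\,|\val_{max}|\sum_{t}|ch(t)|\bigr) = O(M\,|\val_{max}|\,|\mathcal{E}|)$, so the quoted $O(|\mathcal{V}|\,M\,|\val_{max}|^{2})$ bound requires a structural assumption that the Sum/Product fan-in is bounded, so that $|\mathcal{E}| = O(|\mathcal{V}|\,|\val_{max}|)$. Under this convention, which is standard in SPN-style constructions and is already implicit in the way $M$ bounds Vparent counts, the Sum/Product cost is absorbed into the Vnode term. Combining the bounds then yields the stated complexity; if one prefers not to impose such a convention, the identical proof gives the sharper form $O(|\mathcal{V}|\,M\,|\val_{max}|^{2} + |\mathcal{E}|\,M\,|\val_{max}|)$, from which the proposition follows as a corollary.
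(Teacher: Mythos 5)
Your argument is essentially the paper's: the paper's entire proof is the observation that every message is evaluated once, that each of the $|\mathcal{V}|$ nodes sends at most $M$ messages (one per Vparent), and that each message has size $|\val_{max}|^{2}$ -- i.e.\ exactly your first two steps at a coarser granularity. Where you go further is in asking what it costs to \emph{fill in} each message entry, and the obstacle you isolate is real: the paper's accounting implicitly treats each entry as an $O(1)$ computation, which is true for Vnodes (single child) but not for Sum and Product nodes, whose per-entry cost is $O(|ch(t)|)$. Summed over the DAG this contributes $O(|\mathcal{E}|\,M\,|\val_{max}|)$, and since $|\mathcal{E}|$ is not in general $O(|\mathcal{V}|\,|\val_{max}|)$ for a DAG, the stated node-counted bound does not follow without an extra hypothesis; your alternative form $O(|\mathcal{V}|\,M\,|\val_{max}|^{2} + |\mathcal{E}|\,M\,|\val_{max}|)$ is the unconditionally correct statement, and it is consistent with how the paper elsewhere measures SPN evaluation as $O(|\mathcal{E}|)$. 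One correction to your framing: the convention you invoke is \emph{not} implicit in the definition of $M$, since $M$ bounds the number of Vparents (outgoing message destinations), not the number of children, so it gives no control whatsoever on Sum/Product fan-in; you should present the bounded-fan-in assumption as an additional hypothesis or simply state the edge-based bound. With that caveat your proof is correct and, if anything, more complete than the one in the paper.
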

\begin{proof}
Every message is evaluated exactly once. Each of the
$|\mathcal{V}|$ nodes sends at most $M$ messages (one for each
Vparent), and each message has size $|\val_{max}|^{2}$ (one value
per every state of sending and receiving node). 
\end{proof}

\subsection{Interpretation of SPGMs as Graphical Models}\label{sec:asMixTrees}

In this section, we consider and discuss SPGMs as  probabilistic models. We show that SPGMs encode large mixtures of trees with shared subparts and
provide a high-level representation of both conditional and contextual
independence through D-separation.

\subsubsection{Subtrees}

We start by introducing subtrees of SPGMs and their properties. 

\begin{definition}[Subtrees of SPGMs]\label{def:ESPNsubn}

Let $\spgm$ be a SPGM. A \emph{subtree} $\tau(X,Z)$ (or more shortly $\tau$) is a SPGM defined on a subtree of the DAG $\mathcal{G}$ underlying $\spgm$ (cf.~Def.~\ref{def:ESPN}), that is recursively constructed based on the root of $\spgm$ and the following steps: 
\begin{itemize}
\item If $s$ is a \textit{Vnode or a Product Node}, then include in $\tau$ all children
of $s$ and edges formed by $s$ and its children. Continues this process for all included nodes.
\item If $s$ is a Sum Node, then include in $\tau$ only the $k_{s}$-th child
and the corresponding connecting edge, where the choice of $k_{s}$
is arbitrary. Continue this process for all included nodes.
\end{itemize}
We denote by $\enctr(\spgm)$ the set of all subtrees of $\spgm$.
\end{definition}
\begin{example}
One of the subtrees of the SPGM depicted in Fig.~\ref{fig:spgm1}
 is shown by Fig.~\ref{fig:spgm2}. 
\end{example}

\begin{definition}[Subtrees $\tau$ and indicator variable sets $z_{\tau}$]\label{def:ESPNsubn-3}

Let $\tau\in\enctr(\spgm)$ be a subtree of $\spgm$. The symbol $z_{\tau}$
denotes the set of all indicator variables associated to Observed
Sum Nodes and their corresponding state in the subtree. Specifically, if
the $k_{s}$-th child of an Observed Sum Node $s$ is included in the tree,
then $[Z_{s}]_{k_{s}}\in z_{\tau}$. 
\end{definition}

\begin{example}
The set $z_{\tau}$ for the subtree in Fig. \ref{fig:spgm2} is $\{[Z_{1}]_{1},[Z_{2}]_{0}\}$. 
\end{example}


\begin{definition}[Context-compatible subtrees]\label{def:ESPNsubn-2}
Let $Y\subseteq Z$  be a subset of context variables with assignment $y\in \val(Y)$, and let $[y]$ denote the set of indicator variables corresponding to $Y=y$. 
The set of subtrees compatible with context $Y=y$, written as $\enctr(\spgm|y)$, is the set of all subtrees $\tau\in\enctr(\spgm)$
such that $[y] \subseteq z_{\tau}$.
\end{definition}

\begin{example}
The set of subtrees $\enctr(\spgm|Z_{1}=0, Z_{2} = 0 )$ for the SPGM in in Fig. \ref{fig:ESPNexample} is composed by subtree $\tau$, shown in Fig. \ref{fig:spgm2}, and the subtree obtained by modifying $\tau$ through choosing the alternate child of the lowest sum node.
\end{example}

\noindent We now state properties of subtrees that are essential for the subsequent
discussion. 

\begin{proposition}\label{prop:subIsTree}
Any subtree $\tau \in \enctr(\spgm)$ is a tree SPGM. 
\end{proposition}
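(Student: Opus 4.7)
The plan is to establish two things: first, that the DAG underlying $\tau$ is actually a tree (so the name ``subtree'' is justified), and second, that all the structural SPGM axioms of Definition~\ref{def:ESPN} continue to hold in $\tau$. The second part is essentially bookkeeping, so the substance of the proof is the first part.

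For tree-ness, I would proceed by contradiction: suppose some node $u \in \tau$ had two distinct parents in $\tau$. Then there exist two distinct directed paths $\pi_1, \pi_2$ from the root of $\tau$ to $u$. Let $v$ be the last common node along these two paths, i.e.~the point of divergence, and let $c_1, c_2$ be the two distinct children of $v$ lying on $\pi_1, \pi_2$ respectively. The key observation is that $v$ cannot be a Sum Node, since by Definition~\ref{def:ESPNsubn} only one child of any Sum Node is retained in $\tau$; and $v$ cannot be a Vnode, since condition~\ref{enu:sgm1} of Definition~\ref{def:ESPN} forces Vnodes to have at most one child. Hence $v$ must be a Product Node. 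But both $c_1$ and $c_2$ have $u$ in their sub-DAG in $\tau$, and hence the scope of $c_i$ in the \emph{original} $\spgm$ contains the scope of $u$. Since $u$ is either a Vnode (scope $\supseteq \{X_u\}$) or an Observed Sum Node (scope $\supseteq \{Z_u\}$) or, by induction on descendants, eventually inherits a non-empty scope from such a node, the scopes of $c_1$ and $c_2$ overlap, contradicting condition~\ref{enu:sgm2} of Definition~\ref{def:ESPN}. Therefore no such $u$ exists and $\tau$ is a tree.

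For the SPGM axioms, I would simply walk through conditions \ref{enu:sgm1}--\ref{enu:sgm2} for the nodes of $\tau$. Condition~\ref{enu:sgm1} (Vnode has at most one child, and the child's scope excludes $X_s$) holds because $\tau$ retains all children of Vnodes, and a Vnode of $\spgm$ already has at most one child with the required scope property. Condition~\ref{enu:sgm3} (Sum Nodes have at least one child with identical scope) is immediate since by construction each Sum Node in $\tau$ retains exactly one child. Condition~\ref{enu:sgm2} (Product Nodes have at least one child with disjoint scopes) is inherited directly from $\spgm$ since all children are kept. The parameter data $\{P_{st}\}, \{W_s\}, \{Q_s\}$ associated with the surviving edges/nodes transfer verbatim. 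Thus $\tau$ is a valid SPGM, and combined with the first part, it is a tree SPGM.

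The one place where some care is needed is the claim that the scope of every node is non-empty, which is used implicitly in the contradiction argument. I would justify this by a short induction on the sub-DAG: leaves of $\spgm$ must be Vnodes (Sum and Product Nodes require at least one child by conditions~\ref{enu:sgm3} and \ref{enu:sgm2}), so leaves have non-empty scope containing $X_s$; internal nodes inherit non-empty scopes from their descendants. This is the only subtle step; once it is in hand, the rest of the argument is structural.
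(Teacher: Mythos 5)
Your proof is correct and follows essentially the same route as the paper's: the only nodes that can branch in $\tau$ are Product Nodes, and disjointness of their children's scopes (condition~\ref{enu:sgm2} of Definition~\ref{def:ESPN}) rules out a shared descendant, so $\tau$ has no (undirected) cycles. In fact your version is slightly more careful than the paper's, which passes directly from ``disjoint scopes'' to ``disjoint sub-graphs''; your explicit observation that every node has non-empty scope (because all leaves must be Vnodes) is exactly the lemma needed to justify that step.
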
 

\begin{proof} Only Product Nodes in $\tau \in \enctr(\spgm)$ can have multiple children,
since Vnodes have a single child by Definition \ref{def:ESPN}, case \ref{enu:sgm1},
and Sum Nodes have a single child in $\tau$ by
Definition \ref{def:ESPNsubn}. Children of Product Nodes have disjoint
graphs by Definition \ref{def:ESPN}, case \ref{enu:sgm2}. Therefore
$\tau$ contains no cycles. A rooted graph with no cycles is a tree.
\end{proof}

\begin{proposition}\label{prop:exp}

The number $|\enctr(\spgm)|$ of subtrees of $\spgm$ grows as $O(\exp(|\mathcal{E}|))$.

\end{proposition}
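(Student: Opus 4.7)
The plan is to bound $|\enctr(\spgm)|$ directly from Definition~\ref{def:ESPNsubn} by counting the independent selections made during the recursive construction of a subtree. Walking through the cases of that definition, at a Vnode or at a Product Node every child is included deterministically, so no choice occurs; the only place where branching arises is at a Sum Node $s$, where exactly one of the $|ch(s)|$ children is picked via an arbitrary index $k_s$. Hence every subtree $\tau \in \enctr(\spgm)$ is fully specified by a selection $k_s \in \{1,\dots,|ch(s)|\}$ at each Sum Node, and consequently
\[
|\enctr(\spgm)| \;\le\; \prod_{s \,\in\, \text{Sum Nodes of }\spgm} |ch(s)|.
\]

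The next step is to convert this product into an exponential bound in $|\mathcal{E}|$. Using the elementary inequality $c \le 2^{c}$ valid for every positive integer $c$,
\[
\prod_{s}|ch(s)| \;\le\; \prod_{s} 2^{|ch(s)|} \;=\; 2^{\sum_{s} |ch(s)|}.
\]
Because each ordered pair $(s, ch(s)_k)$ with $s$ a Sum Node corresponds to a distinct edge of $\mathcal{G}$, we have $\sum_s |ch(s)| \le |\mathcal{E}|$, and therefore $|\enctr(\spgm)| \le 2^{|\mathcal{E}|} = O(\exp(|\mathcal{E}|))$, which is the claim.

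The only subtle point, and the one I expect to require the most care, is justifying that the crude product $\prod_s |ch(s)|$ genuinely upper-bounds the number of \emph{distinct} subtrees: one has to verify that selections made at different Sum Nodes cannot conflict or be double-counted, i.e., that every Sum Node appearing in a given subtree appears there exactly once. This follows from the disjointness of the sub-DAGs rooted at the children of any Product Node (a consequence of Definition~\ref{def:ESPN}, case~\ref{enu:sgm2}, combined with the argument already used in the proof of Proposition~\ref{prop:subIsTree}), together with the facts that Vnodes have at most one child and that a Sum Node contributes only its selected child to~$\tau$. Optionally, to confirm that the bound is of the correct order and not vacuous, one can exhibit a chain of $k$ binary Sum Nodes for which $|\enctr(\spgm)| = 2^k$ while $|\mathcal{E}| = \Theta(k)$, witnessing genuine exponential growth.
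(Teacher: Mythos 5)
Your argument is correct, but it takes essentially the opposite route from the paper. The paper proves Proposition~\ref{prop:exp} ``by inspection'': it constructs a specific family of SPGMs (a stack of $K$ Sum Nodes, each with $M$ Vnode children that reconverge on the next Sum Node), counts $2MK$ edges and $M^K$ subtrees, and thereby exhibits that exponential growth in $|\mathcal{E}|$ is actually \emph{attained}. That is a lower-bound witness, which is the content the paper actually needs later (``very large mixtures representable tractably''), even though the proposition is phrased with an $O(\cdot)$. You instead prove the literal upper bound: every subtree is determined by a choice vector $(k_s)_s$ over Sum Nodes, the map from choice vectors to subtrees is surjective, hence $|\enctr(\spgm)| \le \prod_s |ch(s)| \le 2^{\sum_s |ch(s)|} \le 2^{|\mathcal{E}|}$. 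This is sound (note that for an upper bound the ``double-counting'' worry you flag is harmless --- overcounting only loosens the bound, so the disjointness argument is not actually needed), and it supplies something the paper's proof does not, namely that the subtree count can never exceed $\exp(|\mathcal{E}|)$. Conversely, your upper bound alone does not establish what the paper uses the proposition for; the witness construction you label ``optional'' is in fact the substance of the paper's proof and should be made an integral part of yours (taking care that the parallel Vnode children of each Sum Node have equal scope, as in the paper's stacked construction, so the SPGM is valid). With both halves stated, your version is the more complete one.
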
 
\begin{proof} See Appendix \ref{Proof-of-prop:exp}. 
\end{proof}

\begin{proposition}\label{prop:scopeSubtree}

The scope of any subtree $\tau(X,Z) \in \enctr(\spgm(X,Z))$ is $\{X,Z\}$.

\end{proposition}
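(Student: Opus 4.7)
The plan is to prove the scope equality by structural induction on the DAG underlying $\spgm$, walking from the leaves up to the root (or equivalently, by recursion from the root following the recursive construction of subtrees in Definition \ref{def:ESPNsubn}). The key observation driving the argument is that, at each node type, the operation used to build a subtree preserves the scope in $\spgm$. Applied to the root, this yields scope $\{X,Z\}$ since by the notation $\spgm(X,Z)$ the full SPGM has exactly $X\cup Z$ as the scope of its root.

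Concretely, I would prove the stronger auxiliary statement: for every node $s\in\mathcal{V}$ and every subtree $\tau$ obtained by the procedure of Definition \ref{def:ESPNsubn} applied to the sub-DAG of $\spgm$ rooted at $s$, the scope of $\tau$ equals the scope of $s$ in $\spgm$. The base case is a leaf Vnode, whose only associated variable is $X_s$ and which is therefore unchanged when restricted to a subtree. For the inductive step I would split into three cases according to Definition \ref{def:nodeTypes}:

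\textbf{Product Node:} by Definition \ref{def:ESPNsubn} all children are included in $\tau$, and the scope of $s$ in $\spgm$ is (by definition) the union of the scopes of its children, which by the inductive hypothesis coincides with the union of the scopes of those children in $\tau$.
\textbf{Vnode:} by Condition \ref{enu:sgm1} of Definition \ref{def:ESPN} there is at most one child $c$, and $X_s\notin$ scope of $c$. The subtree includes $c$, so scope$(s,\tau)=\{X_s\}\cup$ scope$(c,\tau)=\{X_s\}\cup$ scope$(c,\spgm)=$ scope$(s,\spgm)$ by induction.
\textbf{Sum Node:} here is the delicate case, and the whole argument hinges on Condition \ref{enu:sgm3} of Definition \ref{def:ESPN}, which forces all children of a Sum Node to have the \emph{same} scope. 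Consequently the scope of $s$ in $\spgm$ equals the scope of any single child (plus $\{Z_s\}$ if $s$ is Observed). The subtree construction selects exactly one child $k_s$, so by the inductive hypothesis scope$(s,\tau)=\{Z_s\}\cup$ scope$(ch(s)_{k_s},\tau)=\{Z_s\}\cup$ scope$(ch(s)_{k_s},\spgm)=$ scope$(s,\spgm)$ (with the $\{Z_s\}$ term dropped if $s$ is not Observed).

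Applying the auxiliary statement at the root of $\spgm$ yields scope$(\tau)=$ scope(root of $\spgm$)$=X\cup Z$, which is the claim. The only genuinely non-trivial step is the Sum Node case, but Condition \ref{enu:sgm3} was tailored precisely to make this work, so I expect the argument to go through cleanly without further machinery.
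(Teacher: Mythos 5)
Your proposal is correct and rests on exactly the same key observation as the paper's proof: the subtree construction only discards nodes at Sum Nodes, and condition \ref{enu:sgm3} of Definition \ref{def:ESPN} (all children of a Sum Node share the same scope) guarantees that keeping a single child preserves the scope. The paper states this in three sentences without the explicit induction; your version is just a more formal write-up of the identical argument.
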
 
\begin{proof} A subtree is obtained with Definition \ref{def:ESPNsubn} by iteratively choosing only one child of each sum node. However, each child of a sum node has the same scope, due to Definition \ref{def:ESPN}, condition $6$. Hence, taking only one child one obtains the same scope as taking all the children. 
\end{proof}

\subsubsection{SPGMs as Mixtures of Subtrees}

\begin{table}[t]
\noindent \centering%
\fbox{\begin{minipage}[c]{12cm}%
\vspace{-10pt}

\begin{align}
P\left(X,Z\right) & =\sum_{\tau\in\enctr(\spgm|Z)}\lambda_{\tau}P_{\tau}(X)\label{pxz}\\
P(X) & =\sum_{\tau\in\enctr(\spgm)}\lambda_{\tau}P_{\tau}(X) \label{mixture-ptau}\\
P_{\tau}(X) & =\prod_{r\in V_{\tau} \colon \vpa(r)=\emptyset }P_{r}(X_{r})\prod_{s\in V_{\tau},t:\in V_{\tau} \colon s \in \vpa(t) }P_{s,t}(X_{t}|X_{s})\label{eq:t}\\
\lambda_{\tau} & =\prod_{s\in O_{\tau}}Q_{s}(k_{s,\tau})\prod_{s\in U_{\tau}}W_{s}(k_{s,\tau})\label{pz}
\end{align}
\end{minipage}} \caption{Probabilistic models related to a SPGM $\spgm(X,Z)$. Symbols $V_{\tau}$, $O_{\tau}$ and $U_{\tau}$ denote respectively the set of Vnodes, Observed Sum Nodes and Unobserved Sum Nodes in a subtree $\tau \in \enctr(\spgm)$.
   $k_{s,\tau}$ denotes
the index of the child of $s$ that is included in $\tau$ (see Definition \ref{def:ESPNsubn}). 
Evaluation of $\spgm$ (Definition \ref{def:SPGM-marg}) is equivalent to inference using the distribution \eqref{pxz}, which is a mixture distribution \eqref{mixture-ptau} of tree graphical models \eqref{eq:t}, whose structure \textit{depends on the context} as specified by the context variables $Z$ of \eqref{pxz}.
}
\label{tab:encoded-probabilities}
\end{table}

In this section, we show that SPGMs can be interpreted as mixtures of trees. Table \ref{tab:encoded-probabilities} lists the notation and probabilistic (sub-)models relevant in this context.

As a first step, we show that inference in a subtree $\tau$ due to Definition \ref{def:ESPNsubn} is equivalent to inference
in a tree graphical model of the form \eqref{eq:treeGm}, multiplied for a constant factor
determined by the sum nodes in the subtree. 

\begin{proposition}\label{prop:infInEncTtr}

Let $\spgm=\spgm(X,Z)$ be a given SPGM, and let $\tau\in\enctr(\spgm)$ be a subtree (Def.~\ref{def:ESPNsubn}) with indicator variables $z_{\tau}$ (Def.~\ref{def:ESPNsubn-3}). Then message passing in $\tau$ is equivalent
to inference using the distribution
\begin{equation}\label{eq:pWithInd}
\lambda_{\tau} P_{\tau}\left(X\right)\prod_{[Z_{s}]_{j}\in z_{\tau}}[Z_{s}]_{j},
\end{equation}
where $P_{\tau}\left( X \right)$ is a tree graphical model of the form \eqref{eq:t}, $\lambda_{\tau} > 0$ is a scalar term obtained by multiplying the weights of all sum nodes in $\tau$ given by \eqref{pz}, and $\prod_{[Z_{s}]_{j}\in z_{\tau}}[Z_{s}]_{j}$ is the product
of all indicator variables in $z_{\tau}$. 
\end{proposition}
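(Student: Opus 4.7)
The plan is to proceed by structural induction on the subtree $\tau$, processing nodes in the reverse topological order used in Def.~\ref{def:Message-passing-in}. For a node $t\in V_\tau$ with unique Vparent $s$ in $\tau$, let $\tau_t$ denote the sub-subtree rooted at $t$. The inductive invariant I would maintain is that, with all indicator variables set according to the evidence,
\[
\mu_{t\to s;j}\;=\;\lambda_{\tau_t}\cdot\mu^{\mathrm{GM}}_{t\to s;j}\cdot\prod_{[Z_r]_{k_r}\in z_{\tau_t}}[Z_r]_{k_r},
\]
where $\mu^{\mathrm{GM}}_{t\to s;j}$ is the standard tree-GM message \eqref{eq:treeMsgPass} taken with respect to the tree GM $P_\tau$ of \eqref{eq:t}, and $\lambda_{\tau_t}$, $z_{\tau_t}$ are the restrictions to $\tau_t$ of the objects defined in \eqref{pz} and Def.~\ref{def:ESPNsubn-3}. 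Applying this at the root of $\tau$ immediately yields \eqref{eq:pWithInd}.

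For the base case (leaf Vnodes), the SPGM rule \eqref{eq:msgPnode} with incoming message $1$ is literally the tree-GM leaf message from \eqref{eq:treeMsgPass}, and $\lambda_{\tau_t}=1$, $z_{\tau_t}=\emptyset$. The inductive step then splits by node type. (i) At a non-leaf Vnode, Def.~\ref{def:ESPN}(\ref{enu:sgm1}) forces a unique SPGM-child, and \eqref{eq:msgPnode} has the same form as \eqref{eq:treeMsgPass}, so the three inductive factors carry through unchanged under the sum over states of $X_t$. (ii) At an Observed Sum Node, Def.~\ref{def:ESPNsubn} keeps only one child $k_t$, so \eqref{eq:msgSumNode} collapses to $\mu_{t\to s;j}=[Z_t]_{k_t}\,Q_t(k_t)\,\mu_{ch(t)_{k_t}\to s;j}$, appending $Q_t(k_t)$ to $\lambda$ and $[Z_t]_{k_t}$ to the indicator product in exact agreement with \eqref{pz} and Def.~\ref{def:ESPNsubn-3}. (iii) A non-Observed Sum Node is analogous with $W_t(k_t)$ and no new indicator appended.

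The only nontrivial case is (iv), the Product Node, where several children simultaneously emit messages to the same Vparent $s$ for the same state $j$. Here I would invoke Def.~\ref{def:ESPN}(\ref{enu:sgm2}): the children of $t$ have pairwise disjoint scopes, so (a) the tree GM $P_\tau$ factors as a product of sub-tree-GM factors over those scopes, (b) $\lambda_{\tau_t}$ equals the product of the per-child $\lambda$'s, and (c) $z_{\tau_t}$ partitions into the per-child indicator sets. The SPGM product rule \eqref{eq:msgProd} then multiplies the three kinds of factors coming from each child in one stroke, reproducing the branching Vnode structure of $P_\tau$ in which $s$ has several Vnode descendants joined via the intermediate Product Node. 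At the root, the top-level rules \eqref{eq:messagesTop} substitute the unary $P_r$ for the pairwise $P_{s,t}$ at a root Vnode, reproducing the root factor $P_r(X_r)$ of \eqref{eq:t}; combining the accumulated factors delivers \eqref{eq:pWithInd}. I expect the main obstacle to be the bookkeeping in the Product Node case: one must carefully verify that the Vparent adjacency in the SPGM, traversed through possibly many Sum and Product Nodes, corresponds precisely to the parent/child edges of the tree GM $P_\tau$, and that the three kinds of factors (structural probabilities, sum weights, and context indicators) factor coherently across the disjoint scopes imposed by each Product Node.
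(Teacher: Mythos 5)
Your proposal is correct and matches the paper's argument in substance: both proofs isolate the sum nodes' contribution as the multiplicative factor $\lambda_{\tau}\prod_{[Z_{s}]_{j}\in z_{\tau}}[Z_{s}]_{j}$ (using that each sum node retains a single child in $\tau$, so Eqs.~\eqref{eq:msgSumNode}--\eqref{eq:msgSumUnobs} collapse to a scalar times the child message) and then identify the remaining Vnode/Product-Node message passing with tree-GM message passing for $P_{\tau}$ of \eqref{eq:t}. The only difference is organizational: the paper carries this out as two global graph transformations (first discarding sum nodes and collecting their coefficients, then merging product nodes into an alternating Vnode/Product-Node form), whereas you maintain the three factors simultaneously in a single leaves-to-root induction, which subsumes the same Vparent-adjacency bookkeeping you flag in the Product-Node case.
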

\begin{proof} See Appendix \ref{subsec:Proof-of-Proposition-1}. 
\end{proof}

\noindent
The second step consists in noting that $\spgm$ can be written equivalently
as the mixture of all its subtrees.

\begin{proposition}\label{prop:spgmIsMix2}
Evaluating a SPGM $\spgm(X,Z)$ is equivalent to evaluating a SPGM $\spgm'(X,Z)=\sum_{\tau \in\enctr(\spgm)}\tau(X,Z)$.
\end{proposition}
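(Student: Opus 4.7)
The plan is to prove the statement by structural induction on the DAG underlying $\spgm$, showing that the message emitted by any node $s$ in $\spgm$ equals the sum of the corresponding messages emitted by $s$ in the subtrees of $\spgm$ rooted at $s$. Once established, applying this at the root of $\spgm$ yields the equality $\spgm(y) = \sum_{\tau \in \enctr(\spgm)} \tau(y)$ for every assignment $y$, which is precisely the evaluation of the SPGM $\spgm' = \sum_{\tau \in \enctr(\spgm)} \tau$ (whose root behaves as a non-observed sum node enumerating its children, as in Eq.~\eqref{eq:msgSumUnobs}).

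First, I would extend the notion of subtree of $\spgm$ (Definition \ref{def:ESPNsubn}) to any internal node $s$ by applying the same recursive rules starting at $s$ rather than at the root; call this set $\enctr(s)$. Note that $\enctr(\mathrm{root}) = \enctr(\spgm)$. The strengthened inductive claim is: for every node $t \in \mathcal{V}$, every Vparent $s \in \vpa(t)$ and every $j \in \val(X_s)$, the message produced by $t$ in $\spgm$ satisfies
\begin{equation*}
\mu_{t \rightarrow s;j}(\spgm) = \sum_{\tau \in \enctr(t)} \mu_{t \rightarrow s;j}(\tau),
\end{equation*}
with the analogous statement for top-level messages in Eqns.~\eqref{eq:messagesTop}. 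I would induct on the reverse topological order of the DAG (leaves to root), considering the four node types in Definition \ref{def:Message-passing-in} one by one. The base case is a leaf Vnode, where $\enctr(t) = \{t\}$ and both sides coincide with the corresponding instance of Eq.~\eqref{eq:msgPnode} using the constant incoming message $1$.

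The inductive steps are then routine applications of distributivity: if $t$ is a \emph{Vnode} with unique child $c$ (Definition \ref{def:ESPN}, case \ref{enu:sgm1}), the message of $t$ depends linearly on the message of $c$, so the inductive hypothesis on $c$ transfers directly, using the bijection $\enctr(t) \leftrightarrow \enctr(c)$ induced by appending $t$ as root. If $t$ is a \emph{Sum Node} (observed or not), Eqns.~\eqref{eq:msgSumNode}--\eqref{eq:msgSumUnobs} express $\mu_{t \rightarrow s;j}(\spgm)$ as a linear combination of the children's messages; applying the inductive hypothesis to each child $ch(t)_k$ and using $\enctr(t) = \bigsqcup_{k=1}^{|ch(t)|} \enctr(ch(t)_k)$ (where each subtree in $\enctr(t)$ is determined by a choice of $k$ and a subtree in $\enctr(ch(t)_k)$) yields the claim. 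The main algebraic step lies in the \emph{Product Node} case: Eq.~\eqref{eq:msgProd} gives $\mu_{t \rightarrow s;j}(\spgm) = \prod_{q \in ch(t)} \mu_{q \rightarrow s;j}(\spgm)$; substituting the inductive hypothesis on each factor and distributing the product over the sums yields
\begin{equation*}
\prod_{q \in ch(t)} \Big(\sum_{\tau_q \in \enctr(q)} \mu_{q \rightarrow s;j}(\tau_q)\Big) = \sum_{(\tau_q)_q \in \prod_q \enctr(q)} \prod_{q} \mu_{q \rightarrow s;j}(\tau_q),
\end{equation*}
and the bijection between tuples $(\tau_q)_{q \in ch(t)}$ and subtrees in $\enctr(t)$ (valid because children of a product node have disjoint scopes by Definition \ref{def:ESPN}, case \ref{enu:sgm2}) concludes the step.

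The step I expect to be most delicate is verifying this last bijection formally, since one must check that concatenating independently chosen subtrees of disjoint sub-DAGs yields a well-defined subtree of $\spgm$ rooted at $t$ in the sense of Definition \ref{def:ESPNsubn}, and conversely that every such subtree decomposes uniquely in this way. This relies essentially on the disjointness of scopes of children of product nodes and is really the only place where the SPGM conditions enter. All remaining steps are straightforward applications of the distributive law and bookkeeping over the four message-passing rules.
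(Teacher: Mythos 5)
Your proposal is correct and follows essentially the same route as the paper's own argument (Appendix proof via Lemma~\ref{lemma2}): a leaves-to-root induction showing each message $\mu_{t\rightarrow s;j}$ equals the sum of the corresponding messages over the subtrees of the sub-SPGM rooted at $t$, with the sum-node case handled by the disjoint-union decomposition over child choices and the product-node case by distributing the product over the sums. Your explicit attention to the bijection between tuples of children's subtrees and subtrees rooted at a product node is a point the paper treats only informally, but it is the same proof.
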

\begin{proof} In Appendix \ref{subsec:Proof-of-Proposition}. 
\end{proof}


\noindent
We are now prepared to state the main result of this section.
\begin{proposition}\label{prop:spgmIsMix2-1}
Let $Y_x\subseteq X,Y_z \subseteq Z$ denote evidence variables with assignment $y_x \in \val(Y_x), y_z\in \val(Y_z)$, respectively, and denote by $x_{\setminus y} \in \Delta(X \setminus Y_{x}), z_{\setminus y} \in \Delta(Z \setminus Y_{z})$ assignments to the remaining variables. 
Evaluating a SPGM $\spgm=\spgm(X,Z)$ with assignment $(y_x,y_z)$ (Definitions \ref{def:Message-passing-in} and \ref{def:SPGM-marg})
is equivalent to performing marginal inference with respect to the distribution \eqref{pxz} as follows:
\begin{equation}\label{eq:main}
P(Y_x=y_x, Y_z=y_z ) = \sum_{\substack{x_{\setminus y} \in \val(X \setminus Y_x) \\ z_{\setminus y} \in \val(Y \setminus Y_z)}} P\big((X \setminus Y_x) = x_{\setminus y}, (Y \setminus Y_z) = z_{\setminus y}, Y_x=y_x, Y_z=y_z\big).
\end{equation}
\end{proposition}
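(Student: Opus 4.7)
The plan is to combine Propositions 3.6 and 3.7 and then carefully reconcile the subtree bookkeeping with the marginalization over unobserved context variables.

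First, I would apply Proposition 3.7 to write the SPGM evaluation as the sum of evaluations of its subtrees,
\begin{equation*}
\spgm(y_x, y_z) \;=\; \sum_{\tau \in \enctr(\spgm)} \tau(y_x, y_z).
\end{equation*}
Next, I would appeal to Proposition 3.6: evaluating a subtree $\tau$ with assignment $(y_x, y_z)$ is equivalent to performing marginal inference, over the unassigned variables $X \setminus Y_x$, using the distribution $\lambda_{\tau}\, P_{\tau}(X)\, \prod_{[Z_s]_j \in z_\tau}[Z_s]_j$. Since the indicator variables for the $Z$ variables are fixed by $y_z$ (Definition 2.1), the product $\prod_{[Z_s]_j \in z_\tau}[Z_s]_j$ evaluated at $y_z$ equals $1$ whenever the children chosen in $\tau$ at each Observed Sum Node $s$ with $Z_s \in Y_z$ match $y_{z,s}$, and $0$ otherwise. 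Using Definition 3.5 together with Proposition 3.5 (every subtree has full $Z$-scope), this indicator product is exactly the characteristic function of the set $\enctr(\spgm \mid y_z)$. Hence
\begin{equation*}
\spgm(y_x, y_z) \;=\; \sum_{\tau \in \enctr(\spgm \mid y_z)} \lambda_{\tau} \sum_{x_{\setminus y}} P_\tau\!\left((Y_x, X\setminus Y_x) = (y_x, x_{\setminus y})\right).
\end{equation*}

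The second half of the argument matches the right-hand side of \eqref{eq:main} with this expression by re-indexing the sum over $z_{\setminus y}$ as a sum over subtrees. The key combinatorial observation is that the map $\tau \mapsto (k_{s,\tau})_{s \in O_\tau}$ is a bijection between $\enctr(\spgm)$ and the choices of one child per Observed Sum Node together with one child per Unobserved Sum Node; in particular, a subtree $\tau$ determines a unique value of $Z$ through its choices at Observed Sum Nodes, so for any fixed $y_z$ the family $\{\enctr(\spgm \mid (y_z, z_{\setminus y}))\}_{z_{\setminus y}}$ partitions $\enctr(\spgm \mid y_z)$. Substituting \eqref{pxz} into the marginal and swapping summations,
\begin{equation*}
\sum_{x_{\setminus y}, z_{\setminus y}} P(x,z) \;=\; \sum_{z_{\setminus y}} \sum_{\tau \in \enctr(\spgm \mid (y_z, z_{\setminus y}))} \lambda_\tau \sum_{x_{\setminus y}} P_\tau(x) \;=\; \sum_{\tau \in \enctr(\spgm \mid y_z)} \lambda_\tau \sum_{x_{\setminus y}} P_\tau(x),
\end{equation*}
which matches the expression for $\spgm(y_x, y_z)$ derived above.

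The main obstacle I expect is not the algebraic rearrangement but making rigorous the correspondence between the indicator product $\prod_{[Z_s]_j \in z_\tau}[Z_s]_j$, evaluated under a partial assignment $y_z$, and membership of $\tau$ in $\enctr(\spgm \mid y_z)$. This requires Proposition 3.5 to ensure that every $Z_s \in Y_z$ does appear as an Observed Sum Node along $\tau$, and Definition 2.1 to handle the unobserved context variables (whose indicators are all $1$ and thus contribute trivially). Once this correspondence is established, the two sides of \eqref{eq:main} reduce to the same sum indexed by $\enctr(\spgm \mid y_z)$, completing the proof.
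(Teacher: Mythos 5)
Your proposal is correct and follows essentially the same route as the paper: it combines the subtree-decomposition proposition (evaluation of $\spgm$ as the sum over $\enctr(\spgm)$) with the proposition identifying each subtree's evaluation with $\lambda_{\tau}P_{\tau}(X)\prod_{[Z_s]_j\in z_{\tau}}[Z_s]_j$, observes that the indicator product is the characteristic function of $\enctr(\spgm\mid y_z)$, and then matches the result with the marginal of \eqref{pxz}. Your final re-indexing step, partitioning $\enctr(\spgm\mid y_z)$ by the values $z_{\setminus y}$, makes explicit a step the paper only asserts, but the argument is the same.
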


\begin{proof} Due to Propositions \ref{prop:infInEncTtr} and \ref{prop:spgmIsMix2},
the evaluation of $\spgm$ corresponds to performing message passing (Eq. \ref{eq:treeMsgPass}) with the mixture distribution
\begin{equation*}
\sum_{\tau\in\enctr(\spgm)}\bigg(\prod_{[Z_{s}]_{j}\in z_{\tau}}[Z_{s}]_{j}\bigg)\lambda_{\tau}P_{\tau}\left(X\right).
\end{equation*}
We now note that the term $\left(\prod_{[Z_{s}]_{j}\in z_{\tau}}[Z_{s}]_{j}\right)$
attains the value $1$ only for the subset of trees compatible with the assignment $Y_z=y_z$ and
$0$ otherwise (since some indicator in the product is $0$), that is for subtrees in the set $\enctr(\spgm|Y_z=y_z)$ (Definition \ref{def:ESPNsubn-2}).
Therefore, the sum can be rewritten as $\sum_{\tau\in\enctr(\spgm|Z)}$, 
and the indicator variables (with value $1$) can be removed, which results in \eqref{pxz}. 
The proof is concluded noting that computing message passing in a mixture of trees with assignment $y_x , y_z$ corresponds to computing marginals $P(Y_x=y_x, Y_z=y_z )$ in the corresponding distribution (Section \ref{subsec:Directed-Graphical-Models}), hence Eq. \eqref{eq:main} follows. 
\end{proof}
\begin{example}
All subtrees of the SPGM $\spgm(X,Z)$ shown by Fig.~\ref{fig:spgm1} are shown as tree graphical models by Fig.~\ref{fig:spgm3}. The probabilistic model encoded by the SPGM is a mixture of these subtrees whose structure depends on the context variables $Z$.
\end{example}
The propositions above entail the crucial result that the probabilistic model of a SPGM  is a mixture of trees where the mixture size grows
\emph{exponentially} with the SPGM size (Definition \ref{def:ESPNsubn}),
but in which the inference cost grows only \emph{polynomially} (Proposition
\ref{proposition:linearCost}). Hence, \emph{very large} mixtures models
can then be modelled \textit{tractably}. This property is obtained by modelling
trees $\tau \in \enctr(\spgm|Z)$ by combining sets of \emph{shared subtrees}, selected through context variables $Z$, and by computing inference
in shared parts only once (cf.~the example in Fig. \ref{fig:ESPNexample}).

\subsubsection{Conditional and Contextual Independence}

In this section we discuss conditional and contextual independence
semantics in SPGMs, based on their interpretation as mixture model. 

\begin{definition}[Context-dependent paths]
Consider variables $A\in X,B\in X$ and a context  $z\in \val(\overline{Z})$ with $\overline{Z}\subseteq Z$.
\begin{itemize}
\item The set $\pi(A,B)$ is the set of all directed paths in $\spgm$ going from a
Vnode with label $A$ to a Vnode with label $B$. 
\item The set $\pi(A,B|\overline{Z}=z)\subseteq\pi(A,B)$ is the subset
of paths in $\pi(A,B)$ in which all the indicator variables over $\overline{Z}$
(Definition \ref{def:ESPNsubn-3}) are in state $z$. 
\end{itemize}
\end{definition}

\begin{proposition}[\emph{D-separation} in SPGMs]\label{prop:indepSem} Consider a SPGM $\spgm(X,Z)$, variables $A, B, C \in X$ and a context $z\in \val(\overline{Z})$ with $\overline{Z} \subseteq Z$. The following properties hold for the probabilistic model $\spgm$ corresponding to Eq.~\eqref{pxz}: 
\begin{enumerate}
\item $A$ and $B$ are \emph{independent} iff $\pi(A,B)=\emptyset$ and
$\pi(B,A)=\emptyset$ (there is no directed path from $A$ to $B$). 
\item $A$ and $B$ are \emph{conditionally independent} given $C$ if all
directed paths $\pi(A,B)$ and $\pi(B,A)$ contain $C$. 
\item $A$ and $B$ are \emph{contextually independent} given context $\overline{Z}=z$
iff $\pi(A,B|\overline{Z}=z)=\emptyset$ and $\pi(B,A|\overline{Z}=z)=\emptyset$. 
\item $A$ and $B$ are \emph{contextually and} \emph{conditionally independent}
given $C$ and context $\overline{Z}=z$ iff all paths $\pi(A,B|\overline{Z}=z)$
and $\pi(B,A|\overline{Z}=z)$ contain $C$. 
\end{enumerate}
\end{proposition}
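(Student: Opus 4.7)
The plan is to reduce each of the four items to a corresponding D-separation statement in tree graphical models, by exploiting the mixture-of-subtrees interpretation established in Proposition \ref{prop:spgmIsMix2-1}. Recall that in a tree GM, two variables are marginally independent iff they lie in different connected components of the tree, and conditionally independent given $C$ iff every undirected path between them passes through $C$. I would proceed in three stages: (i) a path-transfer lemma relating paths in $\spgm$ to paths in subtrees, (ii) treatment of items 1 and 2 (no context), and (iii) treatment of items 3 and 4 (with context), which differ from the first two only in that subtrees are restricted to $\enctr(\spgm|\overline{Z}=z)$.

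First I would prove the auxiliary claim that a directed path from a Vnode labeled $A$ to a Vnode labeled $B$ exists in $\spgm$ if and only if there is some subtree $\tau\in\enctr(\spgm)$ in which the same path (viewed as a walk between the corresponding Vnodes) appears. One direction is immediate because every subtree is a subgraph of $\spgm$; the other follows by constructing $\tau$ greedily along the path, choosing at each encountered Sum Node the unique child that lies on the path and extending the choices arbitrarily elsewhere. The restricted analogue, using $\pi(A,B|\overline{Z}=z)$ and $\enctr(\spgm|\overline{Z}=z)$, is obtained by the same argument after noting that constraining the choice $k_s$ at an Observed Sum Node $s$ with $Z_s\in\overline{Z}$ is exactly what makes a subtree compatible with the context (Definitions \ref{def:ESPNsubn-3} and \ref{def:ESPNsubn-2}).

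For item 1, I would use Proposition \ref{prop:spgmIsMix2-1} to write $P(A,B)=\sum_{\tau}\lambda_\tau P_\tau(A,B)$, with each $P_\tau$ a tree GM by Proposition \ref{prop:subIsTree}. When $\pi(A,B)=\pi(B,A)=\emptyset$, the path-transfer lemma implies $A$ and $B$ are in disjoint branches of a Product Node in every subtree, so the SPGM-scope condition on Product Nodes (Definition \ref{def:ESPN}, case \ref{enu:sgm2}) yields $P_\tau(A,B)=P_\tau(A)P_\tau(B)$. The critical point, and the main obstacle, is that a mixture of independent distributions need not itself be independent; this is resolved by observing that the mixture weights $\lambda_\tau$ from \eqref{pz} themselves factor along the same disjoint Product Node branches, because the Sum Nodes above the splitting Product Node are shared across $A$- and $B$-branches, while the Sum Nodes strictly below that Product Node split into two independent selections. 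Writing each subtree as $\tau=(\sigma,\tau^A,\tau^B)$, with $\sigma$ indexing shared choices above and $\tau^A,\tau^B$ indexing independent choices below, gives $\lambda_\tau=\lambda_\sigma\lambda_{\tau^A|\sigma}\lambda_{\tau^B|\sigma}$ and $P_\tau(A)=P^A_{\sigma,\tau^A}(A)$, $P_\tau(B)=P^B_{\sigma,\tau^B}(B)$; summing separately over $\tau^A$ and $\tau^B$ and then over $\sigma$ (whose weights sum to a constant by normalisation of the Sum Node weights) yields $P(A,B)=P(A)P(B)$. The converse is the standard argument that if a directed path exists, one can choose parameters along that path making $A$ and $B$ correlated in at least one tree in the mixture, hence in the mixture itself. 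Item 2 follows by the same decomposition: if every $A$-$B$ path in $\spgm$ contains $C$, then by the path-transfer lemma, the unique undirected path between $A$ and $B$ in every $\tau$ passes through the Vnode labeled $C$, so tree D-separation in each $P_\tau$ and summation over subtrees give $A\perp B\mid C$ in the mixture.

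Finally, items 3 and 4 are obtained by repeating the arguments above after replacing $\enctr(\spgm)$ with $\enctr(\spgm|\overline{Z}=z)$ and the path sets $\pi(\cdot,\cdot)$ with $\pi(\cdot,\cdot|\overline{Z}=z)$, using Proposition \ref{prop:spgmIsMix2-1} applied to the conditional distribution $P(X\mid\overline{Z}=z)$, which is precisely a mixture over the context-compatible subtrees with appropriately renormalised weights. I expect the main technical difficulty of the whole proof to lie entirely in the factorisation of $\lambda_\tau$ used in the $\Leftarrow$ direction of items 1 and 3; all remaining arguments are either direct applications of tree D-separation or careful bookkeeping of the subtree index set.
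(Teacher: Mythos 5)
Your overall strategy is the same as the paper's: view the SPGM as a mixture of tree GMs via Propositions \ref{prop:subIsTree} and \ref{prop:spgmIsMix2-1}, transfer directed paths in $\spgm$ to paths in individual subtrees, and invoke D-separation componentwise, with items 3 and 4 obtained by restricting to $\enctr(\spgm|\overline{Z}=z)$. Your explicit path-transfer lemma is correct (the greedy sum-node choices are consistent because a Sum Node lying on a directed $A\to B$ path cannot also be an ancestor of $A$ in a DAG) and makes precise a step the paper only implies.

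However, the step you yourself single out as the crux --- the ``$\Leftarrow$'' direction of items 1 and 3 --- does not go through as written, for two reasons. First, $P_\tau(A,B)=P_\tau(A)P_\tau(B)$ fails whenever the Product Node separating the $A$- and $B$-branches has a Vnode ancestor $v$: in the tree GM \eqref{eq:t} both branches then hang off $X_v$ through a diverging (tail-to-tail) connection, so $A$ and $B$ are conditionally independent given $X_v$ but not marginally independent, even though $\pi(A,B)=\pi(B,A)=\emptyset$ (take a root Vnode with a Product-Node child whose two children are Vnodes labelled $A$ and $B$). Second, even where per-subtree independence holds, your factorisation $\lambda_\tau=\lambda_\sigma\lambda_{\tau^A|\sigma}\lambda_{\tau^B|\sigma}$ only yields $P(A,B)=\sum_\sigma\lambda_\sigma f_\sigma(A)g_\sigma(B)$ with $f_\sigma(A)=\sum_{\tau^A}\lambda_{\tau^A|\sigma}P^A_{\sigma,\tau^A}(A)$ and analogously for $g_\sigma$; this is again a mixture of products indexed by the shared choices $\sigma$, and it equals $P(A)P(B)$ only when $f_\sigma$ and $g_\sigma$ do not depend on $\sigma$. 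The same difficulty recurs, unacknowledged, in your item 2: conditional independence given $C$ in every component plus ``summation over subtrees'' does not yield $A\perp B\mid C$ for the mixture, since mixing does not preserve conditional independence. To be fair, you identified the right obstacle, and the paper's own four-line proof does not confront it either: it simply asserts a componentwise D-separation property of mixtures of trees, so what is actually established there are independences holding within each context-compatible subtree (equivalently, conditionally on the latent subtree selector), not of the marginal mixture distribution. A complete argument must either prove the proposition under that weaker reading or add hypotheses (e.g., no common Vnode ancestor of $A$ and $B$, and no shared Sum Nodes above the separating Product Node) under which your factorisation genuinely closes the gap.
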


\begin{example}
In Fig. \ref{fig:ESPNexample}, $A$ and $D$ are
conditionally independent given $C$; $A$ and $C$ are conditionally
independent given $B$ and context $Z_{1}=1$. 
\end{example}
\begin{proof}
In a mixture of trees, conditional independence of $A,B$
given $C$ holds \emph{iff} for every tree in the mixture the path
between $A$ and $B$ contains $C$ (D-separation, see \citep{Cowell-et-al-03}).
If D-separation holds for all paths in $\pi(B,A|z)$ then it holds
for all the subtrees compatible with $Z=z$. But $P(X,Z)$ is the
mixture of all subtrees compatible with assignment $z$ (Eq. \eqref{pxz}).
Hence the result follows.
\end{proof}

The proposition above provides SPGMs with a high level representation of both contextual and conditional independence. 
This is obtained by using different variables sets $X$ and $Z$ for the two different roles. The set $X$ appears in Vnodes and entails conditional independences due to D-separation, with close similarity to tree graphical models (Proposition \ref{prop:indepSem}). The set $Z$ enable contextual independence through the selection of tree branches via sum nodes. 

Note also that using the set of paths $\pi$ allows to infer conditional
and contextual independences without need to check all the individual
subtrees, whose number can be exponentially larger than the cardinality
of $\pi$.

\subsubsection{Related Models}\label{subsec:Discussion}

SPGMs are closely related to hierarchical mixtures of trees (HMT) \citep{Jordan94hierarchicalmixtures} and \emph{generalize} them. Like SPGMs, HMTs allow a compact representation of mixtures of trees by using a hierarchy of "choice nodes"  where different trees are selected at each branch (as sum nodes do in SPGMs). While in HMTs the choice nodes separate the graph into disjoint branches and thus an overall tree structure is induced, however, SPGMs enable to use a DAG structure where parts of the graph towards the leaves can appear in children of multiple sum nodes. 

The probabilistic model encoded by SPGMs also has a close connection to
Gates \citep{GatesMinka2009} and the similarly structured Mixed Markov Models (MMM) \citep{MixedMarkovModels}. Gates enable contextual independence in graphical models by including the possibility of activating/deactivating factors based on the state of some context variables. Regarding SPGMs, the inclusion/exclusion of factors in subtrees $\enctr(\spgm|Z)$ depending on values of $Z$ can be seen as a gating unit that enables a full set of tree factors to be active, which suggests to identify a SPGM as a Gates model. 
On the other hand, SPGMs restrict inference to a family of models in which inference is tractable by construction, while inference in Gates generally is intractable. 
In addition, SPGMs allow an interpretation as mixture models, which is not the case for Gates and MMMs.  

Finally, we remark that SPGM subtrees closely
parallel the concept of SPN subnetworks, first described in \citet{SPNdiscrimLearning2012}
and then formalized in \citet{zhao16}. However, while subnetworks
in SPNs represent simple factorizations of the leaf distributions (which
can be represented as graphical models without edges), subtrees in
SPGMs represent tree graphical models (which include edges). 

\begin{figure}[tb]
\center{
\hfill{}
\subfloat[\label{fig:msgsumObs}Observed sum node (Eq. \eqref{eq:msgSumNode}).]{\includegraphics[height=2.2cm]{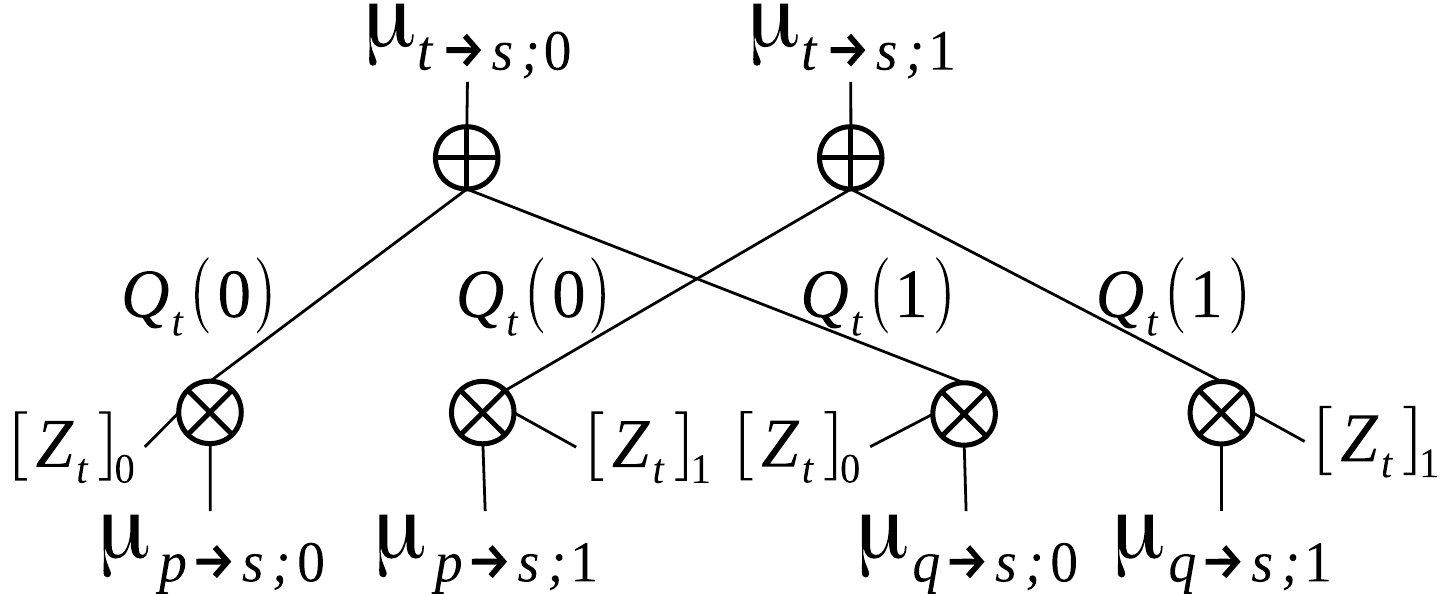}}
\hfill{}
\subfloat[\label{fig:msgsumUnobs}Unobserved sum node (Eq. \eqref{eq:msgSumUnobs}).]{\includegraphics[height=2.2cm]{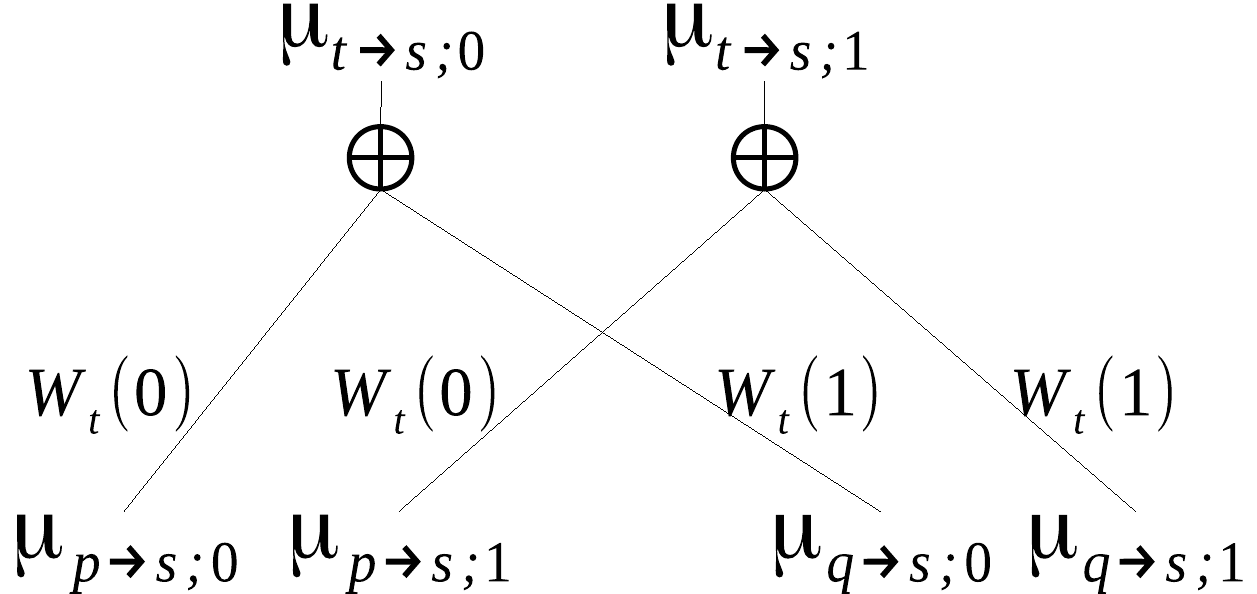}}
\hfill{}\hfill{}
\\
\hfill{}
\subfloat[\label{fig:msgprod}Product node (Eq. \eqref{eq:msgProd}).]{\includegraphics[height=2.2cm]{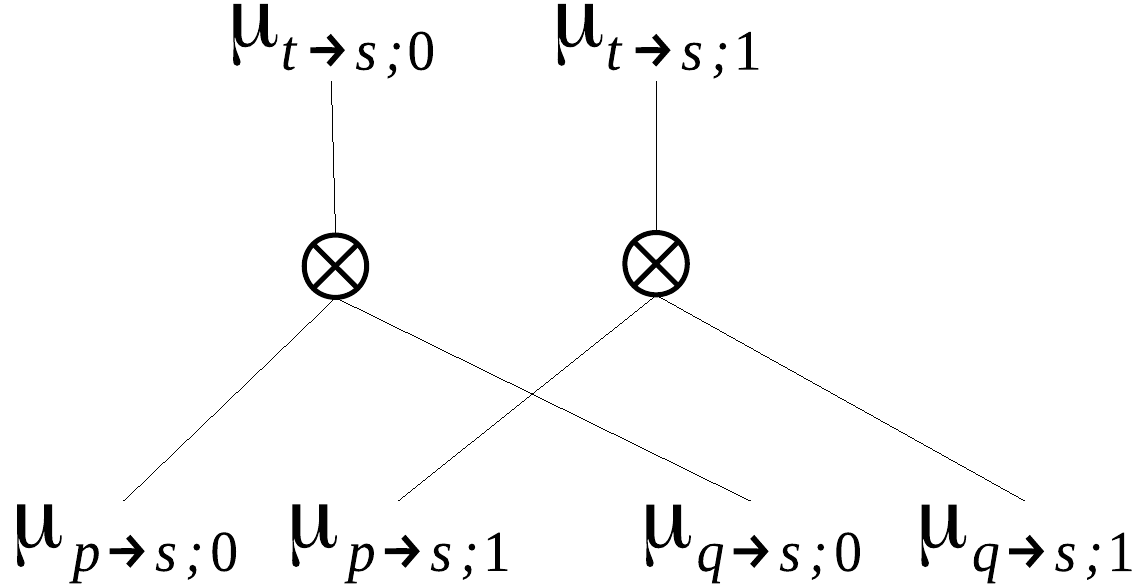}}
\hfill{}
\subfloat[ \label{fig:msgv}Vnode (Eq. \eqref{eq:msgPnode}).]{\includegraphics[height=3.7cm]{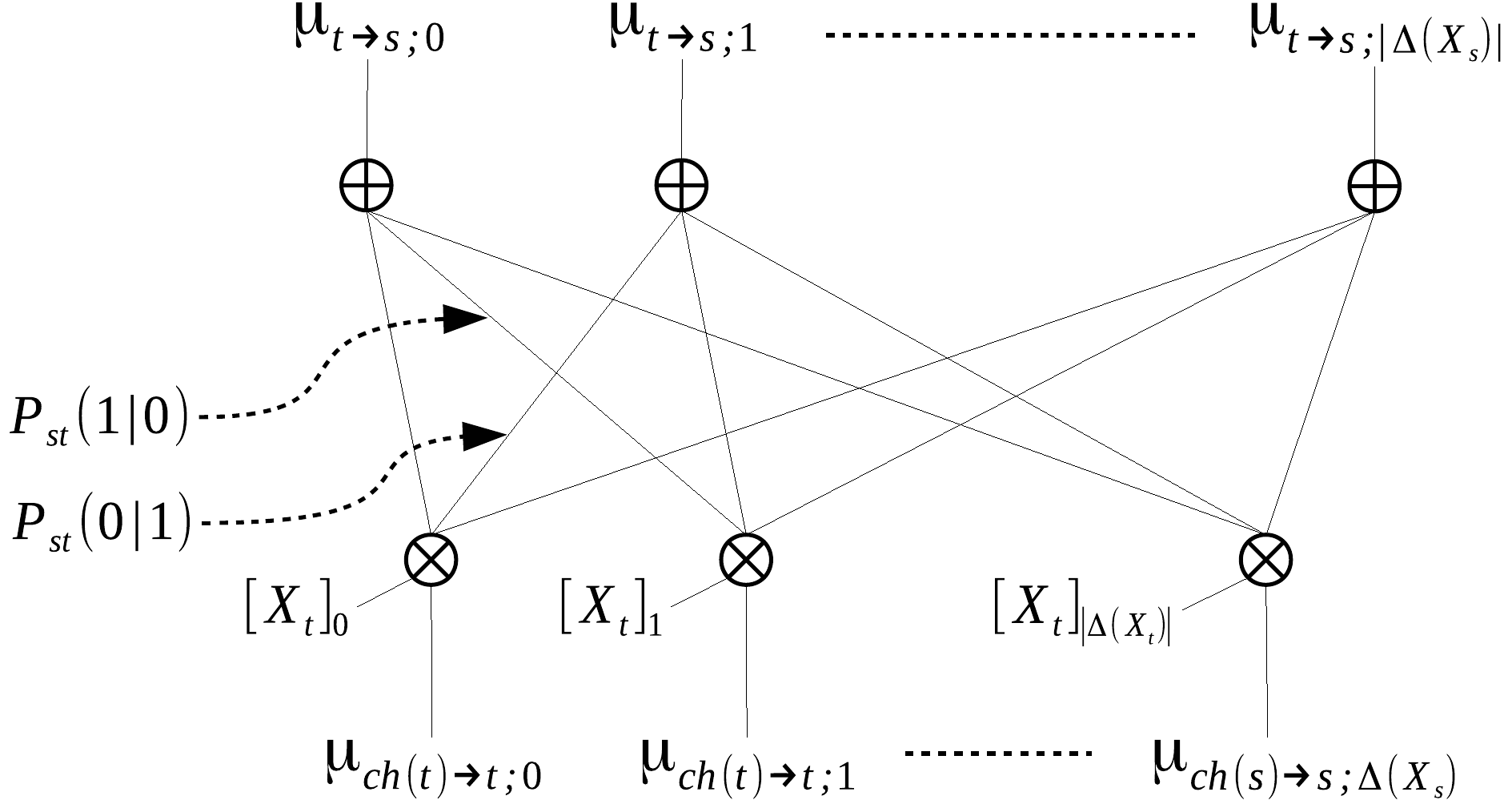}}
\hfill{}\hfill{}
}

\caption{Representation of message passing equations as SPNs. For better visibility, sum and product nodes are assumed to have only two children $p,q$ and with binary variables. \label{fig:Representation-of-messages}}
\end{figure}

\subsection{Interpretation as SPN \label{sec:asSpn}}

In this section we discuss SPGMs as a high level, fully general representation
of SPNs as defined by Definition \ref{def:SPN}.

\subsubsection{SPGMs encode SPNs}

\begin{proposition}\label{prop:spgmAsSpn}

The message passing procedure in $\spgm\left(X,Z\right)$ encodes
a SPN $S(X,Z)$.

\end{proposition}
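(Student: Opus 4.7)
The plan is to prove a slightly stronger statement by structural induction on the underlying DAG of $\spgm$, working from the leaves upward: every message $\mu_{t\to s;j}$ (or $\mu_{t\to \mathrm{root}}$) produced by Definition \ref{def:Message-passing-in} is itself an SPN in the sense of Definition \ref{def:SPN}, whose scope equals the scope of the sub-DAG of $\spgm$ rooted at $t$. Taking the root message yields the desired SPN $S(X,Z)$. The four message-passing equations \eqref{eq:msgSumNode}--\eqref{eq:msgPnode} (see Fig.~\ref{fig:Representation-of-messages}) then serve as a direct translation template to the three SPN construction rules of Definition \ref{def:SPN}.

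For the base case, a leaf Vnode $t$ sends $\mu_{t\to s;j} = \sum_{k\in\val(X_t)} P_{s,t}(k\mid j)\,[X_t]_k$ (with $P_s$ replacing $P_{s,t}$ at the top level). Each $[X_t]_k$ is an SPN by rule~1, the weights $P_{s,t}(k\mid j)$ are non-negative, and every summand has the common scope $\{X_t\}$; hence the message is an SPN by rule~3. For the inductive step, assume every incoming message is an SPN with the correct scope. Then: (i) A Product Node message $\mu_{t\to s;j}=\prod_{q\in ch(t)}\mu_{q\to s;j}$ is a product of SPNs whose scopes are pairwise disjoint by condition~\ref{enu:sgm2} of Definition \ref{def:ESPN}, so it is an SPN by rule~2. (ii) An unobserved Sum Node message is a weighted sum with non-negative weights $W_t(k)$; by condition~\ref{enu:sgm3} all children share the same scope, so the summands do too, and rule~3 applies. (iii) An observed Sum Node message writes each summand as $Q_t(k)\cdot\bigl([Z_t]_k\cdot \mu_{ch(t)_k\to s;j}\bigr)$; the inner product combines two SPNs whose scopes are disjoint, since $Z_t$ is excluded from the scope of every child by condition~\ref{enu:sgm3}, so rule~2 yields an SPN with scope $\{Z_t\}\cup\mathrm{scope}(ch(t)_k)$, common across $k$, and rule~3 then applies. (iv) A Vnode message $\mu_{t\to s;j}=\sum_k P_{s,t}(k\mid j)\,[X_t]_k\,\mu_{ch(t)\to t;k}$ combines, for each $k$, the indicator $[X_t]_k$ with $\mu_{ch(t)\to t;k}$; condition~\ref{enu:sgm1} ensures that $X_t$ is not in the scope of $ch(t)$, so the two factors have disjoint scopes and rule~2 gives an SPN, then rule~3 combines them with non-negative weights $P_{s,t}(k\mid j)$ into a sum whose summands all share the scope $\{X_t\}\cup\mathrm{scope}(ch(t))$.

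The main obstacle is the scope bookkeeping, in particular lining up the three structural conditions \ref{enu:sgm1}--\ref{enu:sgm2} of Definition \ref{def:ESPN} with the matching SPN requirements (common scope for sums, disjoint scopes for products). A secondary subtlety specific to the Vnode case is that the incoming message $\mu_{ch(t)\to t;k}$ is indexed by the parent-variable state $k$, effectively giving a different SPN for each value of $X_t$; these must be combined with the indicators $[X_t]_k$ before summing, which is exactly the purpose of the inner product step in (iv). Once this is accounted for, the induction closes and the root message is an SPN $S(X,Z)$ that computes the same value as $\spgm(X,Z)$, establishing the claim.
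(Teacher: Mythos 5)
Your proposal is correct and follows essentially the same route as the paper's proof in Appendix \ref{proof:asSpn}: a leaves-to-root induction showing each message encodes a valid SPN, with the same case analysis (observed/unobserved Sum Node, Product Node, Vnode) and the same use of the scope conditions of Definition \ref{def:ESPN} to verify the SPN rules of Definition \ref{def:SPN}. Your explicit handling of the base case and of the state-indexed incoming messages in the Vnode case matches the paper's Lemma \ref{lemma1} argument, so no further comparison is needed.
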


\begin{proof} In Appendix \ref{proof:asSpn}. \end{proof}

Note that in the encoded SPN, each SPGM message is represented by
a set of sum nodes, which can be seen immediately from Fig.~\ref{fig:Representation-of-messages}.
Each sum node in the set represents the value of the message corresponding
to a certain state of the output variable (namely, $\mu_{t\rightarrow s;j}$ for each $j$). This entails an increase in representation size (but not in inference cost) by a $|\val_{max}|^{2}$ factor. Note also that the role
of nodes for implementing conditional and contextual independence is lost during
the conversion to SPN, since both SPGM Vnode and SPGM sum node messages translate into a set of SPN sum nodes. 

\begin{proposition}\label{prop:sameExpress}

SPGMs are as \emph{expressive} as SPNs, in the sense that if a distribution $P(X,Z)$
can be represented as a SPN with inference cost $C$, then it can
also be represented as a SPGM with inference cost $C$ and vice versa.

\end{proposition}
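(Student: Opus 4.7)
The plan is to prove the two directions separately. For \textbf{SPGM $\to$ SPN}, Proposition \ref{prop:spgmAsSpn} already exhibits an SPN encoding of the message passing in any SPGM. What remains is a cost match: each SPGM message $\mu_{t \to s; j}$ is realized by $|\val_{max}|$ SPN sum nodes (one per value of $j$), and each such sum node has constant fan-in bounded by $|\val_{max}|$ for Vnode messages or by the arity of the surrounding node for sum and product messages. Summed over the $|\mathcal{V}|$ SPGM nodes, each with at most $M$ Vparents, this reproduces the $O(|\mathcal{V}| M |\val_{max}|^2)$ bound of Proposition \ref{proposition:linearCost}. This direction is essentially bookkeeping on top of the construction already given in the appendix.

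For \textbf{SPN $\to$ SPGM}, the plan is a direct structural conversion. Given a complete and decomposable SPN $S(X,Z)$, I would: (i) keep each product node as a product node and each sum node as an unobserved (resp.\ observed, when the corresponding latent mixture variable is designated as a context variable in $Z$) sum node with weights $W_s$ (resp.\ $Q_s$) inherited from the SPN; and (ii) identify each maximal sub-SPN whose scope is a single variable $\{X_s\}$, reduce it via distributivity to the canonical form $\sum_j w_j [X_s]_j$, and replace it with a leaf Vnode labeled $X_s$ carrying unary probability $P_s(j) = w_j$. Semantic equivalence follows inductively from bottom to top, because the top-level message of such a leaf Vnode is $\sum_k P_s(k)[X_s]_k$, exactly the value of the unary sub-SPN it replaces, while all surrounding sum and product operations are preserved verbatim.

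Structural validity of the resulting SPGM (Definition \ref{def:ESPN}, Conditions \ref{enu:sgm1}--\ref{enu:sgm2}) is inherited from the SPN: sum-node children have equal scope by completeness, product-node children have disjoint scopes by decomposability, and each introduced Vnode is a leaf so Condition \ref{enu:sgm1} is automatic. The cost bound is then immediate, since the constructed SPGM has no more nodes than $S$, its maximum Vparent count is $1$ at every Vnode, and inference cost $O(|\mathcal{V}| M |\val_{max}|^2)$ is therefore no larger than the cost of $S$. Combined with the forward direction, the two costs agree up to a constant factor and the class of representable distributions coincides.

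The main obstacle will be step (ii): justifying that every maximal single-variable sub-SPN is equivalent to $\sum_j w_j [X_s]_j$. A short lemma should settle it: within a scope restricted to $\{X_s\}$, any product node can have at most one non-trivial child, since two children sharing the non-empty scope $\{X_s\}$ would violate decomposability; hence the sub-SPN contains only sums and indicators of $X_s$, and repeated application of distributivity flattens it into a single weighted sum of indicators whose coefficients are computable by one bottom-up pass through the sub-SPN.
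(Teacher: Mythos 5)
Your argument is correct, and the two directions deserve separate verdicts. The SPGM-to-SPN direction coincides with the paper's: both simply invoke Proposition \ref{prop:spgmAsSpn}, and your extra cost bookkeeping agrees with the paper's own observation that the conversion inflates the representation by a $|\val_{max}|^{2}$ factor while leaving inference cost unchanged. The SPN-to-SPGM direction is where you genuinely diverge. The paper's construction is maximally lazy: it keeps every SPN sum and product node verbatim and replaces each indicator leaf $[A]_{a}$ by its own leaf Vnode whose ``unary probability'' is the degenerate table $P_{s}(A)=[A]_{a}$, so that the Vnode's top-level message $\sum_{k}P_{s}(k)[X_{s}]_{k}$ reproduces exactly that single indicator; no flattening is needed, and since all Vnodes are leaves no pairwise probabilities arise. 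Your construction instead collapses each maximal single-variable sub-SPN into one Vnode carrying an aggregated table $P_{s}(j)=w_{j}$. This buys a more compact and more semantic SPGM (genuine marginal tables rather than delta functions) at the price of your flattening lemma --- which is sound, since decomposability forces product nodes inside a single-variable scope to be effectively unary and a DAG of sums over indicators of one variable is a linear form in those indicators --- but it requires a little care in a shared DAG, where a node of scope $\{X_{s}\}$ may have some parents inside and some outside a single-variable region, so ``maximal'' regions can overlap and the replacement must be organized per maximal root. Your parenthetical about converting some SPN sum nodes to \emph{observed} sum nodes is also slightly off target: in a given SPN $S(X,Z)$ the variables in $Z$ appear as indicator leaves like any others, so the clean conversion makes every sum node unobserved and treats all variables as Vnode variables. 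Either construction establishes the proposition; the paper's sidesteps all of this bookkeeping.
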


\noindent
\emph{Proof Sketch.} Firstly, due to Proposition \ref{prop:spgmAsSpn}, 
SPNs are at least as expressive as SPGMs since they encode a SPN via message passing. Secondly, any SPN $S$ can
be transformed into an equivalent SPGM $\spgm$ by simply replacing
the indicator variable $[A]_{a}$ in SPN leaves with Vnodes $s$ associated
to variable $A$ and unary probability $P_{s}(A)=[A]_{a}$ (notice that pairwise probabilities do not appear). It is immediate to see that by 
doing so all the conditions of Definition \ref{def:ESPN} are satisfied,
and evaluating $\spgm$ with message passing yields $S$. As a consequence,
SPGMs are at least as expressive as SPNs. \qed

\subsubsection{Discussion}

Propositions \ref{prop:spgmAsSpn} and \ref{prop:sameExpress}, together with 
the connections to graphical models worked out above, enable an interpretation of a SPGM $\spgm$ as
a high-level representation of the encoded SPN $S$. These generalizes what the introductory example demonstrated by comparing Fig.~\ref{fig:spnVsGm-3} with Fig.~\ref{fig:spnVsGm-4}. 

The SPGM representation is more \emph{compact} than SPNs because employing variable nodes as in graphical models enables 
to represent conditional independences through message passing. Passing
from an SPGM to the SPN representation entails an increase in the model
size due to the expansion of messages
by a $N_{pa}|\val_{max}|^{2}$ factor (see Definition \ref{def:Message-passing-in}
and Fig. \ref{fig:Representation-of-messages}).\footnote{Note, however, that inference cost remains identical in $\spgm$ and $S$.} 

The SPGM representation allows a high level representation of \emph{conditional} independences through Vnodes and D-separation, and of \emph{contextual} independences through the composition of subtrees due to context variables $Z$.
In contrast, the roles of contextual and conditional independence in SPNs is hard to decipher (Fig. \ref{fig:spnVsGm-3}) because there is no distinction between nodes created by messages sent from SPGM sum nodes (implementing contextual independence) and messages generated from SPGM Vnodes (implementing conditional independence), both of which are represented as a set of SPN sum and product nodes (see Fig. \ref{fig:Representation-of-messages}). In addition, there is no distinction between contextual variables $Z$ and Vnode variables $X$. 

Note that SPGMs are not more compact than SPNs in situations in which there are no conditional independences that can be expressed by Vnodes. However, we postulate that the co-occurrence of conditional and contextual independences creates relevant application scenarios (as shown in Section \ref{sec:Structure-Learning}) and enables connections between SPNs and graphical models that can be exploited in future work.

Finally, the interpretation of SPGMs as SPN also allows to translate all methods and procedures available for SPNs to SPGMs. 
These include jointly computing the \emph{marginals} of all variables by derivation \citep{Darwiche2003}, 
with  time and memory linear cost in the number of edges in the SPN. 
In addition, Maximum a Posteriori queries can be computed simply by substituting the
sums in Eqs.~\eqref{eq:messages} by $\max$ operations. We leave the exploration of these aspects
to future work, since they are not central for our present discussion.


\section{Learning SPGMs\label{sec:Structure-Learning}}

In this section, we exploit the relations between graphical models and SPNs embodied by SPGMs and present an algorithm for \textit{learning the structure} of SPGMs.

\subsection{Preliminaries\label{subsec:Structure-Learning-In}}

\emph{Structure learning} denotes the problem of learning both the parameters of a probability distribution $P(X|\mathcal{G})$ and the structure of the underlying graph $\mathcal{G}$.
As both the GM and the SPN represented by a given SPGM due to Sections \ref{sec:asMixTrees} and \ref{sec:asSpn} involve the same graph, the problem is well defined from both viewpoints.

Let $X=\{X_j\}_{j=1}^M$ be a set of $M$ discrete variables. 
Consider a training set of $N$ i.i.d samples  $D = \left\{ x^{i}\right\}_{i=1}^{N} \subset \val(X)$, used for learning. 
Formally, we aim to find the graph $\mathcal{G}^*$ governing the distribution $P(X)$ which maximizes the log-likelihood
\begin{equation}\label{eq:LL}
\mathcal{G}^{*}(X)=\arg\max_{\mathcal{G}}\text{LL}(G)=\arg\max_{\mathcal{G}}\sum_{i=1}^{N}\ln P(x^{i}|\mathcal{G})
\end{equation}
or the weighted log-likelihood 
\begin{equation}\label{eq:wLL}
\mathcal{G}^{*}(X)=\arg\max_{\mathcal{G}} \text{WLL}(G,w)=\arg\max_{\mathcal{G}}\sum_{i=1}^{N} w_i \ln P(x^{i}|\mathcal{G}),\qquad w_{i} \geq 0,\quad i=1,2,\dotsc,N.
\end{equation}

\paragraph{Learning Tree GMs.}
Learning the structure of GMs generally is NP-hard. For discrete tree GMs however the maximum likelihood solution $T^{*}$
can be found with cost $O(M^{2}N)$ using the Chow-Liu
algorithm (\citet{ApproxDistrWithTrees}).

Let $X_s, X_t \in X$ be discrete random variables ranging of  assignments in the sets $\val(X_s), \val(X_t)$. Let $N_{s;j}$
and $N_{st;jk}$ respectively count the number of times $X_s$ appears
in state $j$ and $X_s,X_t$ appear jointly in state $j,k$ in the training set $D$. 
Finally, define empirical probabilities $\overline{P}_{s}(j)=N_{s;j}/N$ and
$\overline{P}_{st}(k|j)=N_{st;jk}/N_{s;j}$. The Chow-Liu algorithm comprises the following steps:
\begin{enumerate}
\item Compute the \emph{mutual information} $\mathbb{I}_{st}$ between all variable pairs $X_s,X_t$,
\begin{equation}\label{eq:empirical-Ist}
\mathbb{I}_{st}=\sum_{j\in\val(X_{s})}\sum_{k\in\val(X_{t})}\overline{P}_{s,t}(j,k)\ln\frac{\overline{P}_{s,t}(j,k)}{\overline{P}_{s}(j)\overline{P}_{t}(k)}.
\end{equation}

\item Create an undirected graph $\overline{\mathcal{G}}=(\overline{\mathcal{V}},\overline{\mathcal{E}})$ with adjacency matrix $\mathbb{I}=\{\mathbb{I}_{st}\}_{s,t \in \mathcal{V}}$ and compute the corresponding Maximum Spanning Tree $\overline{T}$.

\item  Obtain the directed tree $T^{*}$ by choosing an arbitrary
node of $\overline{T}$ as root and using empirical probabilities $\overline{P}_{s}(X_{s})$
and $\overline{P}_{st}\left(X_{t}|X_{s}\right)$ in place of corresponding terms in Eq.~\eqref{eq:P-directed-GM}. 
\end{enumerate}

If the weighted log-likelihood \eqref{eq:wLL} is used as objective function, the algorithm remains the same. The only difference concerns the use of \textit{weighted relative frequencies} for defining the empirical probabilities of \eqref{eq:empirical-Ist}: $\hat{N}_{s,j}=\frac{1}{\hat{N}_{w}}\sum_{i=1}^{N}\delta(x_{s}^{i}=j) w_{i}$
and $\hat{N}_{st,jk}=\frac{1}{\hat{N}_{w}}\sum_{i=1}^{N}\delta(x_{s}^{i}=j,x_{t}^{i}=k) w_{i}$,
where $\hat{N}_{w}=\sum_{i=1}^{N}w_{i}$ and $x_{s}^{i}$ denotes the state of variable $X_s$ in sample $x^i$. 


\paragraph{Learning Mixtures of Trees.}
We consider mixture models of the form $P(X)=\sum_{k=1}^{K}\lambda_{k} P_k(X|\theta^k)$
with tree GMs $P_{k}\left(X \right | \theta^k),\; k=1,\dotsc,K$,  corresponding parameters $\{\theta^k\}_{k=1}^{K}$ and non-negative mixture coefficients $\{\lambda_{k}\}_{k=1}^K$ satisfying $\sum_{k=1}^{K}\lambda_{k}=1$.
While inference with mixture models is tractable as long as it is tractable with its individual mixture components, maximum likelihood generally is NP hard. A local optimum can be found with Expectation-Maximization (EM) \citep{EMorigDempster}, whose pseudocode is shown in Algorithm
\ref{MT_EM}. The M-step (line $8$) involves the weighted
maximum likelihood problem and determines $\theta^{k}$ using the Chow-Liu algorithm described above. 

It is well known that each EM iteration does not decrease the log-likelihood, hence it approaches a local optimum. 


\begin{algorithm}[tb]
\caption{EM for Mixture Models(${P_{k}},{\lambda_{k}},D$)} 
\label{MT_EM}
\begin{algorithmic}[1] 
\Input Initial model $P(X|\theta )=\sum_k{\lambda_{k} P_{k}(X)}$, training set $D$
\Output  ${P_{k}(X)}$, $\lambda_{k}$ locally maximizing $\sum_{i=1}^{N}\ln P(x^{i}|\theta)$
\Repeat 

	\ForAll{ $k\in{1...K},i\in{1...N}$} // E-step 
		\State $\gamma_{k}(i)\leftarrow\frac{\lambda_{k} P_{k} (x_{i})}{\sum_{k'}\lambda_{k'} P_{k'}(x_{i})}$ 
		\State $\Gamma_{k} \leftarrow \sum_{i=1}^{N} {\gamma_{k}(i)} $ 
		\State $w_{i}^{k}\leftarrow \gamma_{k}(i) / \Gamma_{k}$ 
	\EndFor 


	\ForAll {$k\in{1...K}$ } // M-step 
		\State $\lambda_{k}\leftarrow\Gamma_{k}/N$ 
		\State $\theta^{k}\leftarrow\arg\max_{\theta^{k}}\sum_{i=1}^{N}w_{i}^{k}\ln P_{k}(x^{i}|\theta^{k})$\label{eq:mstep} 
	\EndFor

\Until{convergence} 

\end{algorithmic}
\end{algorithm}

\paragraph{Learning SPNs.}
Let $S(X)$ denote a SPN, $\mathcal{G}=(\mathcal{V},\mathcal{E})$ and graph with edge weights. Both \textit{structure learning} (optimizing $\mathcal{G}$ and $W$) and \textit{parameter learning} (optimizing $W$ only) are NP-hard in SPNs \citep{ArithmCircuitsAndNetworkPoly2}. Hence, only algorithms that seek a local optimum can be devised.
\begin{description}
\item{\textbf{Parameter learning}} can be performed by directly applying the EM iteration for mixture models, while efficiently exploiting the interpretation of SPNs as a large mixture model with shared parts \citep{DesanaS16}. 

To describe EM for SPNs, which will be used in a later section, we need some additional notation. Consider a node $q\in \mathcal{V}$, and let $S_q$ denote the sub-SPN having node $q$ as root.  If $q$ is a Sum Node, then by Definition \ref{def:SPN} a weight $w_j^q$ is associated to each edge $(q,j)\in \mathcal{E}$.  
Note that evaluating $S(X=x)$ entails computing $S_q(X=x)$ for each node $q\in \mathcal{V}$ due to the recursive structure of SPNs. Hence $S(x)$ is function of $S_q(x)$. The derivative  ${\partial S\left(x\right)}/{\partial S_{q}(x)}$ can be computed with a root-to-leaves pass requiring $O(|\mathcal{E}|)$ operations \citep{SPN2011}. 

With this notation, the EM algorithm for SPNs iterates the following steps:
\begin{enumerate}
\item \textit{E step}. Compute 
for each Sum Node $q\in \mathcal{V}$ and each $j \in ch(q)$  
\begin{equation}
\beta_{j}^{q}=w_{j}^{q}\sum_{n=1}^{N}S\left(x^{n}\right)^{-1}\frac{\partial S\left(x^{n}\right)}{\partial S_{q}}S_{j}\left(x_{n}\right).
\end{equation}

\item \textit{M step}. Update weights for each Sum Node $q\in \mathcal{V}$ and each $j \in ch(q)$ by $w_{j}^{q} \leftarrow \beta_{j}^{q}/\sum_{(q,i)\in \mathcal{E}}\beta_{i}^{q}$, where $\leftarrow$ denotes assignment of a variable. 

\end{enumerate}
Since all the required quantities can be computed in $O(|\mathcal{E}|)$ operations, EM has a cost $O(|\mathcal{E}|)$ per iteration (the same as an SPN evaluation). 

In some SPN applications, weights are shared among different edges (see e.g. \citet{SPNdiscrimLearning2012}, \citet{SPNlangMod2014} and \citet{amer2015spnActivityRec}). Then the procedure still maximizes the likelihood locally. Let $\overline{V} \subseteq V$ be a subset of Sum Nodes with shared weights, in the sense that the set of weights $\{w^q_j\}_{j\in ch(q)}$ associated to incident edges $(q,j)\in \mathcal{E}$ is the same for each node $q \in \overline{V}$. The EM update of a shared weight $w^q_j$ reads (cf.~ \citet{DesanaS16})
\begin{equation}\label{eq:EMparams}
w^q_j \leftarrow \frac{\sum_{q\in \overline{V}}\beta_{j}^{q}}{\sum_{i}\sum_{q\in \overline{V}}\beta_{i}^{q}},\qquad (q,j) \in \mathcal{E}.
\end{equation}

\item{\textbf{Structure learning}}
can be more conveniently done with SPNs than with graphical models, because tractability of inference is always guaranteed and hence not a limiting factor for learning the model's structure. Several greedy algorithms for structure learning were devised (see Section \ref{sec:Empirical-Evaluation}), which established SPNs as state of the art models for the estimation of probability distributions. 
We point out that most approaches employ a recursive procedure in which children of sum and Product Nodes are generated on \emph{disjoint} subsets of the dataset, thus obtaining a \textit{tree SPN}, while SPNs can be more generally defined on DAGs. Recently, \citep{Rahman16mergeSPN} discussed the limitations of using tree structured SPNs as opposed to DAGs, and addressed the problem of post-processing SPNs obtained with previous methods, by merging similar branches so as to obtain a DAG. 

Our method proposed in Section \ref{sec:Structure-Learning-alg} is the first one that directly estimates a DAG-structured SPN. 
\end{description}

\subsection{Parameter Learning in SPGMs  }

Parameters learning in a SPGM $\spgm$ can be done by interpreting $\spgm$ as a SPN encoded by message passing
(Proposition \ref{prop:sameExpress}) and directly using any available SPN parameter learning method (these include EM seen in Section \ref{subsec:Structure-Learning-In} and others \citep{SPNdiscrimLearning2012}). Hence, we do not discuss this aspect further. 

Note however that Sum Node messages (Definition \ref{def:Message-passing-in}) require
weight sharing between the SPN Sum Nodes. EM for SPNs with weight tying
is addressed in Section \ref{subsec:Structure-Learning-In}. 

\subsection{Structure Learning in SPGMs  }\label{sec:Structure-Learning-alg}

\begin{figure}[tb]
\quad{}\quad{}\quad{}\quad{}\quad{}\quad{}
\subfloat[\label{fig:msgprod-1}]{ \includegraphics[height=5cm]{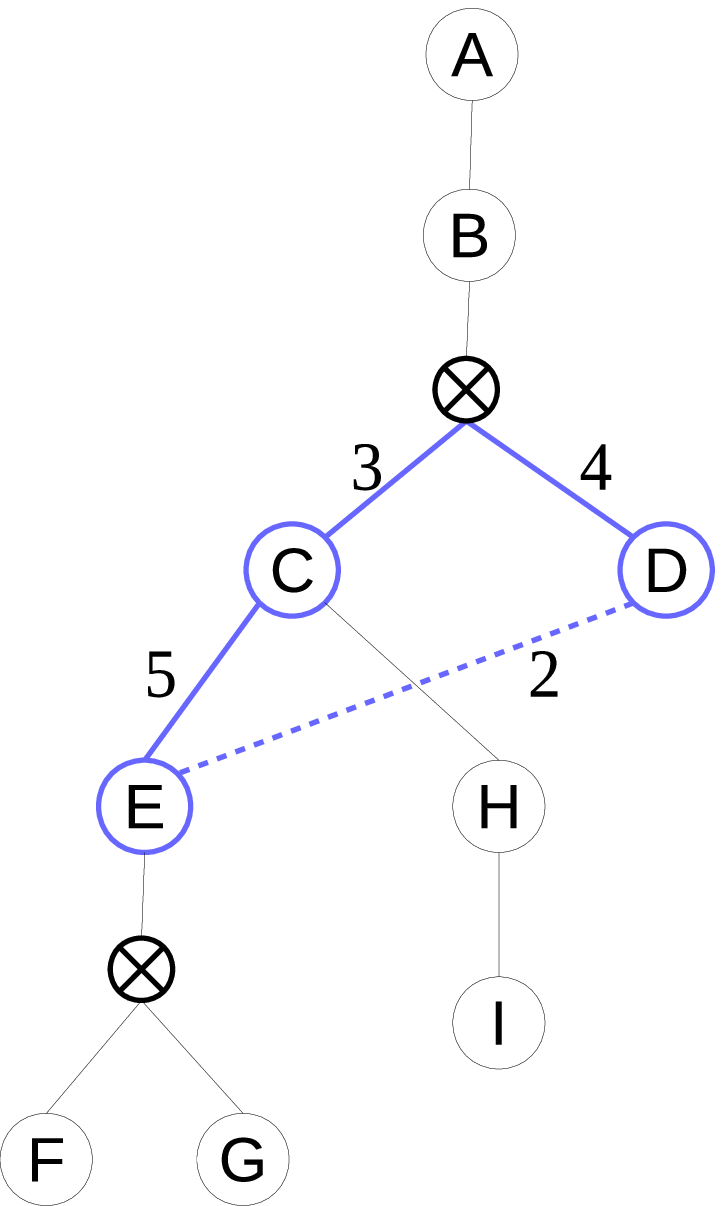} }\hfill{}
\subfloat[\label{fig:msgprod-2}]{ \includegraphics[height=5cm]{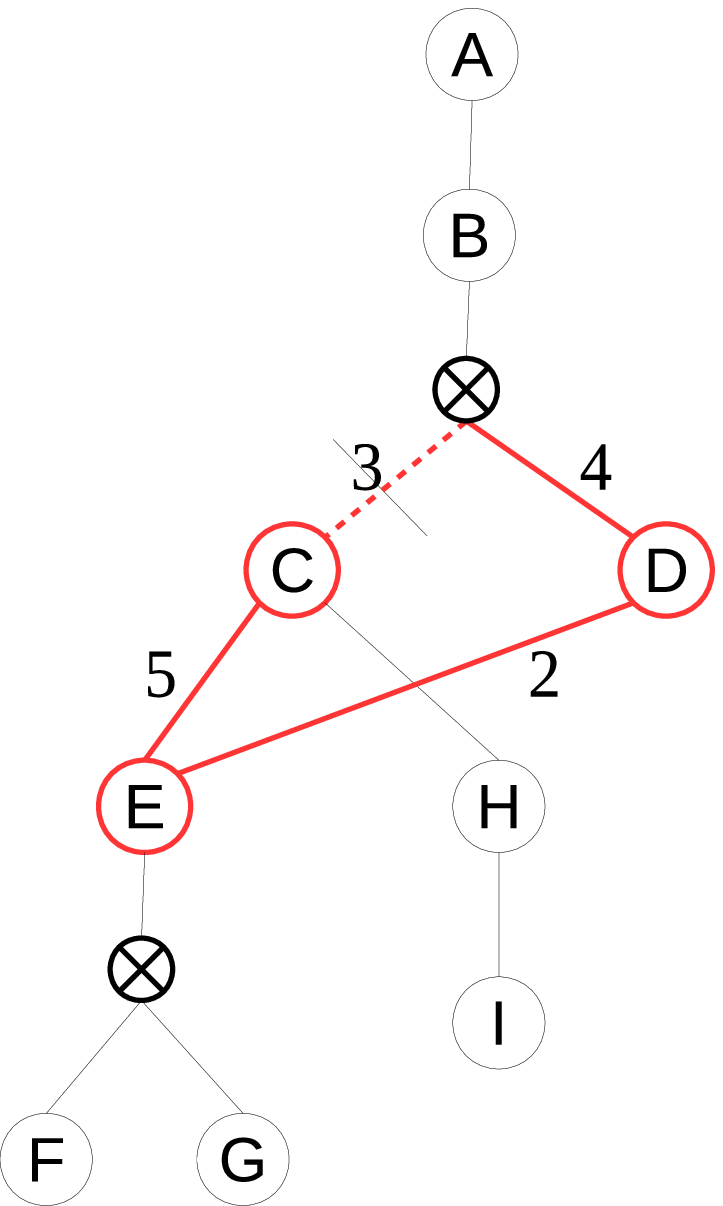} }\hfill{}
\subfloat[\label{fig:msgprod-3}]{ \includegraphics[height=5cm]{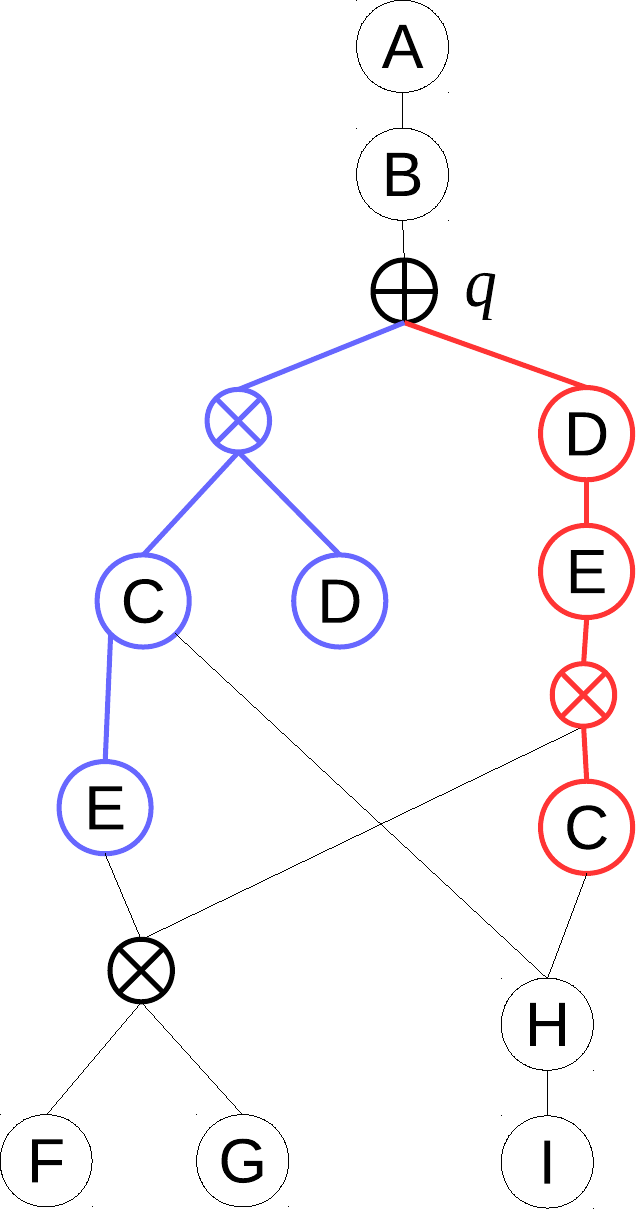} }
\quad{}\quad{}\quad{}\quad{}\quad{}\quad{}\caption{Graphical representation of edge insertion. For simplicity, we start
with a SPGM representing a single MST. Fig. \ref{fig:msgprod-1}:
Inserting $(D,E)$ creates a cycle (blue). Fig. \ref{fig:msgprod-2}:
Removing the minimum edge in the cycle (except $(D,E)$) gives the
MST containing $(D,E)$. Fig. \ref{fig:msgprod-3}: The red MST is
inserted into $\spgm$ sharing the common parts. \label{fig:mixMaxSteps34}}
\end{figure}

Structure learning is an important aspect of tractable inference models, 
thus it is crucial to provide a structure learning for SPGMs. Furthermore, 
it is useful to provide a first example of how the new \emph{connections} between GMs and SPNs can be exploited in practice.

We propose a structure learning algorithm based on the Chow-Liu algorithm
for trees (Section \ref{subsec:Directed-Graphical-Models}). We start
observing that edges with large Mutual Information can be excluded
from the Chow-Liu tree, thus losing relevant correlations between variables
(Fig.  \ref{fig:learnEx-2}, left). An approach to address this problem, inspired by  \citep{LearningWithMixturesOfTrees}, is
to use a mixture of spanning trees such that the $k$-best edges are
included in at least one tree. We anticipate that the trees obtained in this way \emph{share a large
part} of their structure (Fig.  \ref{fig:learnEx-3}), hence the mixture
can be implemented efficiently as a SPGM.

\paragraph{Algorithm Description.}

We describe next \textit{LearnSPGM}, a procedure to learn structure and parameters of a SPGMs which locally maximizes the weighted log-likelihood \eqref{eq:wLL}. We use the notation of Section \ref{subsec:Structure-Learning-In}. 

The algorithm learns a SPGM $\spgm$ in three main steps (pseudocode in Algorithm \ref{alg:LearnSPGM}). First, $\spgm$ is initialized to encode the Chow-Liu tree $T^{*}$ (Fig. \ref{fig:msgprod-1}) -- that is, $\enctr(\spgm)$ includes a single subtree (Definition \ref{def:ESPNsubn}) $\tau*$ corresponding to $T^{*}$. Then, we order each edge $(s,t)\in \overline{\mathcal{E}}$ which was not included in $T^{*}$ by decreasing mutual information $\mathbb{I}_{st}$, collecting them in the ordered set $Q$. Finally, we insert each edge $(s,t)\in Q$ in $\spgm$ with the sub-procedure \textit{InsertEdge} described below, until log-likelihood convergence or a given maximum size of $\spgm$ is reached.

\textit{InsertEdge($\spgm,T^{*},(s,t)$)} comprises three steps:
\begin{enumerate}

\item Compute the maximum spanning tree over $\overline{\mathcal{G}}$ which includes $(s,t)$, denoted as $T_{st}$. 
Finding $T_{st}$ can be done efficiently by first inserting edge $(s,t)$
in $T^{*}$, which creates a cycle $\mathcal{C}$ (Fig.  \ref{fig:msgprod-1}),
then removing the minimum edge in $\mathcal{C}$\emph{ except} $(s,t)$
(Fig.  \ref{fig:msgprod-2}). The potentials in $T_{st}$ are set as empirical
probabilities $\overline{P}_{st}$ according to the Chow-Liu algorithm. 
Notice that trees $T^{*}$ and $T_{st}$ have identical
structure up to $\mathcal{C}$ and can then be written as $T^{*}=T^{1}\cup\mathcal{C}^{'}\cup T^{2}$
and $T_{st}=T^{1}\cup\mathcal{C}^{''}\cup T^{2}$, where $\mathcal{C}^{'}=T^{*}\cap\mathcal{C},\mathcal{C}^{''}=T_{st}\cap\mathcal{C}$. 

\item Add $T_{st}$ to the set $\enctr(\spgm)$ by sharing the structure in common with $T^{*}$ ($T^{1}$ and $T^{2}$ above). 
To do this, first identify the edge $(s,t)$ s.t. $s \in T^{1}$ and $t \in \mathcal{C}^{'}$ (e.g. $(B,C)$ in Fig. \ref{fig:msgprod-2}). Then, create a non-Observed Sum Node $q$, placing it between $s$ and $t$, unless such node is already present due to previous iterations (see $q$ in Fig. \ref{fig:msgprod-3}). 

At this point, one of the child branches of $q$ contains $\mathcal{C}^{'}\cup T^{2}$. We now add a new child branch containing $\mathcal{C}^{''}$ (Fig. \ref{fig:msgprod-3}). Finally, we connect nodes in $\mathcal{C}^{''}$ to their descendants in the shared section $ T^{2}$. 
The insertion maintains $\spgm$ valid since the $X$ and $Z$-scope of any
node in $\spgm$ does not change. Furthermore, inserting $\mathcal{C}^{''}$ in this way we add a subtree representing  $T_{st}$ in $\enctr(\spgm)$, selected by choosing the child of $s$ corresponding to $\mathcal{C}^{''}$. 

\item Update the weights of incoming edges of the Sum Node $q$ by using Eq. \ref{eq:EMparams} on the set of SPN nodes generated by $q$ with Eq.\eqref{eq:msgSumNode} during the conversion to SPN (see Proposition \ref{prop:spgmAsSpn}).\label{updateWstep}  

\end{enumerate}

\paragraph{Convergence}
It is possible to prove that the log-likelihood \emph{does not decrease} at each insertion, and thus the initial Chow-Liu tree provides a \emph{lower bound} for the log-likelihood. 

\begin{proposition} Each application of \textit{InsertEdge} does not decrease the log-likelihood of the SPGM (Eq. \eqref{eq:wLL}). \label{prop:convergence}
\end{proposition}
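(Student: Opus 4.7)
My plan is to show that (i) the structural modification of step 2 embeds the pre-insertion SPGM $\spgm$ as a special case of the post-insertion SPGM under a particular weight configuration, and (ii) the EM-style update of step 3 can only increase the weighted log-likelihood beyond that baseline.

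To make (i) precise, I will write $\spgm^+$ for the structure after step 2 but before step 3 and note that, by Proposition \ref{prop:spgmIsMix2-1}, $\spgm^+$ encodes a mixture $\sum_\tau \lambda_\tau P_\tau$ whose coefficients $\lambda_\tau$ depend linearly on the weight vector $w^q$ at the sum node $q$. Setting the weight of $q$'s new child (the one leading into $\mathcal{C}''$) to $0$ and keeping the mass on $q$'s old children at its pre-existing value zeroes out exactly those $\lambda_\tau$ whose subtrees traverse $\mathcal{C}''$, leaving the remaining $\lambda_\tau$ and their $P_\tau$ identical to those of $\spgm$. Hence I obtain a weight configuration $w^\star$ on the simplex of $q$'s weights with $\text{WLL}(\spgm^+(w^\star)) = L_{\text{old}}$. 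The links added from $\mathcal{C}''$ back into the shared section $T^{2}$ only add new Vparents to the nodes of $T^{2}$; with $W_q(\text{new}) = 0$ the additional messages they generate are multiplied by zero at $q$ and therefore do not perturb the value computed above $q$, so the equality of the two evaluations is exact.

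For (ii) I will invoke EM monotonicity. The update \eqref{eq:EMparams}, restricted to $q$'s weights (with all other SPGM parameters held fixed and weight tying across the SPN sum nodes arising from $q$'s message in the sense of Section \ref{sec:asSpn}), is the standard EM step for a tied mixture-weight block and hence is non-decreasing in the log-likelihood. Moreover, viewed as a function of $w^q$ alone, the weighted log-likelihood is a sum of $\log$ of functions linear in $w^q$ and is therefore concave on the simplex, so iterated EM converges to the global maximum of $\text{WLL}$ along that coordinate block. Since this global maximum dominates $\text{WLL}(\spgm^+(w^\star)) = L_{\text{old}}$, the updated weights satisfy $\text{WLL}(\spgm^+) \geq L_{\text{old}}$, which is the desired claim.

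The hard part will be bridging the gap between the single-pass description of step 3 and the convergence argument above: a single EM iteration starting from an arbitrary $w^q$ only guarantees improvement over its own starting likelihood, not over $L_{\text{old}}$, and the degenerate choice $w^q = w^\star$ is a fixed point of \eqref{eq:EMparams}, so one cannot literally begin there. The cleanest way to close this gap is to interpret step 3 as iterated EM until convergence, initialized in the relative interior of the simplex, and to appeal to the concavity argument above to conclude that the converged value is the global maximum and therefore dominates $L_{\text{old}}$. A secondary subtlety I would have to address is justifying concavity under weight tying: this follows from decomposability, which ensures that along every path from $q$ to the root the local polynomial in $w^q$ remains linear, so the log-likelihood is a sum of concave terms in $w^q$.
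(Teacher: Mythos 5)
Your proof follows the same route as the paper's: the pre-insertion model is recovered inside the enlarged parameter space by setting the new child's weight at $q$ to zero, and the weight update of step 3 attains at least the optimum of the weighted log-likelihood over $q$'s weight block with all other parameters fixed, hence at least the old value. The paper simply asserts (citing \citet{DesanaS16}) that Eq.~\eqref{eq:EMparams} delivers these block-optimal weights, whereas you correctly flag that a single EM step from an arbitrary initialization only improves on its own starting likelihood --- not on $L_{\text{old}}$ --- and that the zero-weight configuration is an EM fixed point; your repair (read step 3 as iterated EM from an interior initialization and use concavity of the log-likelihood in the tied weight block, which holds because decomposability keeps the SPN value affine in that block, to conclude convergence to the block optimum) is a sound and more careful justification of precisely the step the paper takes for granted.
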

\emph{Proof.} \textit{InsertEdge} adds the branch $\mathcal{C}^{'}\cup T^{2}$ to Sum Node $q$, hence it adds a new incident edge and a corresponding weight to $q$.  
We now note that computing weight values using Eq. \eqref{eq:EMparams} (step $3$  in \textit{InsertEdge} above) allows to find the optimal weights of edges incoming to $q$ considering the other edges fixed, as shown e.g. in \citet{DesanaS16}.\footnote{However,
this is not the globally optimal solution when the other parameters are free.} 
Since the new locally optimal solution includes the weight configuration of the previous iteration, which is simply obtained by setting the new edge weight to $0$ and keeping the remaining weights, the log-likelihood does not increase at each iteration.

\begin{proposition} The Chow Liu tree log-likelihood is a lower bound for the log-likelihood of a SPGM obtained with \textit{LearnSPGM}. 
\end{proposition}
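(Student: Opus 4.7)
The plan is to obtain this as an immediate corollary of Proposition \ref{prop:convergence} together with the explicit initialization in \textit{LearnSPGM}. The overall structure of the proof is a straightforward induction on the number of calls to \textit{InsertEdge}.

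First I would verify the base case: at initialization, \textit{LearnSPGM} sets the SPGM $\spgm_0$ so that it encodes the Chow-Liu tree $T^{*}$, meaning $\enctr(\spgm_0)$ contains the single subtree $\tau^{*}$ corresponding to $T^{*}$ (with the Chow-Liu empirical potentials $\overline{P}_s$, $\overline{P}_{st}$). By Proposition \ref{prop:spgmIsMix2-1}, evaluating $\spgm_0$ on a sample $x^{i}$ computes exactly $T^{*}(x^{i})$, so $\mathrm{WLL}(\spgm_0,w)=\sum_{i=1}^{N} w_i \ln T^{*}(x^{i})=\mathrm{WLL}(T^{*},w)$.

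Next I would perform the induction step. Let $\spgm_k$ denote the SPGM after $k$ edge insertions. Proposition \ref{prop:convergence} asserts that a single application of \textit{InsertEdge} does not decrease the (weighted) log-likelihood, hence $\mathrm{WLL}(\spgm_{k+1},w)\ge \mathrm{WLL}(\spgm_k,w)$ for every $k\ge 0$. Iterating this inequality from the initial model through the final iterate $\spgm_K$ returned by \textit{LearnSPGM} gives $\mathrm{WLL}(\spgm_K,w)\ge \mathrm{WLL}(\spgm_0,w)=\mathrm{WLL}(T^{*},w)$, which is the required bound.

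The only subtlety I would want to double-check is that the base case really reproduces the Chow-Liu log-likelihood with the right parameters; this is a matter of inspecting the initialization and the correspondence between the root unary and pairwise conditional probabilities of the Vnodes in $\spgm_0$ and the factors in the directed tree $T^{*}$ produced by the Chow-Liu procedure in Section \ref{subsec:Structure-Learning-In}. I do not expect any genuine obstacle: all the heavy lifting lives in Proposition \ref{prop:convergence} (optimality of the EM weight update when all other parameters are frozen, and the fact that the previous weight configuration is feasible in the enlarged parameter set), and the present statement is just the telescoping consequence of that monotonicity.
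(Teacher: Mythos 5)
Your proposal is correct and follows essentially the same route as the paper: the paper's own proof is exactly the observation that the claim follows from Proposition \ref{prop:convergence} together with the initialization of the SPGM as the Chow--Liu tree $T^{*}$, and your explicit induction/telescoping plus the base-case check merely spell out that one-line argument in more detail.
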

\emph{Proof.} Follows immediately from Proposition \ref{prop:convergence}, noting that the SPGM is initialized as the Chow Liu tree $T^{*}$.

\paragraph{Complexity.}
Steps $1$ and $2$ of \textit{InsertEdge} are inexpensive, only requiring a number of operations linear in the number of edges of the Chow Liu tree $T^*$. The per iteration complexity of \textit{InsertEdge} is dominated by step $3$, in which
the computation of weights through Eq. \ref{eq:EMparams} requires evaluating the SPGM for the whole dataset. 
Although evaluation of a SPGM is efficient (see Proposition \ref{proposition:linearCost}), this can still be too costly for large datasets. We found empirically that assigning  weights proportionally to the mutual information of the inserted edge provides a reliable empirical alternative, which we use in experiments.

\subsection{Learning Mixtures of SPGMs\label{subsec:Learning-Mixtures-of}}

\textit{LearnSPGM} is apt at representing data
belonging to a single cluster, since (similarly to Chow-Liu trees) the edge weights are computed from a single mutual information matrix estimated on the whole dataset. To model densities with natural clusters one can use mixtures of SPGMs trained with  EM. We write a mixture of SPGMs in the form $\sum_{k=1}^{K}\lambda_{k}P_{k}(X|\theta^{k})$, where each term $P_{k}(X|\theta^{k})$ is the probability distribution encoded by a SPGM $\spgm_k$ (Eq. \ref{mixture-ptau}) governed by parameters $\theta^{k} = \{\mathcal{G}^k, \{P^k_{st}\}, \{W^k_{s}\}, \{Q^k_{s}\}\}$ (Definition \ref{def:ESPN}).

EM can be adopted for any mixture model as long as the weighted
maximum log-likelihood in the M-step can be solved (alg. \ref{MT_EM} line $8$). In addition, \citet{NealHinton98_incremEM}
show that EM converges as long as the M-step can be at least partially
performed, namely if it possible to find parameters $\overline{\theta}_k$ such that (see Eq. \ref{eq:wLL})
\begin{equation}
\text{WLL}({P_{k}}(X|\overline{\theta}_k),w^k) \ge \text{WLL}({P_{k}}(X|\theta^{k}),w^k).\label{eq:conv}
\end{equation}

These observation suggest to use \textit{LearnSPGM} to approximately solve the weighted maximum likelihood problem. 
However, while \textit{LearnSPGM} ensures that the Chow-Liu tree lower bound always increases, the actual weighted log-likelihood can decrease. To satisfy Eq. \eqref{eq:conv}, we employ the simple shortcut of rejecting
updates of component $\spgm_{k}$ when Eq. \eqref{eq:conv} is not
satisfied for $\theta_{new}^{k}$ (Algorithm \ref{alg:SPGM_MIX_EM} line $9$). Doing this, the following holds: 

\begin{proposition} The log-likelihood of the training set does not decrease at each iteration of EM for Mixtures of SPGMs (Alg. \ref{alg:SPGM_MIX_EM}). 
\end{proposition}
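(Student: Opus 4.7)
The plan is to cast the claim as an instance of the generalized EM (GEM) convergence argument of \citet{NealHinton98_incremEM}, whose hypothesis is exactly what the algorithm's rejection step is designed to enforce. First I would recall the standard decomposition of the data log-likelihood for a mixture model: introducing latent cluster indicators $Z^i \in \{1,\dotsc,K\}$ and any distribution $q(Z)$ over them, one has
\begin{equation*}
\sum_{i=1}^{N}\ln\Big(\sum_{k=1}^{K}\lambda_{k} P_{k}(x^{i}|\theta^{k})\Big)
\;=\; \mathcal{F}(q,\theta) + \mathrm{KL}\bigl(q\,\|\,p(Z\mid X,\theta)\bigr),
\end{equation*}
where $\mathcal{F}(q,\theta)=\mathbb{E}_{q}[\ln P(X,Z\mid\theta)] + H(q)$ is the usual free energy. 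The E-step of Algorithm~\ref{alg:SPGM_MIX_EM} sets $q^{i}(k)=\gamma_{k}(i)$ so that $\mathrm{KL}=0$ at $\theta=\theta^{\mathrm{old}}$; hence log-likelihood monotonicity reduces to showing $\mathcal{F}(q,\theta^{\mathrm{new}})\ge \mathcal{F}(q,\theta^{\mathrm{old}})$.

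Next I would expand $\mathcal{F}(q,\theta)$ for a mixture, observing that it decouples into a term involving the mixture weights $\{\lambda_{k}\}$ and a sum of per-component terms:
\begin{equation*}
\mathcal{F}(q,\theta) \;=\; \underbrace{\sum_{k=1}^{K}\Gamma_{k}\ln\lambda_{k}}_{A(\lambda)} \;+\; \sum_{k=1}^{K}\Gamma_{k}\,\text{WLL}\bigl(P_{k}(\cdot\mid\theta^{k}),w^{k}\bigr) \;+\; \text{const},
\end{equation*}
with $\Gamma_{k}=\sum_{i}\gamma_{k}(i)$ and $w_{i}^{k}=\gamma_{k}(i)/\Gamma_{k}$ as in Algorithm~\ref{MT_EM}. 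The update $\lambda_{k}\leftarrow\Gamma_{k}/N$ is the closed-form maximizer of $A(\lambda)$ subject to $\sum_{k}\lambda_{k}=1$, so it cannot decrease the first sum. For each component $k$, the rejection rule on line~$9$ of Algorithm~\ref{alg:SPGM_MIX_EM} keeps $\theta^{k}$ unchanged unless $\text{WLL}(P_{k}(\cdot\mid\theta_{\mathrm{new}}^{k}),w^{k}) \ge \text{WLL}(P_{k}(\cdot\mid\theta^{k}),w^{k})$, so each component-wise term in the second sum is non-decreasing.

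Adding the per-component inequalities and the $\lambda$-update inequality yields $\mathcal{F}(q,\theta^{\mathrm{new}})\ge \mathcal{F}(q,\theta^{\mathrm{old}})$. Combining this with the E-step identity $\sum_{i}\ln P(x^{i}\mid\theta^{\mathrm{old}})=\mathcal{F}(q,\theta^{\mathrm{old}})$ and the bound $\sum_{i}\ln P(x^{i}\mid\theta^{\mathrm{new}})\ge \mathcal{F}(q,\theta^{\mathrm{new}})$ (since $\mathrm{KL}\ge 0$) gives the desired monotonicity of the training log-likelihood.

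The only delicate point, and the one I would be most careful about, is justifying that Eq.~\eqref{eq:conv} is exactly the per-component inequality needed by GEM. This requires checking that the weights $w^{k}$ passed to \textit{LearnSPGM} in the M-step are precisely the normalized responsibilities used in the decomposition of $\mathcal{F}$, so that the rejection criterion tests the correct quantity. Everything else is bookkeeping with the standard GEM argument and the fact that $\lambda_{k}\leftarrow\Gamma_{k}/N$ is the exact maximizer in $\lambda$; no convexity or tree-structure-specific property of the SPGM components is needed for monotonicity.
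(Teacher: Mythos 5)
Your proof is correct and follows essentially the same route as the paper, which establishes the claim by appealing to the generalized-EM result of \citet{NealHinton98_incremEM}: the rejection rule on line 9 enforces exactly the partial M-step condition of Eq.~\eqref{eq:conv}, and the $\lambda_k\leftarrow\Gamma_k/N$ update exactly maximizes the mixture-weight term. The paper leaves the free-energy bookkeeping implicit by citation, whereas you spell it out explicitly; the decomposition and the per-component/weight split you use are the standard ones and are correct.
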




\begin{algorithm}[t]
\caption{\textit{LearnSPGM} $\left(D=\{x^i\},\{w^i\}\right)$}
\label{alg:LearnSPGM}
\begin{algorithmic}[1]
\Input samples $D$, optional sample weights $w$ 
\Output SPGM $\spgm$ approx. maximizing $\sum_{i=1}^{N}w_{i}\ln P(x^{i}|\theta)$
\State{$\mathbb{I}\leftarrow$ Mutual Information of $D$ with weights $w$} 
\State{${T^{*}}\leftarrow$ Chow-Liu tree with connection matrix $\mathbb{I}$} 
\State{ $\spgm\leftarrow$SPGM representing $T^{*}$} 
\State{$Q\leftarrow$ queue of edges $(s,t)\notin{\mathcal{E}}$ ordered by decreasing $\mathbb{I}_{st}$} 
\Repeat 
	\State{\textit{InsertEdge} $\left(\spgm,T^{*},Q.pop()\right)$} 
	\State{$AssignWeights$ to the modified Sum Node} 
\Until{convergence \Or max size reached}
\end{algorithmic} 
\end{algorithm}
\begin{algorithm}[tb]
\caption{EM for Mixtures of SPGMs(${\spgm_{k}},{\lambda_{k}},D$)} 
\label{alg:SPGM_MIX_EM}
\begin{algorithmic}[1] 
\Input Initial model $P(X)=\sum_k{\lambda_{k} \spgm_{k}(X)}$, samples $D$
\Output Updated ${\spgm_{k}(X)}$, $\lambda_{k}$ locally maximizing $\sum_{i=1}^{N}w_{i}\ln P(x^{i}|\theta)$
\Repeat 
	\ForAll{ $k\in{1...K},i\in{1...N}$} // E-step 
		\State $\gamma_{k}(i)\leftarrow\frac{\lambda_{k}\spgm_{k}(x_{i})}{\sum_{k'}\lambda_{k'}\spgm_{k'}(x_{i})}$ 
		\State $\Gamma_{k} \leftarrow \sum_{i=1}^{N} {\gamma_{k}(i)}$
		\State $w_{i}^{k}\leftarrow \gamma_{k}(i) / \Gamma_{k}$ 
	\EndFor 

	\ForAll {$k\in{1...K}$ } // M-step 
		\State $\lambda_{k}\leftarrow\Gamma_{k}/N$ 
		\State $\overline{\spgm}_k \leftarrow$ \textit{LearnSPGM}($D,w^k$)
		\If {$\text{WLL}(\overline{\spgm}_k,w^k) \ge \text{WLL}({\spgm_k},w^k)$}
			\State $\spgm_k \leftarrow \overline{\spgm}_k$ 
		\EndIf
	\EndFor

\Until{convergence} 

\end{algorithmic}
\end{algorithm}

\section{Empirical Evaluation\label{sec:Empirical-Evaluation}}

\begin{table*}[tb]
\caption{Dataset structure and test set log likelihood comparison.  \label{table:LLcomparison}}
\vspace{5pt}
 \centering{}\resizebox{0.9 \textwidth}{!}{%
\begin{tabular}{|l|ll|lllllllll|}
\hline 
Dataset  & \#vars  & \#train  & SPGM  & ECNet  & MergeSPN  & MCNet  & ID-SPN  & ACMN  & SPN  & MT  & LTM\tabularnewline
\hline 
NLTCS  & 16  & 16181  & \textbf{-5.99}  & -6.00  & -6.00  & -6.00  & -6.02  & -6.00  & -6.11  & -6.01  & -6.49\tabularnewline
MSNBC  & 17  & 291326  & \textbf{-6.03}  & -6.05  & -6.10  & -6.04  & -6.04  & -6.04  & -6.11  & -6.07  & -6.52\tabularnewline
KDDCup2K  & 64  & 180092  & -2.13  & -2.13  & \textbf{-2.12}  & \textbf{ -2.12}  & -2.13  & -2.17  & -2.18  & -2.13  & -2.18\tabularnewline
Plants  & 69  & 17412  & -12.71  & -12.19  & \textbf{-12.03}  & -12.74  & -12.54  & -12.80  & -12.98  & -12.95  & -16.39\tabularnewline
Audio  & 100  & 15000  & -39.90  & -39.67  & \textbf{-39.49}  & -39.73  & -39.79  & -40.32  & -40.50  & -40.08  & -41.90\tabularnewline
Jester  & 100  & 9000  & -52.83  & \textbf{-52.44}  & -52.47  & -52.57  & -52.86  & -53.31  & -53.48  & -53.08  & -55.17\tabularnewline
Netflix  & 100  & 15000  & -56.42  & -56.13  & \textbf{-55.84}  & -56.32  & -56.36  & -57.22  & -57.33  & -56.74  & -58.53\tabularnewline
Accidents  & 111  & 12758  & \textbf{-26.89}  & -29.25  & -29.32  & -29.96  & -26.98  & -27.11  & -30.04  & -29.63  & -33.05\tabularnewline
Retail  & 135  & 22041  & -10.83  & \textbf{-10.78}  & -10.82  & -10.82  & -10.85  & -10.88  & -11.04  & -10.83  & -10.92\tabularnewline
Pumsb-star  & 163  & 12262  & \textbf{-22.15}  & -23.34  & -23.67  & -24.18  & -22.40  & -23.55  & -24.78  & -23.71  & -31.32\tabularnewline
DNA  & 180  & 1600  & \textbf{-79.88}  & -80.66  & -80.89  & -85.82  & -81.21  & -80.03  & -82.52  & -85.14  & -87.60\tabularnewline
Kosarek  & 190  & 33375  & -10.57  & \textbf{-10.54}  & -10.55  & -10.58  & -10.60  & -10.84  & -10.99  & -10.62  & -10.87\tabularnewline
MSWeb  & 294  & 29441  & -9.81  & \textbf{-9.70}  & -9.76  & -9.79  & -9.73  & -9.77  & -10.25  & -9.85  & -10.21\tabularnewline
Book  & 500  & 8700  & -34.18  & \textbf{-33.78}  & -34.25  & -33.96  & -34.14  & -36.56  & -35.89  & -34.63  & -34.22\tabularnewline
EachMovie  & 500  & 4524  & -54.08  & -51.14  & \textbf{-50.72}  & -51.39  & -51.51  & -55.80  & -52.49  & -54.60  & \dag{}\tabularnewline
WebKB  & 839  & 2803  & -154.55  & -150.10  & \textbf{-150.04}  & -153.22  & -151.84  & -159.13  & -158.20  & -156.86  & -156.84\tabularnewline
Reuters-52  & 889  & 6532  & -85.24  & -82.19  & \textbf{-80.66}  & -86.11  & -83.35  & -90.23  & -85.07  & -85.90  & -91.23\tabularnewline
20Newsgrp.  & 910  & 11293  & -153.69  & -151.75  & \textbf{-150.80}  & -151.29  & -151.47  & -161.13  & -155.93  & -154.24  & -156.77\tabularnewline
BBC  & 1058  & 1670  & -255.22  & -236.82  & \textbf{-233.26}  & -250.58  & -248.93  & -257.10  & -250.69  & -261.84  & -255.76\tabularnewline
Ad  & 1556  & 2461  & \textbf{-14.30}  & -14.36  & -14.34  & -16.68  & -19.00  & -16.53  & -19.73  & -16.02  & \dag{} \tabularnewline
\hline 
\end{tabular}} 
\end{table*}

We evaluate SPGMs on $20$ real world datasets for density estimation
described in \citet{ArithmCircuitsLearning}. The number of variables
ranges from $16$ to $1556$ and the number of training examples from
$1.6K$ to $291K$ (Table \ref{table:LLcomparison}). All variables
are binary. We compare against several well cited state of the art
methods, referred to with the following abbreviations: MCNets (Mixtures
of CutsetNets, \citet{Rahman14Cnet}); ECNet (Ensembles of CutsetNets,
\citet{Rahman16}); MergeSPN \citet{Rahman16mergeSPN}; ID-SPN \citet{Rooshenas14};
SPN \citet{SPNstructureLearning2013}; ACMN \citet{ArithmCircuitsLearning},
MT (Mixtures of Trees \citet{LearningWithMixturesOfTrees}); LTM (Latent
Tree Models, \citet{Choi2011}).

\textbf{Methodology. } We found empirically that the best results
were obtained using a two phase procedure: first we run EM updates
with LearnSPGM on both structure and parameters until validation log-likelihood
convergence, then we fine-tune using EM for SPNs on parameters only
until convergence. We fix the following hyperparameters by grid search
on validation set: maximum number of edge insertions $\{10,20,60,120,400,1000,5000\}$,
mixture size $\{5,8,10,20,100,200,400\}$, uniform prior $\{10^{-1},10^{-2},10^{-3},10^{-9}\}$
(used for mutual information and sum weights). LearnSPGM was implemented
in C++ and is available at the following URL: \url{https://github.com/ocarinamat/SumProductGraphMod}.
The average learning time per dataset is $42$ minutes on an Intel
Core i5-4570 CPU with $16$ GB RAM. Inference takes up to one minute
on the largest dataset.

\textbf{Results. }Test set log-likelihood results averaged over $5$ random parameters initializations are shown in Table
\ref{table:LLcomparison}. Our methods performs best between all compared
models in $6$ \emph{datasets over} $20$ (for comparison, ECNets
are best in $5$ cases, MergeSPN in $9$, MCnets in $1$). Interestingly,
LearnSPGM compares well against a well established literature despite
being a radically \emph{novel }approach, since ECNet, MergeSPN, MCNet,
ID-SPN, ACMN, SPN all create a search tree by finding independences
in data recursively (see \citet{SPNstructureLearning2013}). LearnSPGM
is simpler than methods with similar performances: for instance ECnets
use boosting and bagging procedures for ensemble learning, evolving
over CNets that use EM, and MergeSPN post-processes the SPN in \citet{SPNstructureLearning2013}.
Our model can be improved by including these techniques, such as using
ensemble learning as in ECNets rather than EM as in MCnets. Finally,
notice that SPGMs - that are large mixtures of trees - \emph{always}
outperform standard mixture of trees. This confirms that sharing tree
structure \emph{helps preventing overfitting}, which is critical in
these models.

\section{Conclusions and Future Work}

We introduced Sum-Product Graphical Model (SPGM), a new architecture
bridging Sum-Product Networks (SPNs) and Graphical Models (GMs) by
inheriting the expressivity of SPNs and the high level probabilistic
semantics of GMs. The \emph{new connections} between the two fields
were exploited in a structure learning algorithm extending the Chow-Liu
tree approach. This algorithm is competitive with the \emph{state of the art} methods in density estimation despite using a novel approach and being the first algorithm to directly obtain DAG structured SPNs. 

The major relevance of SPGMs consists in providing an interpretation of SPNs that has the semantics of graphical models, and thus allows to connect the two worlds without compromising their efficiency or compactness. These connections have been exploited preliminarily in our structure learning algorithm, but -- more interestingly -- they open up several direction of future research that should be explored in future work. 

The most interesting of these directions seems to us to be the approximation of intractable GMs by using very large but tractable mixture models implemented by SPGMs. This approach is mainly motivated by the success of  mixtures of chain graphs  for approximate inference in the Tree-Reweighted Belief Propagation (TRW) framework (see \citet{Kolmogorov2006}). In this field, mixtures of chain models with shared components are used to efficiently evaluate updates for the parameters of an intractable graphical model. SPGMs seem to be a very promising architecture to apply to this framework due to their ability to \emph{efficiently} represent \emph{very large} mixtures of trees due to shared parts, which might allow to replace the relatively small mixture of chains used in TRW with large mixtures of more complex models (trees). This aspect should be subject of follow-up work.

\appendix

\section{Appendix}

\subsection{Proof of Proposition \ref{prop:subIsTree}.\label{subsec:Proof-of-Proposition-1}}

Consider a subtree $\tau\in\enctr(\spgm)$ as in Definition
\ref{def:ESPNsubn}. 
\begin{enumerate}
\item Let us first consider messages generated by sum nodes $s$. Considering
that $s$ has only one child $ch(s)$ in $\tau$ (for Definition \ref{def:ESPNsubn}),
corresponding to indicator variable $[Z_{s}]_{k_{s}}$ , and applying
Eqs. \ref{eq:messages}, the messages for observed and unobserved
sum nodes are as follows:
\begin{eqnarray}
\mu_{st;j}= & [Z_{s}]_{k_{s}}Q_{s}(k_{s})\mu_{ch(s),t;j}, & \text{Observed Sum Node},\label{eq:msgSumNode-1}\\
\mu_{st;j}= & W_{s}(k_{s})\mu_{ch(s)_{k},t;j}, & \text{Unobserved Sum Node}.\label{eq:msgSumUnobs-1}
\end{eqnarray}
Hence sum messages contribute only
by introducing a multiplicative term $[Z_{s}]_{k_{s}}Q_{s}(k_{s})$
or $W_{s}(k_{s})$. Now, all variables in the set $Z$ appear in $\tau$ (Proposition \ref{prop:scopeSubtree}). 
Hence, the sum nodes together contribute with the following multiplicative
term:
\begin{equation}
\prod_{s\in O(\tau)}Q_{s}(k_{s,\tau})\prod_{s\in U(\tau)}W_{s}(k_{s,\tau})\prod_{[Z_{s}]_{k_{s}}\in z_{\tau}}[Z_{s}]_{k_{s}}=\lambda_{\tau}\prod_{[Z_{s}]_{k_{s}}\in z_{\tau}}[Z_{s}]_{k_{s}}.\label{eq:coeffjudfhisah}
\end{equation}
From this it follows that message passing in $\tau$ is equivalent to message passing in a SPGM $\overline{\tau}$ obtained by \emph{discarding} all the sum nodes from $\tau$, 
followed by multiplying the resulting messages for Eq. \ref{eq:coeffjudfhisah}. 

\item Consecutive Product Nodes in $\overline{\tau}$ can be merged by adding
the respective children to the parent Product Node. In addition, between
each sequence Vnode-Vnode in $\overline{\tau}$ we can insert a product
node with a single child. Thus, we can take $\overline{\tau}$ as
containing only Vnodes and Product Nodes, such that the children of
Product Nodes are Vnodes. 
Putting together Eqs. \ref{eq:msgProd} and \ref{eq:msgPnode},
the message passed by each Vnode $t$ to $s\in \vpa(t)$ is: 
\begin{equation}
\mu_{t \rightarrow s;j} = \sum_{k\in\val\left(X_{t}\right)}P_{s,t}\left(k|j\right)[X_{t}]_{k} \prod_{q\in ch\left(ch(c)\right)}\mu_{q \rightarrow t;j}.\label{eq:prodVnodeMsg}
\end{equation}
Notice that the input messages are generated from the grandchildren of $t$, that is the children of the Product Node child of $t$. 
This corresponds to the message passed by variables in a tree graphical model obtained by removing the Product Nodes from $\overline{\tau}$ and attaching the children of Product Nodes (here, elements $q\in ch\left(ch(c)\right)$) as children of their parent Vnode (here, $t$), which can be seen by noticing the equivalence to Eq. \eqref{eq:treeMsgPass}. This tree GM can be immediately identified as $P_{\tau}(X)$.
Note also that if the root of $\overline{\tau}$ is a Product Node, then $P_{\tau}(X)$ represents a forest of trees, one for each child of the root Product Node. 
 
\item The proof is concluded reintroducing the multiplicative factor in
Eq. \ref{eq:coeffjudfhisah} discarded when passing from $\tau$ to
$\overline{\tau}$.

\end{enumerate}
\qed


\subsection{Proof of Proposition \ref{prop:exp}.\label{Proof-of-prop:exp}}

The proposition can be proven by inspection, considering a SPGM $\spgm$ built by stacking units as follows:
Sum Node $s_1$ (the root of $\spgm$) is associated to observed variable $Z_1$ and has as children the set of Vnodes $V_1 = \{v_{1,1},v_{1,2},\dots,v_{1,M} \}$. All the nodes in $V_1$ have a common single child, which is the sum node $s2$. In turn, $s_2$ has the same structure of $s_1$, having Vnode children $V_1 =  \{ v_{2,1},v_{2,2},\dots,v_{2,M}\}$ which are connected to a single child $s3$, and so forth for Sum Nodes $s_3,s_4,\dots, s_K$. The number of edges in $\spgm$ is $2MK$. On the other hand, a different subtree can be obtained for each choice of active child at each sum node. There is a combinatorial number of such choices, thus the number of different subtrees is $K^M$. \qed 

\subsection{Proof of Proposition \ref{prop:infInEncTtr}.\label{subsec:Proof-of-Proposition}}


First, consider a SPGM $\spgm(X,Z)$ defined over $\mathcal{G}=\mathcal{V},\mathcal{E}$. Let us
take a node $t \in \mathcal{V}$ and 
let $\spgm_t(X^t, Z^t)$ denote the sub-SPGM rooted in $t$. Suppose that $\spgm_t$ satisfies the proposition and hence it can be written as:
\begin{equation}\label{eq:form1}
\spgm_t = \sum_{\tau_t \in \enctr(\spgm_t)} \tau_t.
\end{equation}
Let us consider messages sent from node $t$ to a Vparent $s \in \vpa(t)$ (Definition \ref{def:Vparent}), and note that if form \eqref{eq:form1} is satisfied then messages take the form
\begin{equation}\label{eq:muMix}
\mu_{t\rightarrow s;j} = \sum_{\tau_t \in \enctr(\spgm_t)} \mu_{root(\tau_t) \rightarrow s;j}, 
\end{equation}
where $root(\tau_t)$) denotes the message sent from the root of subtree $\tau_t$ to $s$. 
Vice versa, if form \eqref{eq:muMix} is satisfied then $\spgm_t$ can be written as Eq. \eqref{eq:form1}. This extends also to the case $\vpa(t)=\emptyset$, in which messages are sent to a fictitious $root$ node (Definition \ref{def:Message-passing-in}). 

The proposition can now be proved by induction. First, the base case: sub-SPGMs rooted at Vnodes leaves trivially assume form \eqref{eq:form1}, and hence also the form \eqref{eq:muMix}. Then, the inductive step: Lemma \eqref{lemma2} can be applied recursively for all nodes from the leaves to the root. Hence, $\spgm$ assumes the form of Eq. \eqref{eq:muMix} and thus also of \eqref{eq:form1}. \qed

\begin{lemma}\label{lemma2}
Consider a node $t \in \mathcal{V}$. 
Suppose that the messages sent from the children of node $t\in \mathcal{V}$ are in the form \eqref{eq:muMix}. 
Then,  each message sent from $s$ also assumes the form \ref{eq:muMix}. 
\end{lemma}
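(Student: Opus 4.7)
The plan is to do a case analysis on the node type of $t$ (Vnode, observed Sum Node, unobserved Sum Node, Product Node), using Definition \ref{def:Message-passing-in} for the outgoing message of $t$ and Definition \ref{def:ESPNsubn} for how subtrees rooted at $t$ are constructed from subtrees of its children. In each case I will substitute the inductive hypothesis $\mu_{c \rightarrow \cdot;k} = \sum_{\tau_c \in \enctr(\spgm_c)} \mu_{root(\tau_c) \rightarrow \cdot;k}$ for each child $c$ and then pull sums outward until the outer index set coincides with $\enctr(\spgm_t)$.

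The Vnode and Sum Node cases are straightforward. If $t$ is a Vnode, condition \ref{enu:sgm1} in Definition \ref{def:ESPN} guarantees a unique child $c$, so Eq.~\eqref{eq:msgPnode} is linear in $\mu_{c \rightarrow t;k}$; substituting the hypothesis and interchanging the finite sums yields a sum over $\enctr(\spgm_c)$, which Definition \ref{def:ESPNsubn} puts in bijection with $\enctr(\spgm_t)$ by prepending $t$ as root. If $t$ is an observed Sum Node, substituting into Eq.~\eqref{eq:msgSumNode} gives a double sum over the child index $k$ and over subtrees $\tau_k$ of the $k$-th child; but Definition \ref{def:ESPNsubn} constructs a subtree at a Sum Node precisely by selecting a single $k$ together with a subtree of $ch(t)_k$, so this double sum collapses to $\sum_{\tau_t \in \enctr(\spgm_t)}$, with the coefficient $[Z_t]_k Q_t(k)$ correctly contributing to $\lambda_{\tau_t}$ and to $z_{\tau_t}$ as in Proposition \ref{prop:infInEncTtr}. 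The unobserved Sum Node case is identical with $Q_t$ replaced by $W_t$ and no indicator factor.

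The Product Node case is the only nontrivial step. Substituting the hypothesis into Eq.~\eqref{eq:msgProd} yields $\prod_{q \in ch(t)} \sum_{\tau_q \in \enctr(\spgm_q)} \mu_{root(\tau_q) \rightarrow s;j}$, and I would apply the distributive identity to rewrite it as $\sum_{(\tau_q)_q} \prod_q \mu_{root(\tau_q) \rightarrow s;j}$, where the outer sum ranges over tuples indexed by $ch(t)$. Definition \ref{def:ESPNsubn} builds a subtree at a Product Node by including \emph{all} children and picking a subtree for each, so these tuples are in bijection with $\enctr(\spgm_t)$. The main subtlety to check is that the disjoint-scope condition \ref{enu:sgm2} of Definition \ref{def:ESPN} makes the product of root messages on the right-hand side actually equal to the root message of the combined subtree under Definition \ref{def:Message-passing-in}; without disjoint scopes, different factors could both constrain the same variable and the identification with a single tree message would fail. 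After handling this case, the lemma follows, and together with the trivial base case of Vnode leaves (whose only subtree is the Vnode itself), induction from leaves to root completes the outer proof of Proposition \ref{prop:spgmIsMix2}.
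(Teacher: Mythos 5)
Your proposal is correct and follows essentially the same route as the paper's own proof: a case analysis on the node type, substitution of the inductive hypothesis into the message-passing equations, interchange of finite sums (distributing the product over sums at Product Nodes), and identification of the resulting index set with $\enctr(\spgm_t)$ via the construction in Definition \ref{def:ESPNsubn}. Your extra remark on the role of the disjoint-scope condition at Product Nodes is a reasonable added justification that the paper leaves implicit, but it does not constitute a different argument.
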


Let us suppose, for simplicity, that $t$ has only two children $p$ and $q$ - the extension to the
general case is straightforward. We distinguish the three cases of $t$ being a Sum Node, a Product Node or a Vnode. 
\begin{itemize}
\item Suppose $t$ is an Observed Sum Node with weights $\left[Q_{t}(0),Q_{t}(1)\right]$ and indicator variables $[Z_t]_0$ and $[Z_t]_1$ associated to each child. By Eq. \eqref{eq:msgSumUnobs-1}, the children send messages to their Vparent $s$, and since input message are in the form \eqref{eq:muMix}, $t$ sends the following message:
\[
\mu_{t \rightarrow s;j}=  
Q_{t}(0)[Z_t]_0 \sum_{\tau_p \in \enctr(\spgm_p)} \mu_{root(\tau_p) \rightarrow s;j}
+
Q_{t}(1)[Z_t]_1 \sum_{\tau_q \in \enctr(\spgm_q)} \mu_{root(\tau_q) \rightarrow s;j}
.\]
This is again in the form \ref{eq:muMix}. This can be seen because each term in the sum is in the form $Q_{t}(0)[Z_t]_0\enctr(\spgm_p)$ which corresponds to message passed from the subtree $\tau_t \in \enctr(\spgm_t)$ obtained choosing child $p$ (in this case). It is easy to see that terms corresponding to each subtrees of $t$ are present. 

\item If $t$ is an Unobserved Sum Node the discussion is identical to the point above. 

\item If $t$ is a Product Node, then the children send messages to their Vparent $s$, and since input message are in the form \eqref{eq:muMix}, $t$ sends the following message:
\[
\mu_{t \rightarrow s;j}= 
\bigg( \sum_{\tau_p \in \enctr(\spgm_p)} \mu_{root(\tau_p) \rightarrow s;j} \bigg) \bigg( \sum_{\tau_q \in \enctr(\spgm_q)} \mu_{root(\tau_q) \rightarrow s;j} \bigg)
=
\sum_{ (\tau_{p},\tau_{q})\in\enctr(\spgm_p) \times \enctr(\spgm_q)} \mu_{root(\tau_p) \rightarrow s;j}  \mu_{root(\tau_q) \rightarrow s;j}
.\]
Now, $ \mu_{root(\tau_p) \rightarrow s;j}  \mu_{root(\tau_q) \rightarrow s;j}$ can be seen as the message generated by a particular subtrees of $t$, and thus node $\mu_{t \rightarrow s;j}$ is in the form \eqref{eq:muMix}. 

\item If $t$ is a Vnode, then it has a \emph{single child} $p$ by definition
\ref{def:ESPN}, sending messages to $t$. Thus, the input message is in the form \eqref{eq:muMix}, and $t$ sends the following message: 
\begin{align*}
\mu_{t \rightarrow s;j}= & \sum_{k\in\val\left(X_{t}\right)} P_{s,t}\left(k|j\right)[X_{t}]_{k} \sum_{\tau_p \in \enctr(\spgm_p)} \mu_{root(\tau_p) \rightarrow t;k} \\ 
 & = \sum_{\tau_p \in \enctr(\spgm_p)} \sum_{k\in\val\left(X_{t}\right)} P_{s,t}\left(k|j\right)[X_{t}]_{k} \mu_{root(\tau_p) \rightarrow t;k}.
\end{align*}
Let us analyze a term of the sum for each fixed $\tau_p$. Such term corresponds to the root message of a SPGM $\overline{\spgm}$ obtained taking $\tau_p$ and adding the Vnode $t$ as parent of $p$ (due to Eq. \eqref{eq:msgPnode}).  But $\overline{\spgm}$ obtained in this form is a subtree of $t$ by Definition \ref{def:ESPNsubn}. Therefore, taking the sum $\sum_{\tau_p \in \enctr(\spgm_p)}$ corresponds to summing over messages sent by all subtrees of $t$, in the form  \eqref{eq:muMix}.  \qed

\end{itemize}

\subsection{Proof of Proposition \ref{prop:spgmAsSpn}\label{proof:asSpn}}

The proposition can be proven by induction showing that if input messages represent valid SPNs, then also the output SPGM messages are SPNs (see Fig. \ref{fig:Representation-of-messages}).  
Formally, it is sufficient to notice that the hypothesis of Lemma \ref{lemma1} below is trivially true for leaf messages (inductive hypothesis), hence the lemma can be inductively applied for all nodes up to the root. \qed

\begin{lemma}\label{lemma1}
Consider nodes $t\in\mathcal{V}$ and $s\in \vpa(t)$, and let $X^{t}$ and $Z^{t}$ respectively denote the $X$ scope and $Z$ scope of node $t \in \mathcal{V}$. 
 If the message $\mu_{t\rightarrow s;j}$ encodes
a SPN $S_{t;j}(X^{t},Z^{t})$, then the message $\mu_{s\rightarrow r;i}$ sent from node $s$ to any node $r \in \vpa(s)$ also implements
a SPN $S_{r;i}(X^{s},Z^{s})$. This also holds when $\vpa(s)=\emptyset$, where  $r$ is simply replaced by the fictitious $root$ node (Definition \ref{def:Message-passing-in}. 
\end{lemma}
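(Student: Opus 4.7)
My plan is to prove the lemma by case analysis on the type of node $s$, invoking the inductive hypothesis that each child message $\mu_{t \to s; j}$ (for $t \in ch(s)$, $j \in \Delta(X_s)$ when applicable) already encodes a valid SPN over $(X^t, Z^t)$. In each case I would rewrite the corresponding message-passing equation from Definition~\ref{def:Message-passing-in} as an explicit composition of indicator variables, weighted sums, and products, and then verify the two structural conditions of Definition~\ref{def:SPN}: \emph{completeness} (children of a sum share scope) and \emph{decomposability} (children of a product have disjoint scopes).

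First I would handle the easy cases. If $s$ is an \emph{Observed Sum Node} (associated with $Z_s$), Equation~\eqref{eq:msgSumNode} expresses $\mu_{s \to r;i}$ as a weighted sum whose $k$-th summand is $Q_s(k) \cdot [Z_s]_k \cdot \mu_{ch(s)_k \to r; i}$. By the inductive hypothesis $\mu_{ch(s)_k \to r; i}$ is a SPN over $(X^{ch(s)_k}, Z^{ch(s)_k})$, while $[Z_s]_k$ is trivially a SPN over $\{Z_s\}$; their product is a SPN by rule~\ref{enu:The-product-}, and its scope is $\{Z_s\} \cup X^{ch(s)_k} \cup Z^{ch(s)_k}$, which by SPGM condition~\ref{enu:sgm3} (equal child scopes, and $Z_s$ not in any child scope) equals $X^s \cup Z^s$ independently of $k$. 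Completeness of the outer sum then follows, giving a SPN by rule~\ref{enu:The-weighted-sum}. The unobserved sum case (Eq.~\eqref{eq:msgSumUnobs}) is identical after dropping the $[Z_s]_k$ factor. For a \emph{Product Node} (Eq.~\eqref{eq:msgProd}), each $\mu_{q \to r; i}$ is a SPN by hypothesis over $(X^q, Z^q)$, and SPGM condition~\ref{enu:sgm2} guarantees that these scopes are pairwise disjoint; hence their product is a SPN by rule~\ref{enu:The-product-}.

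The least routine case is the \emph{Vnode} case (Eq.~\eqref{eq:msgPnode}), which I would expect to be the main (modest) obstacle, because $s$ is itself a Vnode carrying the variable $X_s$ and the message mixes in both an indicator over $X_s$ and the conditional $P_{r,s}(k \mid i)$. I would write
\[
\mu_{s \to r; i} \;=\; \sum_{k \in \Delta(X_s)} P_{r,s}(k \mid i)\, [X_s]_k \, \mu_{ch(s) \to s; k},
\]
observe that $\mu_{ch(s) \to s; k}$ is a SPN over $(X^{ch(s)}, Z^{ch(s)})$ by hypothesis, and note that $[X_s]_k$ is a SPN over $\{X_s\}$. SPGM condition~\ref{enu:sgm1} guarantees $X_s \notin X^{ch(s)}$, so $\{X_s\}$ and the scope of $\mu_{ch(s) \to s; k}$ are disjoint and their product is a SPN by rule~\ref{enu:The-product-}. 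Each summand therefore has scope $\{X_s\} \cup X^{ch(s)} \cup Z^{ch(s)} = X^s \cup Z^s$, identical for every $k$, so completeness of the outer weighted sum (with weights $P_{r,s}(k \mid i) \geq 0$) holds and we obtain a SPN by rule~\ref{enu:The-weighted-sum}.

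Finally I would observe that the argument is uniform in the recipient: when $\vpa(s) = \emptyset$, replacing $\mu_{s \to r; i}$ by $\mu_{s \to root}$ and using the top-level equations \eqref{eq:messagesTop}, in particular \eqref{eq:eq:msgPnodeRoot} (which uses the unary $P_s(k)$ in place of the pairwise $P_{r,s}(k\mid i)$), leaves every scope and validity check intact. This exhausts all node types and completes the inductive step, yielding a SPN $S_{r;i}(X^s, Z^s)$ as required.
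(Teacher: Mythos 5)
Your proposal is correct and follows essentially the same route as the paper's proof: an inductive case analysis on the node type (observed/unobserved sum, product, Vnode), rewriting each message-passing equation as a composition of indicators, weighted sums and products, and verifying decomposability and completeness via conditions \ref{enu:sgm1}--\ref{enu:sgm2} of Definition \ref{def:ESPN}. Your handling of the Vnode case is in fact slightly more explicit than the paper's about why all summands share the scope $\{X_s\}\cup X^{ch(s)}\cup Z^{ch(s)}$, but the underlying argument is identical.
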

\emph{Proof.} Consider the message passing Eqs. \ref{eq:messages}, referring to
Fig. \ref{fig:Representation-of-messages} for visualization. 
We distinguish the case of node $t$ being
a Sum, Product and Vnode. 
\begin{itemize}
\item Sum Nodes (Observed and non-Observed). First, we note that every child of a Sum Node has
the same scope $X^{s},Z^{s}$ (for Definition \ref{def:ESPN} condition
\ref{enu:sgm3}). Employing the hypothesis, the generated
message is:
\begin{eqnarray}
\mu_{t \rightarrow s;j}= & \sum_{k=1}^{|ch(t)|}[Z_{t}]_{k}Q_{t}(k)S_{t;j}\left(X^{t},Z^{t}\right), & \text{$t$ is a Sum Node, Observed}, \label{eq:24356432df}\\
\mu_{t \rightarrow s;j}= & \sum_{k=1}^{|ch(t)|}W_{t}(k)S_{t;j}\left(X^{t},Z^{t}\right), & \text{$t$ is a Sum Node, not-Observed}. \label{eq:234sd}
\end{eqnarray}
It is straightforward to see that both equations represent valid SPNs:
the sum $\sum_{k=1}^{|ch(t)|}$ becomes the root SPN Sum Node with
non-negative weights $Q_{t}(k)$ and $W_{t}(k)$ respectively, and
its children are SPNs having the same scope (in the form $[Z_{t}]_{k}\otimes S_j\left(X^{t},Z^{t}\right)$
for Eq. \ref{eq:24356432df} and in the form $S_j\left(X^{t},Z^{t}\right)$
for Eq. \ref{eq:234sd}). Note that $Z^{t}\cap Z_{t}=\emptyset$ for
Definition \ref{def:ESPN} condition \ref{enu:sgm3}, therefore condition
\ref{enu:The-weighted-sum} in Definition \ref{def:SPN} is satisfied. 

\item Product Nodes. Applying the hypothesis to input messages, Eq. \ref{eq:msgProd} becomes:
\[\mu_{t \rightarrow s;j}= \prod_{q\in ch\left(t\right)} S_{q;j}\left(X^{q},Z^{q}\right).\]
This represents a valid SPN with a Product Node as root since the children node's scopes are disjoint
(for Definition \ref{def:ESPN} condition \ref{enu:sgm2}), and thus
condition \ref{enu:The-product-} in Definition \ref{def:SPN} is
satisfied. 

\item Vnodes. Applying the hypothesis to input messages, Eq. \ref{eq:msgPnode} becomes: 
\[ \mu_{t \rightarrow s;j} =  \sum_{k\in\val\left(X_{t}\right)}P_{s,t}\left(k|j\right)[X_{t}]_{k}  S_{ch(t);k} \left(X^{ch(t)},Z^{ch(t)} \right). \]
This represents a valid SPN with a Sum Node $\sum_{k\in\val\left(X_{t}\right)}$
as root. To see this, first note that terms $P_{s,t}\left(k|j\right)$ can
be interpreted as weights. The Sum Node is connected to children SPNs in the form $[X_{t}]_{k} \otimes  S_{ch(t);k}\left(X^{ch(t)},Z^{ch(t)} \right)$, which are valid SPNs since $X^{i}\cap X_{s}=\emptyset$ and thus
condition \ref{enu:The-product-} in Definition \ref{def:SPN} is satisfied. 
In addition all child SPNs have the same scope for Definition \ref{def:ESPN} condition \ref{enu:sgm3}, hence condition \ref{enu:The-weighted-sum} in Definition \ref{def:SPN} is satisfied for the Sum Node. 
\end{itemize} \qed

\bibliographystyle{apalike}
\bibliography{bibliography}

\end{document}